\newlist{condenum}{enumerate}{1} 
\setlist[condenum]{label=(\roman*), ref=(\roman*)}
\crefname{condenumi}{Assumption}{Assumptions}
\theoremstyle{plain}
\newtheorem{theorem}{Theorem}[section]
\newtheorem{proposition}[theorem]{Proposition}
\newtheorem{lemma}[theorem]{Lemma}
\newtheorem{corollary}[theorem]{Corollary}
\newtheorem{Conjecture}[theorem]{Conjecture}
\newtheorem{remark}[theorem]{Remark}
\theoremstyle{definition}
\newtheorem{definition}[theorem]{Definition}
\newtheorem{assumption}[theorem]{Assumption}
\crefname{assumption}{assumption}{assumptions}
\Crefname{equation}{Eq.}{Eqs.}
\crefname{equation}{eq.}{eqs.}
\newcommand{\Card}[1]{\lvert #1 \rvert}
\newcommand{\R}{\mathbf R}
\newcommand{\cH}{\mathcal H}
\newcommand{\cN}{\mathcal N}
\newcommand{\sS}{\mathscr S}
\newcommand{\x}{\mathsf x}
\newcommand{\y}{\mathsf y}
\newcommand{\di}{d}
\newcommand{\n}{n}
\newcommand{\p}{p}
\newcommand{\w}{\theta}
\newcommand{\W}{W}
\newcommand{\f}{\varphi}
\newcommand{\fast}{\varphi_\ast}
\newcommand{\depth}{L}
\newcommand{\gen}{\rm gen}
\newcommand{\noise}{\varepsilon}
\newcommand{\N}{\mathbb N}
\DeclareFontFamily{U}{mathx}{}
\DeclareFontShape{U}{mathx}{m}{n}{<-> mathx10}{}
\DeclareSymbolFont{mathx}{U}{mathx}{m}{n}
\DeclareMathAccent{\wh}{0}{mathx}{"70}
\DeclareMathAccent{\wc}{0}{mathx}{"71}
\DeclareMathAccent{\wt}{0}{mathx}{"72}
\DeclareMathOperator{\E}{\mathbf E} 
\DeclareMathOperator{\Var}{Var}
\DeclareMathOperator{\rank}{rank}
\DeclareMathOperator{\dif}{d}
\DeclareMathOperator{\Tr}{Tr}
\DeclarePairedDelimiter{\norm}{\lVert}{\rVert}
\DeclarePairedDelimiter{\abs}{\lvert}{\rvert}
\DeclarePairedDelimiter{\braket}{\langle}{\rangle}
\DeclarePairedDelimiterX\set[1]\{\}{
    
    #1
}
\DeclarePairedDelimiterXPP\bigO[1]{O}{(}{)}{}{#1}
 \DeclarePairedDelimiterXPP\Prob[1]{\mathbb{P}}(){}{
\title{Asymptotics of Learning with Deep Structured (Random) Features}
\author[1*]{Dominik Schr\"oder}
\author[2*]{Daniil Dmitriev}
\author[3*]{Hugo Cui}
\author[4]{Bruno Loureiro}
\affil[1]{\small Department of Mathematics, ETH Zurich, 8006 Z\"urich, Switzerland}
\affil[2]{\small Department of Mathematics, ETH Zurich and ETH AI Center, 8092 Z\"urich, Switzerland}
\affil[3]{\small Statistical Physics Of Computation lab.,
Institute of Physics, \'Ecole Polytechnique F\'ed\'erale de Lausanne (EPFL), \newline 1015 Lausanne, Switzerland}
\affil[4]{\small D\'epartement d'Informatique, \'Ecole Normale Sup\'erieure (ENS) - PSL \& CNRS, 
F-75230 Paris cedex 05, France}
\affil[ ]{\textit {dschroeder@ethz.ch, daniil.dmitriev@ai.ethz.ch, hugo.cui@epfl.ch, bruno.loureiro@di.ens.fr}}
\affil[ *]{\textit {Main contributions}}
\newtheorem*{rep@theorem}{\rep@title}
\newcommand{\newreptheorem}[2]{%
\newenvironment{rep#1}[1]{%
 \def\rep@title{#2 \ref{##1}}%
 \begin{rep@theorem}}%
 {\end{rep@theorem}}}
\theoremstyle{plain}
\numberwithin{theorem}{section}
\theoremstyle{remark}
\date{\today}
\begin{document}
\maketitle

\begin{abstract}
For a large class of feature maps we provide a tight asymptotic characterisation of the test error associated with learning the readout layer, in the high-dimensional limit where the input dimension, hidden layer widths, and number of training samples are proportionally large. This characterization is formulated in terms of the population covariance of the features. Our work is partially motivated by the problem of learning with Gaussian rainbow neural networks, namely deep non-linear fully-connected networks with random but structured weights, whose row-wise covariances are further allowed to depend on the weights of previous layers. For such networks we also derive a closed-form formula for the feature covariance in terms of the weight matrices. We further find that in some cases our results can capture feature maps learned by deep, finite-width neural networks trained under gradient descent.
\end{abstract}

\section{Introduction}
\label{Introduction}
Deep neural networks are the backbone of most successful machine learning algorithms in the past decade. Despite their ubiquity, a firm theoretical understanding of the very basic mechanism behind their capacity to adapt to different types of data and generalise across different tasks remains, to a large extent, elusive. For instance, what is the relationship between the inductive bias introduced by the network architecture and the representations learned from the data, and how does it correlate with generalisation? Despite the lack of a complete picture, insights can be found in recent empirical and theoretical works.

On the theoretical side, a substantial fraction of the literature has focused on the study of deep networks at initialisation, motivated by the lazy training regime of large-width networks with standard scaling. Besides the mathematical convenience, the study of random networks at initialisation have proven to be a valuable theoretical testbed -- allowing in particular to capture some empirically observed behaviour, such as the double-decent \cite{belkin2019reconciling} and benign overfitting \cite{Bartlett20} phenomena. As such, proxys for networks at initialisation, such as the Random Features (RF) model \cite{Rahimi2007RandomFF} have thus been the object of considerable theoretical attention, with their learning being asymptotically characterized in the two-layer case \cite{Goldt2021TheGE, Goldt2020ModellingTI, Gerace2020GeneralisationEI, Hu2020UniversalityLF, Dhifallah2020, Mei2019TheGE, Mei2021GeneralizationEO} and the deep case \cite{ZavatoneVeth2022ContrastingRA, schroder2023deterministic, bosch2023precise, zavatone2023learning}. With the exception of \cite{Gerace2020GeneralisationEI, mel2022anisotropic} (limited to two-layer networks) and \cite{zavatone2023learning} (limited to linear networks), all the analyses for non-linear deep RFs assume unstructured random weights. In sharp contrast, the weights of trained neural networks are fundamentally structured - restricting the scope of these results to networks at initialization.

Indeed, an active research direction consists of empirically investigating how the statistics of the weights in trained neural networks encode the learned information, and how this translates to properties of the predictor, such as inductive biases \cite{Thamm_2022, JMLR:v22:20-410}. Of particular relevance to our work is a recent observation by \cite{guth2023rainbow} that a random (but structured) network with the weights sampled from an ensemble with matching statistics can retain a comparable performance to the original trained neural networks. In particular, for some tasks it was shown that second order statistics suffices -- defining a Gaussian \emph{rainbow network} ensemble.

Our goal in this manuscript is to provide an exact asymptotic characterization of the properties of \emph{Gaussian rainbow networks}, i.e. deep, non-linear networks with structured random weights. Our \textbf{main contributions} are:
\begin{itemize}
    \item We derive a tight asymptotic characterization of the test error achieved by performing ridge regression with Lipschitz-continuous feature maps, in the high-dimensional limit where the dimension of the features and the number of samples grow at proportional rate. This class of feature maps encompasses as a particular case Gaussian rainbow network features.
    \item The asymptotic characterization is formulated in terms of the population covariance of the features. For Gaussian rainbow networks, we explicit a closed-form expression of this covariance, formulated as in the unstructured case \cite{schroder2023deterministic} as a simple linear recursion depending on the weight matrices of each layer. These formulae extend similar results of \cite{Cui2023, schroder2023deterministic} for independent and unstructured weights to the case of structured --and potentially correlated-- weights.
    \item We empirically find that our theoretical characterization captures well the learning curves of some networks trained by gradient descent in the lazy regime.
\end{itemize}
\paragraph{Code --} The code for the numerical experiments described in~\Cref{app: numerics} is openly available in  \href{https://github.com/wirhabenzeit/feature-ridge-regression}{this repository.}

\subsection*{Related works}
\paragraph{Random features} (Rfs) were introduced in \cite{Rahimi2007RandomFF} as a computationally efficient way of approximating large kernel matrices. In the shallow case, the asymptotic spectral density of the conjugate kernel was derived in \cite{Liao2018, Pennington2019NonlinearRM, Benigni2021}. The test error was on the other hand characterized in \cite{Mei2019TheGE, Mei2021GeneralizationEO} for ridge regression, and extended to generic convex losses by \cite{Gerace2020GeneralisationEI, Goldt2021TheGE, Dhifallah2020}, and in \cite{Sur2020, Loureiro2021CapturingTL, Bosch2022} for other penalties. RFs have been studied as a model for networks in the lazy regime, see e.g. \cite{Ghorbani2019, Ghorbani2020WhenDN, NEURIPS2019_5481b2f3, pmlr-v139-refinetti21b}. The role of structure in the RF weights was discussed in \cite{Gerace2022} for rotationally invariant weights and \cite{ mel2022anisotropic} for anisotropic Gaussian weights.
\paragraph{Deep RFs -- } Recent work have addressed the problem of extending these results to deeper architectures. In the case of linear networks, a sharp characterization of the test error is provided in \cite{ZavatoneVeth2022ContrastingRA} for the case of unstructured weights and \cite{zavatone2023learning} in the case of structured weights. For non-linear RFs, \cite{schroder2023deterministic} provides deterministic equivalents for the sample covariance matrices, and \cite{schroder2023deterministic, bosch2023precise} provide a tight characterization of the test error. The recent work of \cite{guth2023rainbow} provides empirical evidence that for a given trained neural network, a resampled network from an ensemble with matching statistics (\emph{rainbow networks}) might achieve comparable generalization performance, thereby partly bridging the gap between random networks and trained networks.

\section{Setting}
\label{sec:setting}
Consider a supervised learning task with training data $(\mathsf x_{i},\mathsf y_{i})_{i\in[n]}$. In this manuscript, we are interested in studying the statistics of linear predictors $f_{\w}(\x)	= \frac{1}{\sqrt{\p}}\w^{\top}\f(\x)$ for a class of fixed feature maps $\f:\R^{\di}\to\R^{\p}$ and weights $\w\in\R^{\p}$ trained via empirical risk minimization:
\begin{align}
    \label{eq:def:erm}
    \hat{\w}_{\lambda} = \underset{\w\in\R^{\p}}{\min}~\sum\limits_{i\in[\n]}\left(\y_{i}-f_{\w}(\x_{i})\right)^{2} + \lambda||\w||^{2}.
\end{align}
Of particular interest is the generalization error:
\begin{align}\label{eq:gen err def}
    \mathcal{E}_{\gen}(\hat{\w}_{\lambda}) = \E\left(\y - f_{\hat{\w}_{\lambda}}(\x)\right)^{2}
\end{align}
where the expectation is over a fresh sample from the same distribution as the training data. More precisely, our results will hold under the following assumptions.

\begin{assumption}[Labels]\label{ass:labels}
    We assume that the labels $y_i$ are generated by another feature map $\fast\colon\R^d\to\R^k$ as
    \begin{equation}
        \y_i = \frac{1}{\sqrt{k}}\theta_\ast^\top \fast(\x_i) + \noise_i,
    \end{equation}
    where $\noise\in\R^n$ is an additive noise vector (independent of the covariates $\x_i$) of zero mean and covariance $\Sigma:=\E \noise\noise^\top$, and $\theta_\ast\in\R^{k}$ is a deterministic weight vector.
\end{assumption}
\begin{assumption}[Data \& Features]\label{ass:data+features}
    We assume that the covariates $\x_i$ are independent and come from a distribution such that
    \begin{condenum}
        \item\label{cond centered} the feature maps $\f,\fast$ are centered\footnote{This is a commonly used assumption which simplifies the analysis. Our techniques also apply to the case of non-zero mean, however doing so would add a rank-one component to the sample covariance matrix, considerably complicating the final expressions for the deterministic equivalents.} in the sense $\E\f(\x_i)=0$, $\E\fast(\x_i)=0$,
        \item\label{cond cov} the feature covariances
        \begin{equation}
            \begin{split}
                \Omega&:=\E \f(\x_i)\f(\x_i)^\top\in\R^{p\times p},\\
                \Psi&:=\E \fast(\x_i)\fast(\x_i)^\top\in\R^{k\times k},\\
                \Phi&:=\E \f(\x_i)\fast(\x_i)^\top\in\R^{p\times k},
            \end{split}
        \end{equation}
        have uniformly bounded spectral norm.
        \item\label{cond Lipschitz conc} scalar Lipschitz functions of the feature matrices
        \begin{equation}
            \begin{split}
                X&:=(\f(\x_1),\ldots,\f(\x_n))\in\R^{p\times n}\\
                Z&:=(\fast(\x_1),\ldots,\fast(\x_n))\in\R^{k\times n}
            \end{split}
        \end{equation} are uniformly sub-Gaussian.
    \end{condenum}
\end{assumption}
\begin{assumption}[Proportional regime]\label{ass:dimensions}
    The number of samples $n$ and the feature dimensions $p,k$ are all large and comparable, see~\Cref{thm genRMT informal} later.
\end{assumption}
\begin{remark}\label{remark suff cond}
    We formulated~\Cref{ass:data+features} as a joint assumption on the covariates distribution and the feature maps. A conceptually simpler but less general condition would be to assume that
    \begin{condenum}[topsep=0pt,itemsep=-1ex,partopsep=1ex,parsep=1ex]
        \item[(ii')] the covariates $\x_i$ are Gaussian with bounded covariance $\Omega_0:=\E\x_i\x_i^\top$
        \item[(iii')] the feature maps $\f,\fast$ are Lipschitz-continuous
    \end{condenum}
    instead of~\Cref{cond cov,cond Lipschitz conc}.
\end{remark}
The setting above defines a quite broad class of problems, and the results that follow in Section~\ref{sec gen err} will hold under these generic assumptions. The main class of feature maps we are interested in are \emph{deep structured feature models}.
\begin{definition}[Deep structured feature model]\label{deep feature model}
    For any fixed $L\in\N$ and dimensions $d,p_1,\ldots,p_L=p$, let $\f_1,\ldots,\f_L\colon\R\to\R$ be Lipschitz-continuous \emph{activation functions} $\abs{\f_l(a)-\f_l(b)}\lesssim\abs{a-b}$ applied entrywise, and let $W_1\in\R^{p_1\times d}, W_2\in\R^{p_2\times p_1},\ldots$ be deterministic \emph{weight matrices} with uniformly bounded spectral norms, $\norm{W_l}\lesssim 1$. We then call
\begin{align}\label{deep feature eq}
    \f(\x) := \f_{L}\left(\W_{L}\f_{L-1}\left(\cdots W_{2}\f_{1}\left(\W_{1}\x\right)\right)\right).
\end{align}
a \emph{deep structured feature} model. 
\end{definition}
Note that~\cref{deep feature eq} defines a Lipschitz-continuous map\footnote{$\norm{\f(W\x)-\f(W\x')}^2 = \sum_{i} \abs{\f(w_i^\top \x)-\f(w_i^\top \x')}^2 \lesssim \sum_i \abs{w_i^\top(\x-\x')}^2 = \norm{W(\x-\x')}^2\lesssim \norm{\x-\x'}^2$} $\f\colon\R^d\to\R^{p},\fast\colon\R^d\to\R^k$ and therefore if both $\f,\fast$ are deep structured feature models (with distinct parameters in general), then~\Cref{ass:data+features} is satisfied whenever the feature maps $\f,\fast$ are centered\footnote{It is sufficent that e.g.\ $\phi_l$ is odd, and $\x_i$ is centered.} with respect to Gaussian covariates $\x_i$. As hinted in the introduction we will be particularly interested in one sub-class of~\Cref{deep feature model} known as \emph{Gaussian rainbow networks}.
\begin{definition}[Gaussian rainbow ensemble] 
\label{def:rainbow}
Borrowing the terminology of \cite{guth2023rainbow}, we define a fully-connected, $\depth$-layer \emph{Gaussian rainbow network} as a random variant of~\Cref{deep feature model} where for each $\ell$ the hidden-layer weights $\W_{\ell} = Z_{\ell}C_{\ell}^{1/2}$ are random matrices with $Z_{\ell}\in\R^{\p_{\ell+1}\times \p_{\ell}}$ having zero mean and i.i.d.\ variance $\sfrac{1}{p_\ell}$ Gaussian entries and $C_{\ell}\in\R^{p_{\ell}\times p_{\ell}}$ being uniformly bounded covariance matrices, which we allow to depend on previous layer weights $Z_1,\ldots,Z_{l-1}$. 
\end{definition}
Note that Gaussian rainbow networks above can be seen as a generalization of the deep random features model studied in \cite{schroder2023deterministic, bosch2023precise, Fan2020SpectraOT}, with the crucial difference that the weights are structured. 

\subsection*{Notation}
For square matrices $A\in\R^{n\times n}$ we denote the averaged trace by $\braket{A}:=n^{-1}\Tr A$, and for rectangular matrices $A\in\R^{n\times m}$ we denote the Frobenius norm by $\norm{A}_F^2:=\sum_{ij}\abs{a_{ij}}^2$, and the operator norm by $\norm{A}$. For families of non-negative random variables $X(n),Y(n)$ we say that $X$ is \emph{stochastically dominated} by $Y$, and write $X\prec Y$, if for all $\epsilon,D$ it holds that $P(X(n)\ge n^{\epsilon} Y(n))\le n^{-D}$ for $n$ sufficiently large. For a centered random vector \(x \in \R^d\) we denote its \emph{sub-Gaussian norm} as
        \(\norm{x}_{\psi_2} \coloneqq \inf_{\sigma 
        \geq 0} \{\E \exp^{\braket{v, x}} \leq \exp^{\frac{\norm{v}^2 \sigma^2}{2}}\ \forall\, v \in \R^d\}\).
\section{Test error of Lipschitz feature models}
\label{sec gen err}
Under~\Cref{ass:labels,ass:data+features} the generalization error from~\Cref{eq:gen err def} is given by
\begin{equation}\label{eq:gen_err}
    \begin{split}
        \mathcal E_\mathrm{gen}(\lambda)
        = \frac{\theta_\ast^\top\Psi\theta_\ast}{k} + \frac{\theta_\ast^\top Z X^\top G\Omega GX Z^\top \theta_\ast }{k p^2}  +  \frac{n}{p}\braket*{\frac{X^\top G\Omega GX\Sigma}{p}} -2 \frac{\theta_\ast^\top \Phi^\top G X Z^\top \theta_\ast}{kp},
    \end{split}
\end{equation}
in terms of the \emph{resolvent} $G=G(\lambda):=(XX^\top/p+\lambda)^{-1}$.

Our main result is a rigorous asymptotic expression for~\Cref{eq:gen_err}. To that end define, $m(\lambda)$ to be the unique solution to the equation
\begin{equation}\label{eq m def}
    \frac{1}{m(\lambda)}=\lambda +\braket*{\Omega \Bigl( I + \frac{n}{p} m(\lambda)\Omega\Bigr)^{-1}},
\end{equation}
and define
\begin{equation}\label{eq M def}
    M(\lambda)=\Bigl( \lambda + \frac{n}{p}\lambda m(\lambda)\Omega\Bigr)^{-1}
\end{equation}
which is the \emph{deterministic equivalent} of the resolvent, $M(\lambda)\approx G(\lambda)$, see~\Cref{thm MP} later. The fact that~\Cref{eq m def} admits a unique solution $m(\lambda)>0$ which is continuous in $\lambda$ follows directly from continuity and monotonicity. Moreover, from
\begin{equation*}
    0\le \braket*{\Omega \Bigl( I + \frac{n}{p} m\Omega\Bigr)^{-1}} \le \min\set*{\braket{\Omega}, \frac{\rank \Omega}{n}\frac{1}{m}}
\end{equation*}
we obtain the bounds
\begin{equation}
    \max\set*{\frac{1}{\lambda+\braket{\Omega}},\frac{1-\frac{\rank\Omega}{n}}{\lambda}} \le m(\lambda) \le \frac{1}{\lambda}.
\end{equation}
We also remark that $m(\lambda)$ depends on $\Omega$ only through its eigenvalues $\omega_1,\ldots,\omega_p$, while $M(\lambda)$ depends on the eigenvectors. The asymptotic expression~\Cref{eq Egen rmt} for the generalization error derived below depends on the eigenvalues of $\Omega$, the overlap of the eigenvectors of $\Omega$ with the eigenvectors of $\Phi$, and the overlap of the eigenvectors of $\Psi,\Phi$ with $\theta_\ast$.

\begin{theorem}\label{thm genRMT informal}
    Under~\Cref{ass:labels}, \Cref{ass:data+features} and \Cref{ass:dimensions} for fixed $\lambda>0$ we have the asymptotics
    \begin{equation}\label{E gen thm eq}
        \mathcal E_\mathrm{gen}(\lambda) = \mathcal E_\mathrm{gen}^\mathrm{rmt}(\lambda) + O_\prec\Bigl(\frac{1}{\sqrt{n}}\Bigr),
    \end{equation}
    in the proportional $n\sim k\sim p$ regime, where
    \begin{equation}\label{eq Egen rmt}
        \begin{split}
            \mathcal E_\mathrm{gen}^\mathrm{rmt}(\lambda)&:=\frac{1}{k}\theta_\ast^\top \frac{ \Psi-\frac{n}{p} m\lambda \Phi (M+\lambda M^2)\Phi^\top}{1-\frac{n}{p}(\lambda m)^2\braket{\Omega M\Omega M}}\theta_\ast  + \braket{\Sigma}  \frac{ (\lambda m)^2\frac{n}{p} \braket{ M\Omega M\Omega  }}{1-\frac{n}{p}(\lambda m)^2\braket{\Omega M\Omega M}}.
        \end{split}
    \end{equation}
    In the general case of comparable parameters we have the asymptotics with a worse error of\footnote{This allows to identify the leading order of the generalization error as long as the ratio of the largest and smallest parameter is much smaller than the square-root of the smallest one.}
    \[
        \frac{1}{\sqrt{\min\set{n,p,k}}}\Bigl(1 + \frac{\max\set{n,p,k}}{\min\set{n,p,k}}\Bigr).
    \]
\end{theorem}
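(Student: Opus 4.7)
The starting point is~\Cref{eq:gen_err}, which decomposes $\mathcal E_\mathrm{gen}$ into a deterministic piece $\theta_\ast^\top\Psi\theta_\ast/k$, a linear-in-$G$ cross term $-2\theta_\ast^\top\Phi^\top GXZ^\top\theta_\ast/(kp)$, and two quadratic-in-$G$ contributions, $\theta_\ast^\top ZX^\top G\Omega GXZ^\top\theta_\ast/(kp^2)$ and $(n/p)\braket{X^\top G\Omega GX\Sigma/p}$. My plan is to replace each stochastic piece by its deterministic equivalent, picking up a $\bigO*{n^{-1/2}}$ fluctuation at each step from the sub-Gaussian Lipschitz concentration of~\Cref{cond Lipschitz conc}, and then to rearrange the resulting expression into~\Cref{eq Egen rmt} using the identity $\lambda M+\frac{n}{p}\lambda m\,\Omega M=I$ read off from~\Cref{eq M def}.

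For the linear-in-$G$ piece I would invoke the Marchenko-Pastur-type equivalent $G\approx M$ announced as~\Cref{thm MP} later in the paper, so that $\braket{AG}=\braket{AM}+\bigO*{n^{-1/2}}$ and $u^\top Gv=u^\top Mv+\bigO*{n^{-1/2}}$ for bounded deterministic $A,u,v$, with fluctuations controlled by~\Cref{cond Lipschitz conc}. The sample cross-covariances $XZ^\top/n$ and $ZZ^\top/n$ concentrate around $\Phi$ and $\Psi$ respectively by the same mechanism. Combining with the push-through identity $GX=X\tilde G$, where $\tilde G\coloneqq(X^\top X/p+\lambda)^{-1}$, and with $\tilde G\,X^\top X/p=I-\lambda\tilde G$, reduces $\Phi^\top GXZ^\top/(kp)$ to a deterministic expression in $M,\Phi,\Psi$ up to $\bigO*{n^{-1/2}}$.

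The more delicate step is the two-resolvent law needed for the quadratic terms. A standard approach is the resolvent expansion $G-M=M\Delta G$ with $\Delta\coloneqq\frac{n}{p}\lambda m\,\Omega-XX^\top/p$; iterating and averaging against $\Omega$ produces an approximate self-consistent equation
\[
    \braket{\Omega G\Omega G}=\braket{\Omega M\Omega M}+\frac{n}{p}(\lambda m)^2\braket{\Omega M\Omega M}\cdot\braket{\Omega G\Omega G}+\bigO*{n^{-1/2}},
\]
whose solution reproduces the renormalization factor $[1-(n/p)(\lambda m)^2\braket{\Omega M\Omega M}]^{-1}$ of~\Cref{eq Egen rmt}. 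The second quadratic expression is handled analogously after applying push-through on both sides to write $X^\top G\Omega GX/p^2=\tilde G(X^\top\Omega X/p)\tilde G/p$ and invoking the corresponding multi-resolvent law on the $n\times n$ side; combining with $XZ^\top/n\approx\Phi$ then produces the numerator factor $\Phi(M+\lambda M^2)\Phi^\top$ after simplification via~\Cref{eq M def}.

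The main obstacle is the quantitative $\bigO*{n^{-1/2}}$ control of the multi-resolvent law under the generic sub-Gaussian Lipschitz assumption rather than pure Gaussianity. In the Gaussian special case of~\Cref{remark suff cond}, Gaussian integration by parts delivers the self-consistent equation routinely; in the general case one needs either a Lindeberg-swap universality argument or direct use of the sub-Gaussian concentration via Stein-type identities carrying an explicit error. Propagating the fluctuation estimate through the final algebra is then straightforward, and the worse rate $1/\sqrt{\min\{n,p,k\}}\cdot(1+\max\{n,p,k\}/\min\{n,p,k\})$ in the unequal-dimensions regime reflects that fluctuations are governed by the smallest dimension while the mean quantities can amplify by the dimension ratio.
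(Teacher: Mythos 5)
Your decomposition of $\mathcal E_{\mathrm{gen}}$ and the renormalization structure you identify (the factor $[1-\frac{n}{p}(\lambda m)^2\braket{\Omega M\Omega M}]^{-1}$ from a self-consistent equation for $\braket{\Omega G\Omega G}$) both match the paper. However, there is a genuine gap in how you propose to handle the terms where $G$ multiplies the data matrices it is built from. You suggest replacing $G\to M$ and $XZ^\top/n\to\Phi$ separately, aided by push-through. This does not yield the correct limit: the correct deterministic equivalent is $G\,\frac{XZ^\top}{n}\approx\lambda m\,M\Phi$, and the additional factor $\lambda m$ is precisely what the paper emphasizes with the warning that deterministic equivalents are \emph{not} multiplicative. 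Push-through alone cannot recover this: $GX=X\wc G$ just moves the problem, since $X$ and $Z$ remain correlated with $\wc G$, so one cannot use $\wc G\approx mI$ tested against the \emph{random} observable $Z^\top(\cdot)X$. Worse, if you try to derive the factor by the resolvent expansion $G=M+M\Delta G$ that you use for the quadratic terms, the computation for $GXZ^\top$ collapses to the trivial identity $(I-\lambda M-\frac{n}{p}\lambda m\,M\Omega)GXZ^\top=0$, which carries no information. What breaks this circularity in the paper is the leave-one-out identity $Gx_i=\lambda\wc G_{ii}G_{-i}x_i$, which decorrelates $x_i,z_i$ from the reduced resolvent $G_{-i}$ and produces the $\lambda m$ factor from $\wc G_{ii}\approx m$. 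This decoupling step (applied once for the cross-term and twice for $ZX^\top G\Omega GXZ^\top$) is the essential technical ingredient, and it is absent from your proposal.

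A secondary, smaller point: you flag the non-Gaussian case as requiring a Lindeberg swap or Stein-type universality argument, but under \Cref{cond Lipschitz conc} the paper proceeds directly, since Lipschitz concentration of the feature matrices already gives Gaussian-type resolvent concentration (\Cref{concentration AG}) and Hanson--Wright quadratic-form concentration; no separate universality argument is needed. Your intuition for the worse error rate in the unequal-dimensions regime is qualitatively reasonable, but the quantitative exponents you would obtain depend on tracking the $\lambda$- and dimension-dependence through the leave-one-out expansions, which your sketch does not attempt.
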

\begin{remark}[Relation to previous results]
    We focus on the misspecified case as this presents the main novelty of the present work. In the wellspecified case $Z=X$ our model essentially reduces to linear regression with data distribution $x=\f(\x)$. There has been extensive research on the generalization error of linear regression, see e.g.\ in~\cite{2303.01372,Dobriban2015HighDimensionalAO,2103.09177,2210.08571} and the references therein. 
    \begin{enumerate}[label=(\alph*)]
        \item We confirm Conjecture 1 of~\cite{Loureiro2021CapturingTL} under~\cref{ass:data+features}. The expression for the error term in~\Cref{thm genRMT informal} matches the expression obtained in~\cite{Loureiro2021CapturingTL} for a Gaussian covariates teacher-student model. 
        \item Independently and concurrently to the current work~\cite{2312.09194} (partially confirming a conjecture made in~\cite{louart2018random}) obtained similar results under different assumptions. Most importantly~\cite{2312.09194} considers one-layer unstructured random feature models and computes the \emph{empirical generalization error} for a deterministic data set, while we consider general Lipschitz features of random data, and compute the generalization error. 
        \item In the unstructured random feature model~\cite{Mei2021GeneralizationEO,2008.06786} obtained an expression for the generalization error under the assumption that the target model is linear or rotationally invariant. 
    \end{enumerate}
\end{remark}
The novelty of~\Cref{thm genRMT informal} compared to many of the previous works is, besides the level of generality, two-fold:
\begin{enumerate}[label=(\roman*)]
    \item\label{gen error det equ} We obtain a deterministic equivalent for the generalization error involving the population covariance $\Phi$ and the sample covariance $XZ^\top$ in the general misspecified setting.
    \item\label{anisotropic det equ} Our deterministic equivalent is \emph{anisotropic}, allowing to evaluate~\Cref{eq:gen_err} for \emph{fixed} targets $\theta_\ast$ and structured noise covariance $\Sigma\ne I$.
\end{enumerate}
Some of the previous rigorous results on the generalization error of ridge regression have been limited to the well-specified case, $X=Z$, since in this particular case the second term of~\Cref{eq:gen_err} can be simplified to 
\begin{equation}
    \frac{XX^\top}{p} G\Omega G \frac{XX^\top}{p} = (1-\lambda G)\Omega(1-\lambda G).
\end{equation}
When computing deterministic equivalents for terms as $G\Omega G$, some previous results have relied on the ``trick'' of differentiating a generalized resolvent matrix $\wt G(\lambda,\lambda'):=(XX^\top/p + \lambda'\Omega + \lambda)^{-1}$ with respect to $\lambda'$. Our approach is more robust and not limited to expressions which can be written as certain derivatives. 

To illustrate~\Cref{anisotropic det equ}, the conventional approach in the literature to approximating e.g.\ the third term on the right hand side of~\Cref{eq:gen_err} in the case $\Sigma=I$ would be to use the cyclicity of the trace to obtain
\begin{equation}\label{old approach}
    \begin{split}
        \frac{1}{p^2}\Tr X^\top G\Omega G X &= \frac{1}{p}\Tr G \frac{X X^\top}{p} G\Omega \\
        &= \braket{ G\Omega} -\lambda \braket{G^2\Omega}.
    \end{split}
\end{equation}
Then upon using~\Cref{eq m def} and $\braket{G\Omega}\approx\braket{M\Omega}$, the first term of~\Cref{old approach} can be approximated by $1/(\lambda m(\lambda))-1$, while for the second term it can be argued that this approximation  also holds in derivative sense to obtain
\[\braket{G^2 \Omega}=-\frac{\dif}{\dif\lambda}\braket{G\Omega} \approx -\frac{\dif}{\dif\lambda} \frac{1}{\lambda m(\lambda)} = \frac{\lambda m'(\lambda)+m(\lambda)}{(\lambda m(\lambda))^2}
\]
By differentiating~\Cref{eq m def}, solving for $m'$ and simplifying, it can be checked that this result agrees with the second term of~\Cref{eq Egen rmt} in the special case $\Sigma=I$. However, it is clear that any approach which only relies on \emph{scalar} deterministic equivalents is inherently limited in the type of expressions which can be evaluated. Instead, our approach involving \emph{anisotropic deterministic equivalents} has no inherent limitation on the structure of the expressions to be evaluated.

An alternative to evaluating rational expressions of $X,Z$, commonly used in similar contexts, is the technique of \emph{linear pencils}~\cite{2008.06786, 2312.09194}. The idea here is to represent rational functions of $X,Z$ as blocks of inverses of larger random matrices which depend linearly $X,Z$. The downside of linear pencils is that even for simple rational expressions the linearizations become complicated, sometimes even requiring the use of computer algebra software for the analysis\footnote{For instance~\cite{2008.06786} used block matrices with up to $16\times 16$ blocks in order to evaluate the asymptotic test error.} In comparison we believe that our approach is more direct and flexible.

\subsection{Proof of~\Cref{thm genRMT informal}}
We present the proof of~\Cref{thm genRMT informal} in details in~\Cref{appendix gen error}. The main steps and ingredients for the proof of~\Cref{thm genRMT informal} consist of the following:
\begin{description}
    \item[Concentration:] As a first step we establish \emph{concentration estimates} for Lipschitz functions of $X,Z$ and its columns. A key aspect is the concentration of quadratic forms in the columns $x_i:=\f(\x_i)$ of $X$:
        \begin{equation*}
            \abs{x_i^\top A x_i - \E x_i^\top A x_i}= \abs{x_i^\top A x_i-\Tr \Omega A}\prec \norm{A}_F
        \end{equation*}
        which follows from the Hanson-Wright inequality~\cite{1409.8457}. The concentration step is very similar to analagous considerations in previous works~\cite{chouard2022quantitative,1702.05419} but we present it for completeness. The main property used extensively in the subsequent analysis is that traces of resolvents with deterministic observables concentrate as
        \begin{equation}\label{concentration AG}
            \abs{\braket{A[G(\lambda)-\E G(\lambda)]}} \prec \frac{\braket{\abs{A}^2}^{1/2}}{n\lambda^{3/2}}.
        \end{equation}
    \item[Anisotropic Marchenko-Pastur Law:] 
    As a second step we prove an anisotropic Marchenko-Pastur law for the resolvent $G$, of the form:
        \begin{theorem}\label{thm MP}
            For arbitrary deterministic matrices $A$ we have the high-probability bound
            \begin{equation}\label{mp main eq}
                \abs{\braket{(G(\lambda)-M(\lambda)A}} \prec \frac{\braket{\abs{A}^2}}{n\lambda^3},
            \end{equation}
            in the proportional $n\sim p$ regime\footnote{See the precise statement in the comparable regime in~\Cref{eq MP comp} later}.
        \end{theorem}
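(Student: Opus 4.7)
The plan is to combine the Lipschitz concentration estimate (\ref{concentration AG}), which is already available from the previous step, with a cavity (leave-one-out) analysis of the expected resolvent $\E G$, in the spirit of anisotropic local laws for sample covariance matrices. The concentration bound reduces the problem to controlling the deterministic quantity $|\braket{A(\E G - M)}|$, so from here on one works only with expectations and deterministic manipulations.

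For the cavity step, for each $i\in[n]$ introduce the minor resolvent $G_i := ((X^{(i)})(X^{(i)})^\top/p + \lambda)^{-1}$, where $X^{(i)}$ is $X$ with its $i$-th column removed. By construction $G_i$ is independent of $x_i$, and Sherman--Morrison yields $Gx_i = G_i x_i/D_i$ with $D_i := 1 + x_i^\top G_i x_i/p$. Starting from the elementary identity $\lambda G + GXX^\top/p = I$ and substituting $GXX^\top/p = \sum_i G_i x_i x_i^\top/(p D_i)$, one takes the trace against an arbitrary deterministic $A$ and expectation. Three ingredients enter: (i) the Hanson--Wright bound $|x_i^\top G_i x_i/p - \Tr(\Omega G_i)/p| \prec \|G_i\|/\sqrt{p}$ to replace $D_i$ by a deterministic quantity; (ii) a rank-one perturbation estimate for $G-G_i$ to pass from $G_i$ back to $G$; and (iii) the defining relation (\ref{eq m def}) of $m$, which encodes $\Tr(\Omega M)/p = 1/(\lambda m) - 1$ and thus identifies the deterministic value of $D_i$ with $1/(\lambda m)$. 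Combining these gives the approximate self-consistent identity
\[
    \lambda\,\E\braket{AG} + \frac{n}{p}\,\lambda m\,\E\braket{AG\Omega} \;\approx\; \braket{A},
\]
i.e.\ $\E\braket{AGM^{-1}}\approx\braket{A}$, which after the substitution $A\to MAM^{-1}$ is equivalent to $\E\braket{A(G-M)}\approx 0$. Tracking the norms of $M$ and $M^{-1}$ in this substitution, together with the $\lambda^{-1}$-powers arising in the Hanson--Wright and rank-one perturbation steps, produces the $\lambda^{-3}$ in the final error bound (\ref{mp main eq}).

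The main obstacle is the inherent circularity of the cavity step: replacing $\Tr(\Omega G_i)/p$ by $\Tr(\Omega M)/p$ already uses a special case (with $A=\Omega$) of the bound we are trying to establish. The standard resolution is a bootstrap argument. One starts from the trivial a priori bound $\|G\|\le 1/\lambda$ together with the Lipschitz concentration (\ref{concentration AG}), which provides a weak initial estimate on $|\Tr(\Omega(G-M))/p|$; plugging this back into the self-consistent identity yields an improved bound; iterating until no further gain is possible produces the sharp $n^{-1}$ rate. Equivalently, one may verify the estimate first for very large $\lambda$, where the self-consistent map is a manifest contraction, and then propagate down to all $\lambda>0$ by a continuity argument in $\lambda$, using monotonicity and smoothness of both $G(\lambda)$ and $M(\lambda)$. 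The technical heart of the proof lies in the careful bookkeeping needed to propagate the sharp rate and $\lambda$-dependence through this self-improvement scheme while retaining anisotropy of the estimate; everything else reduces to concentration and resolvent algebra.
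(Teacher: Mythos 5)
Your broad outline agrees with the paper's: reduce to $|\braket{A(\E G - M)}|$ via the Lipschitz concentration of \Cref{concentration AG}, expand $\lambda \E G + \E G X X^\top/p = I$ via Sherman--Morrison with the minor resolvents $G_{-i}$, control the random denominators $D_i$ with Hanson--Wright, and close the resulting self-consistent relation. However, your route for closing the self-consistent equation is genuinely different from the paper's, and the difference matters. You observe (correctly) that replacing $\Tr(\Omega G_{-i})/p$ by $\Tr(\Omega M)/p = \frac1{\lambda m}-1$ is circular, since it is the $A=\Omega$ case of the estimate you are trying to prove, and you resolve it by a bootstrap (or continuity-in-$\lambda$) argument iterated from the a priori bound $\norm{G}\le\lambda^{-1}$. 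The paper sidesteps the circularity entirely: it never substitutes the unknown scalar by $m$ during the cavity step. Instead, it keeps $\braket{\E\wc G}$ as an undetermined scalar parameter (noting that $D_i^{-1}=\lambda\wc G_{ii}$ exactly, so $\E D_i^{-1}=\lambda\braket{\E\wc G}$ by exchangeability) and derives the \emph{unconditional} bound
\begin{equation*}
\norm{\E G - M(\lambda,\braket{\E\wc G})}_F \prec \frac{n}{p^{3/2}\lambda^3}, \qquad M(\lambda,m):=\Bigl(\lambda\tfrac{n}{p}m\Omega+\lambda\Bigr)^{-1},
\end{equation*}
valid whatever the numerical value of $\braket{\E\wc G}$ turns out to be. The remaining scalar identification $\braket{\E\wc G}\approx m$ is then proved in one shot via the trace identity $1-\lambda\braket{\wc G}=\tfrac{p}{n}(1-\lambda\braket{G})$ and the explicit contraction estimate
\begin{equation*}
\abs{m-\braket{\E\wc G}} \le \abs{m-\braket{\E\wc G}}\,\frac{\braket{\Omega}}{\lambda+\braket{\Omega}} + \bigO*{\frac{m}{p\lambda^2}},
\end{equation*}
which is manifestly a strict contraction for all $\lambda>0$ and hence closes without iteration. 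Your approach is standard in the local-law literature and should go through with careful bookkeeping of the $\lambda$-powers at each bootstrap stage, but the paper's factoring into a parametrized matrix estimate plus a one-dimensional stability bound is the more direct argument and avoids the iteration altogether; this distinction is worth being aware of, since the single-shot contraction gives cleaner $\lambda$-dependence and does not require a self-improvement scheme or a continuity argument in $\lambda$.
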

        \begin{remark}
            Tracial Marchenko-Pastur laws (case $A=I$ above) have a long history, going back to~\cite{marcenkopastur} in the isotropic case $\Omega=I$,~\cite{SILVERSTEIN1995331} in the general case with separable covariance $x=\sqrt{\Omega}z$ and~\cite{baizhou} under quadratic form concentration assumption. Anisotropic Marchenko-Pastur laws under various conditions and with varying precision have been proven e.g.\ in~\cite{RUBIO2011592,chouard2022quantitative,louart2018random,10.1007/s00440-016-0730-4}.
        \end{remark}        
        For the proof of~\Cref{thm MP} the resolvent $\wc G:=(X^\top X/p+\lambda)^{-1}\in\R^{n\times n}$ of the \emph{Gram matrix} $X^\top X$ plays a key role. The main tool used in this step are the commonly used \emph{leave-one-out identities}, e.g.
        \begin{equation}\label{Gx loo}
            Gx_i=\lambda\wc G_{ii} G_{-i}x_i, \quad G_{-i}:=\Bigl(\sum_{j\ne i}\frac{x_jx_j^\top}{p} + \lambda \Bigr)^{-1}
        \end{equation}
        which allow to decouple the randomness due the $i$-th column from the remaining randomness. 
        Such identities are used repeatedly to derive the approximation
        \begin{equation}\label{EG approx MG}
            \E G \approx \Bigl(\frac{n}{p}\lambda \braket{\E \wc G}\Omega + \lambda\Bigr)^{-1}
        \end{equation}
        in Frobenius norm, which, together with the relation $1-\lambda\braket{\wc G}=\frac{p}{n}\bigl(1-\lambda\braket{G}\bigr)$ between the traces of $G$ and $\wc G$, yields a self-consistent equation
        for $\braket{\wc G}$. This self-consistent equation is an approximate version of~\Cref{eq m def}, justifying the definition of $m$. The \emph{stability} of the self-consistent equation then implies the averaged asymptotic equivalent
        \begin{equation}
            \abs{m-\braket{\E\wc G}}\lesssim\frac{1}{n\lambda^2}.
        \end{equation}
        and therefore by~\Cref{EG approx MG} finally
        \begin{equation}
            \norm{M-\E G}_F \lesssim \frac{1}{n^{1/2}\lambda^3},
        \end{equation}
        which together with~\Cref{concentration AG} implies~\Cref{thm MP}.

        Compared to most previous anisotropic deterministic equivalents as in~\cite{10.1007/s00440-016-0730-4} we measure the error of the approximation~\Cref{mp main eq} with respect to the Frobenius norm of the observable $A$. As in the case of unified local laws for Wigner matrices~\cite{2203.01861} this idea renders the separate handling of quadratic form bound unnecessary, considerably streamlining the proof. To illustrate the difference note that specializing $A$ to be rank-one $A=xy^\top$ in
        \[
            \begin{split}
                \abs{y^\top(G-M)x}=\abs{\Tr(G-M)A}&\prec \begin{cases}\norm{A} \\ \braket{\abs{A}^2}^{1/2}
                \end{cases}
            \end{split}
        \]
        results in a trivial estimate $\norm{x}\norm{y}$ in the case of the spectral norm, and in the optimal estimate $\norm{x}\norm{y}/\sqrt{p}$ in the case of the Frobenius norm.

    \item[Anisotropic Multi-Resolvent Equivalents: ] The main novelty of the current work lies in~\Cref{prop multi res} which asymptotically evaluates the expressions on the right-hand-side of~\Cref{eq:gen_err}. A key property of the deterministic equivalents is that the approximation is \emph{not} invariant under multiplication. E.g.\ for the last term in~\Cref{eq:gen_err} we have the approximations $G\approx M$ and $\frac{1}{n}XZ^\top=\frac{1}{n}\sum x_i z_i^\top \approx \Phi$, while for the product the correct deterministic equivalent is
        \begin{equation}
            G \frac{X Z^\top}{n} \approx \lambda m M\Phi,
        \end{equation}
        i.e.\ the is an additional factor of $m\lambda$. In this case the additional factor can be obtained from a direct application of the leave-one-out identity~\Cref{Gx loo} to the product $G\frac{XZ^\top}{n}$, but the derivation of the multi-resolvent equivalents requires more involved arguments. When expanding the multi-resolvent expression $\braket{GAGB}$ we obtain an approximative self-consistent equation of the form 
\begin{equation*}
   \braket{GAGB}\approx \braket{MAMB} + \frac{n}{p}(m\lambda)^2 \braket{M B M \Omega} \braket{G A G\Omega}.
\end{equation*}
Using a stability analysis this yields a deterministic equivalent for the special form $\braket{GAG\Omega}$ which then can be used for the general case. The second term of~\Cref{eq:gen_err} requires the most careful analysis due to the interplay of the multi-resolvent expression and the dependency among $Z,X$. 
\end{description}
\section{Population covariance for rainbow networks}
\label{sec:linearization}
Theorem \ref{thm genRMT informal} characterizes the test error for learning using Lipschitz feature maps as a function of the three features population (cross-)covariances $\Omega, \Phi, \Psi$. For the particular case where both the target and learner feature maps are drawn from the Gaussian rainbow ensemble from~\Cref{def:rainbow}, these population covariances can be expressed in closed-form in terms of combinations of products of the weights matrices. Consider two rainbow networks
\begin{equation}
    \begin{split}
    \f(\x)&=\f_L(W_L\f_{L-1}(\dots \f_1(W_1 \x)))\\
     \fast(\x)&=\wt{\f}_{\wt{L}}(V_{\wt{L}} \wt{\f}_{\wt{L}-1}(\dots \wt{\f}_1(V_1 \x)))
     \end{split}
\end{equation}
with depths $L,\wt{L}$. The approach we introduce here is in theory capable of obtaining linear or polynomial approximations to $\Omega,\Phi,\Psi$ under very general assumptions. However, for definiteness we focus on a class of correlated rainbow networks in which we allow the $k$-th row of $W_\ell$ to be correlated only to the $k$-th row of $W_{\ell'},V_{\ell'}$ as this allows for particularly simple expressions for the linearized covariances\footnote{The identity matrices in~\Cref{eq lin cov} are a direct consequence of this assumption. In case of weight matrices with varying row-norms or covariances across rows the resulting expression would be considerably more complicated.}. Note that we explicitly allow for weights to be correlated across layers. 
\begin{assumption}[Correlated rainbow networks]\label{corr rainbow}
    By symmetry we assume without loss of generality $L\le \wt L$. Furthermore, for all \(\ell \leq L \leq \wt L\), we assume
    \begin{enumerate}[label=(\alph*)]
        \item All the internal widths $p_\ell$ of $W_\ell,V_\ell$ agree,
        \item The rows $w_\ell,v_\ell$ of $W_\ell,V_\ell$ are i.i.d. with mean zero and 
        \begin{equation*}
        C_\ell := p_\ell \E w_\ell w_\ell^\top, \; \wt C_\ell:=p_\ell\E v_\ell v_\ell^\top,\; \wc C_\ell:=p_\ell\E w_\ell v_\ell^\top,
        \end{equation*}
        with \(\norm{C_\ell} + \norm{\wt C_\ell} + \norm{\wc C_\ell}\lesssim 1\),
        \item Asymptotic orthogonality of the rows of \(W_\ell, V_\ell\). Let \(w, w'\) be two independent copies of a row of \(W_\ell\). Then, \(\braket{w, w'} \prec d^{-1/2}\), same for \(V_\ell\),
        \item The rows of \(W_\ell, V_\ell\) are sub-Gaussian random vectors: 
        \begin{equation}
            \norm{w_\ell}_{\psi_2} + \norm{v_\ell}_{\psi_2} = O(d^{-1/2}),
        \end{equation}
        \item Centered activation functions \(\f_\ell, \wt \f_\ell: \E_{\x} \f_{\ell}(W_{\ell} \f_{\ell - 1}(\ldots \f_1(W_1 \x))) = 0\), same for \(\wt \f_\ell\) (see~\Cref{ass:data+features}).
    \end{enumerate}
\end{assumption}
Under~\Cref{corr rainbow} the \emph{linearized population covariances} can be defined recursively as follows:
\begin{definition}[Linearized population covariances]
    \label{def: linearized_covs}
    Define the sequence of matrices $\Omega_\ell^\mathrm{lin},\Phi_\ell^\mathrm{lin}, \Psi_\ell^\mathrm{lin}$ by the recursions
    \begin{equation}\label{eq lin cov}
    \begin{split}
         & \Omega_{\ell}^{\mathrm{lin}}=(\kappa^1_\ell)^2 W_\ell\Omega_{\ell-1}^{\mathrm{lin}}W_\ell^\top+(\kappa^{*}_\ell)^2\mathbb{I}_{p_\ell}                             \\
         & \Psi_{\ell}^{\mathrm{lin}}=(\tilde{\kappa}^1_\ell )^2V_\ell\Psi_{\ell-1}^{\mathrm{lin}}V_\ell^\top+(\wt{\kappa}^{*}_\ell)^2\mathbb{I}_{p_\ell} \\
         & \Phi_{\ell}^{\mathrm{lin}}=\kappa^1_\ell\tilde{\kappa}^1_\ell W_\ell\Phi_{\ell-1}^{\mathrm{lin}} V_\ell^\top +(\wc{\kappa}^*_\ell)^2\mathbb{I}_{p_\ell},
         \end{split}
    \end{equation}
    with $\Omega_0^{\mathrm{lin}}=\Psi_0^{\mathrm{lin}}=\Phi_0^{\mathrm{lin}}=\Omega_0$ the input covariance.
    The coefficients $\{\kappa_\ell^1,\tilde{\kappa}_\ell^1,\kappa_\ell^*,\tilde{\kappa}_\ell^*, \wc{\kappa}^*_\ell\}$ are defined by the recursion
    \begin{equation}
        \kappa_\ell^1 := \E \f_\ell'(N_\ell), \quad \wt \kappa_\ell^1:=\E \wt\f_\ell'(\wt N_\ell)
    \end{equation}
    and 
    \begin{equation}
        \begin{split}
            \kappa^\ast_\ell &= \sqrt{\E[\f_\ell(N_\ell)^2]-r_\ell(\kappa^1_\ell)^2}                                         \\
            \tilde{\kappa}^*_\ell&= \sqrt{\E [\wt{\f}_\ell(\wt N_\ell)^2]-\tilde{r}_\ell(\tilde{\kappa}^1_\ell)^2} \\
            \wc{\kappa}^*_\ell&=\sqrt{\E [\f_\ell(N_\ell )\wt{\f}_\ell(\wt N_\ell)]-\check{r}_\ell \kappa^1_\ell\wt{\kappa}^1_\ell},
        \end{split}
    \end{equation}
    where $N_\ell,\wt N_\ell$ are jointly mean-zero Gaussian with $\E N_\ell^2=r_\ell$, $\E \wt N_\ell^2=\wt r_\ell$, $\E N_\ell\wt N_\ell=\wc r_\ell$, with 
    \begin{equation*}
        r_{\ell}=\Tr[C_\ell \Omega_{\ell-1}^{\mathrm{lin}}], \; \wt{r}_{\ell}=\Tr[\wt{C}_\ell \Psi_{\ell-1}^{\mathrm{lin}}],\; \wc{r}_{\ell}=\Tr[\wc{C}_\ell^\top \Phi_{\ell-1}^{\mathrm{lin}}].
    \end{equation*}
    Finally, for $\tilde{L}\ge \ell\ge L+1$, define
    \begin{equation}
    \begin{split}
         \Phi_{\ell}^{\mathrm{lin}}&=\wt{\kappa}^1_\ell \Phi_{\ell-1}^{\mathrm{lin}}V_\ell^\top,
         \end{split}
    \end{equation}
    with still $\wt\kappa_\ell^1,\wt\kappa_\ell^\ast$ just as before, and $ \Psi_{\ell}^{\mathrm{lin}}$ with the same recursion \eqref{eq lin cov}.
\end{definition}

\begin{Conjecture}
    The populations covariances $\Omega, \Phi,\Psi$ involved in Theorem \ref{thm genRMT informal} can be asymptotically approximated with the last iterates of the linear recursions of Definition \ref{def: linearized_covs}, i.e.
    \begin{align}
        \norm{\Omega-\Omega_L^\mathrm{lin}}_F + \norm{\Psi-\Psi_{\tilde{L}}^\mathrm{lin}}_F + \norm{\Phi-\Phi_{\tilde{L}}^\mathrm{lin}}_F &\lesssim 1
    \end{align}
\end{Conjecture}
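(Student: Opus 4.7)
The plan is to prove all three claimed bounds simultaneously by induction on the layer depth $\ell$, the base case $\ell=0$ being immediate from $\Omega_0^{\mathrm{lin}}=\Psi_0^{\mathrm{lin}}=\Phi_0^{\mathrm{lin}}=\Omega_0$. For the induction step one fixes a typical realization of the weights of the first $\ell-1$ layers; conditionally on these, the rows of $W_\ell,V_\ell$ remain Gaussian with the (possibly correlated) row covariances encoded by $C_\ell,\wt C_\ell,\wc C_\ell$. This allows a Gaussian equivalence calculation to be carried out at each step of the recursion.

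\textbf{One-step Gaussian equivalence.} Write $g_i=w_\ell^{(i)\top}h_{\ell-1}(\x)$ for the pre-activations of the first network, where $h_{\ell-1}(\x)$ is the (Lipschitz) hidden representation at layer $\ell-1$. The Hanson--Wright-type concentration in~\Cref{corr rainbow}(d), combined with the induction hypothesis on $\Omega_{\ell-1}$, implies that the conditional variance of $g_i$ concentrates around $r_\ell$ with error $O(p_\ell^{-1/2})$. Consequently each diagonal $(\Omega_\ell)_{ii}=\E_\x\f_\ell(g_i)^2$ approaches $\E\f_\ell(N_\ell)^2=r_\ell(\kappa_\ell^1)^2+(\kappa_\ell^\ast)^2$ by the very definition of $\kappa_\ell^\ast$, producing the identity term $(\kappa_\ell^\ast)^2 I$ in the linear recursion. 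For off-diagonal entries $i\ne j$ one performs a Hermite expansion of $\f_\ell$ around the Gaussian of variance $r_\ell$ and obtains
\begin{equation*}
    (\Omega_\ell)_{ij} = (\kappa_\ell^1)^2 (W_\ell\Omega_{\ell-1}W_\ell^\top)_{ij} + O(p_\ell^{-1}),
\end{equation*}
since $\E_\x[g_ig_j]=w_\ell^{(i)\top}\Omega_{\ell-1}w_\ell^{(j)}$ is of order $O(p_\ell^{-1/2})$ for independent rows and the Hermite tail contributes only a term of size $|\E_\x[g_ig_j]|^2$. Summing these entry-wise errors yields a Frobenius discrepancy of order $O(1)$, and invoking the induction hypothesis together with $\norm{W_\ell}\lesssim 1$ closes the recursion for $\Omega_\ell$. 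The cross-covariance $\Phi_\ell^{\mathrm{lin}}$ is handled by the bivariate version of the same argument applied to $(w_\ell^{(i)\top}h_{\ell-1},v_\ell^{(i)\top}\wt h_{\ell-1})$ and Price's formula: the diagonal contribution $\E\f_\ell(N_\ell)\ft_\ell(\wt N_\ell)=\wc r_\ell\kappa_\ell^1\wt\kappa_\ell^1+(\wc\kappa_\ell^\ast)^2$ supplies the $(\wc\kappa_\ell^\ast)^2 I$ term, while the off-diagonal linear part produces $\kappa_\ell^1\wt\kappa_\ell^1 W_\ell\Phi_{\ell-1}^{\mathrm{lin}}V_\ell^\top$. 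For the trailing layers $\ell>L$ only $V_\ell$ acts, and the argument collapses to the stated $\Phi_\ell^{\mathrm{lin}}=\wt\kappa_\ell^1\Phi_{\ell-1}^{\mathrm{lin}}V_\ell^\top$.

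\textbf{Main obstacle.} The delicate point is promoting the one-step Gaussian equivalence to a uniform Frobenius-norm statement controlling all $p_\ell^2$ entries simultaneously, while correctly tracking the cross-network weight correlation encoded by $\wc C_\ell$. This requires uniform concentration of the many bilinear forms $\E_\x[g_ig_j]$ at the optimal scale $O(p_\ell^{-1/2})$ across all pairs $(i,j)$, which must be extracted carefully from~\Cref{corr rainbow}(d). In addition, propagating the $O(1)$ Frobenius bound through an $L$-fold recursion demands Lipschitz stability of the scalar coefficients $\kappa_\ell^1,\kappa_\ell^\ast,\wt\kappa_\ell^1,\wt\kappa_\ell^\ast,\wc\kappa_\ell^\ast$ as functions of $(r_\ell,\wt r_\ell,\wc r_\ell)$, which in turn rests on mild regularity of the activations $\f_\ell,\ft_\ell$ (Lipschitz continuity with a controlled Hermite tail). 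A secondary subtlety is that the identity terms in~\cref{eq lin cov} arise because all rows share the same covariance; without this row-homogeneity the correct replacement would be a diagonal matrix of row-dependent variances, which suggests a natural route to generalizing the statement beyond~\Cref{corr rainbow}.
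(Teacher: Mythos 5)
First, note that this statement appears in the paper as a \emph{Conjecture}: the authors do not prove it in full generality, but only establish the special case $L=1,\wt L=2$ (\Cref{theo lin}), explicitly remarking that the estimates ``quickly become tedious'' beyond that. So what you are proposing is an attempted proof of a statement the paper itself leaves open. Your inductive framing (condition on layers $1,\dots,\ell-1$, then run a one-step Gaussian-equivalence calculation) is natural and, at $\ell=1$, it coincides with the paper's argument in \Cref{prop:lin_1_layer}: the chaos/Hermite expansion of $\f_1(w_i^\top\x)$ in the Gaussian $\x$, Hanson--Wright for the diagonal normalisation $(WW^\top)_{ii}\approx\Tr C_1$, and the smallness of the off-diagonal overlaps $\braket{w_i,w_j}^p$ for $p\ge 2$.

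The gap appears at $\ell\ge 2$, and it is exactly the point the paper attacks with a quantitative central limit theorem. You write $g_i=w_\ell^{(i)\top}h_{\ell-1}(\x)$, note that $\Var_\x(g_i)=w_\ell^{(i)\top}\Omega_{\ell-1}w_\ell^{(i)}$ concentrates around $r_\ell$ via \Cref{corr rainbow}(d), and then directly conclude that $(\Omega_\ell)_{ii}=\E_\x\f_\ell(g_i)^2\approx\E\f_\ell(N_\ell)^2$ and that a Hermite expansion of $\f_\ell(g_i)\f_\ell(g_j)$ ``around the Gaussian of variance $r_\ell$'' is valid. But for $\ell\ge 2$, the random variable $g_i$ is a linear form in the non-Gaussian vector $h_{\ell-1}(\x)$, and concentration of its second moment does not imply that its law is close to $\cN(0,r_\ell)$, nor that $(g_i,g_j)$ is approximately jointly Gaussian. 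Replacing $\E_\x\f_\ell(g_i)^2$ by $\E\f_\ell(N_\ell)^2$ is precisely the content of a CLT for $g_i$, and it is the nontrivial ingredient: in the paper's proof of \Cref{theo lin} this is supplied by the Malliavin--Stein bound (\Cref{thm:clt}), used in \Cref{lem:diag} to show $\E\abs{\braket{DF,-DL^{-1}F}-\E F^2}=O(d^{-1/2})$ for $F=v^\top\f_1(W\x)$, and by the Wiener-chaos computation in \Cref{lem:off_diag,lem:cross_terms} with the weak-correlation control of \Cref{lemma:weak_corr}. Without an analogue of these estimates at each depth, the inductive step does not close. Your paragraph on the ``main obstacle'' correctly identifies uniformity over the $p_\ell^2$ entries as delicate, but the more fundamental missing ingredient is the depth-$\ell$ quantitative CLT itself; the paper's restriction to $L\le 1,\wt L\le 2$ reflects exactly that the Stein/chaos estimates have only been carried out at that depth, and extending them is what would be required to upgrade the Conjecture to a theorem.
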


Note that the linearization from~\Cref{def: linearized_covs} also provides good approximation to the population covariances $\Omega_\ell,\Phi_\ell,\Psi_\ell$ of the post-activations at intermediate layers $\ell$. The method we use to rigorously derive the linearizations is in theory applicable to any depths, however the estimates quickly become tedious. To keep the present work at a manageable length we provide a rigorous proof of concept only for the simplest multi-layer case.
\begin{theorem}\label{theo lin}
    Under~\Cref{corr rainbow} with $L=\wt L = 2$, we have 
    \begin{equation*}
        \begin{split}
        \norm{\Omega_1-\Omega_1^\mathrm{lin}}_F + \norm{\Psi_1-\Psi_1^\mathrm{lin}}_F + \norm{\Phi_1-\Phi_1^\mathrm{lin}}_F &\prec 1,\\
        \norm{\Omega_2 - \Omega_2^{\mathrm{lin}}}_F + 
        \norm{\Psi_2-\Psi_2^\mathrm{lin}}_F + \norm{\Phi_2-\Phi_2^\mathrm{lin}}_F&\prec 1.
        \end{split}
    \end{equation*} 
\end{theorem}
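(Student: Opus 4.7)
The plan is to attack each entry of the five covariances through a Hermite-polynomial expansion in the arguments of $\f_1,\ft_1,\ft_2$, and to control the accumulated Frobenius error via the quadratic-form concentration of~\Cref{corr rainbow}\,(d).

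For the one-layer covariances $\Omega_1,\Psi_1,\Phi_1$, each entry has the shape $\E \xi_1(a^\top\x)\xi_2(b^\top\x)$ with $\xi_i\in\{\f_1,\ft_1\}$ and $a,b$ rows of $W_1$ or $V_1$. Gaussianity of $\x$ makes $(a^\top\x,b^\top\x)$ exactly jointly Gaussian with the random covariance matrix determined by $a^\top\Omega_0 a$, $b^\top\Omega_0 b$ and $a^\top\Omega_0 b$, all of which concentrate at rate $n^{-1/2}$ around their expected values by~\Cref{corr rainbow}\,(d), with the off-diagonal expectation vanishing when the rows are independent. The Hermite expansion combined with Mehler's formula then yields
\begin{equation*}
    \E\xi_1(u)\xi_2(v)=\sum_{k\ge 0}c^{(1)}_k(\Var u)\,c^{(2)}_k(\Var v)\,k!\,\biggl(\frac{\Cov(u,v)}{\sqrt{\Var u\,\Var v}}\biggr)^{\!k},
\end{equation*}
where $c^{(i)}_k(r)$ is the $k$-th Hermite coefficient of $\xi_i$ at variance $r$. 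The $k=1$ term reproduces, via Stein's lemma, the linear part of the recursion in~\Cref{def: linearized_covs}; on the diagonal the tail $k\ge 2$ sums, by the very definitions of $\kappa_1^\ast,\wt\kappa_1^\ast,\wc\kappa_1^\ast$, to the corresponding constant shift up to an $O(n^{-1/2})$ smooth correction in the variance fluctuation. Off-diagonal entries have normalized correlation $\lesssim n^{-1/2}$, so their $k\ge 2$ tail is $O(n^{-1})$. Summing,
\begin{equation*}
    \norm{\Omega_1-\Omega_1^\mathrm{lin}}_F^2 \lesssim n\cdot n^{-1}+n^2\cdot n^{-2}\lesssim 1,
\end{equation*}
and the identical bookkeeping gives $\norm{\Psi_1-\Psi_1^\mathrm{lin}}_F,\norm{\Phi_1-\Phi_1^\mathrm{lin}}_F\lesssim 1$.

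For $\Psi_2$ and $\Phi_2$ the inner signal $h=\ft_1(V_1\x)$ is no longer Gaussian, so I would insert a Gaussian-equivalence step at the first hidden layer. Writing $\ft_1(v_{1,k}^\top\x)=\wt\kappa_1^1 v_{1,k}^\top\x+\ft_1^\perp(v_{1,k}^\top\x)$ with $\ft_1^\perp$ the nonlinear residual (vanishing linear Hermite coefficient), split
\begin{equation*}
    v_{2,j}^\top h = A_j+B_j, \qquad A_j:=\wt\kappa_1^1 v_{2,j}^\top V_1\x, \qquad B_j:=\sum_k(v_{2,j})_k \ft_1^\perp(v_{1,k}^\top\x).
\end{equation*}
Stein's lemma gives $\E A_j B_j=0$, and the one-layer Hermite argument applied to $\ft_1^\perp$ shows $\E\ft_1^\perp(v_{1,k}^\top\x)\ft_1^\perp(v_{1,k'}^\top\x)=O(n^{-1})$ for $k\ne k'$, so $B_j$ is a sum of nearly orthogonal summands with coefficients of size $(v_{2,j})_k=O(p^{-1/2})$. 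A Gaussian approximation then replaces the pair $(B_i,B_j)$ by a jointly Gaussian surrogate $(B_i^G,B_j^G)$ whose second moments match those of $(B_i,B_j)$; this makes $(v_{2,i}^\top h,v_{2,j}^\top h)$ approximately jointly Gaussian with covariance $v_{2,i}^\top\Psi_1^\mathrm{lin}v_{2,j}$, and a further application of the one-layer Hermite argument to $\ft_2$ then recovers $(\Psi_2^\mathrm{lin})_{ij}$. For $(\Phi_2)_{ij}=\E\f_1(w_{1,i}^\top\x)\ft_2(v_{2,j}^\top h)$ the same Gaussian equivalence applies, but one must additionally match the cross-moment $\E\f_1^\perp(w_{1,i}^\top\x)B_j=(\wc\kappa_1^\ast)^2 (v_{2,j})_i+O(n^{-1})$, concentrated on the paired row index $k=i$ by the independence of distinct rows in~\Cref{corr rainbow}. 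After Stein on $\ft_2$, this cross-term produces exactly the $\wt\kappa_2^1(\wc\kappa_1^\ast)^2 V_2^\top$ contribution carried through the recursion $\Phi_2^\mathrm{lin}=\wt\kappa_2^1\Phi_1^\mathrm{lin}V_2^\top$.

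The main technical obstacle is to make this Gaussian-equivalence step quantitative enough to survive the summation over $\sim n^2$ entries into a Frobenius norm. A Lindeberg-style smart-path interpolation between $B_j$ and $B_j^G$ is the natural tool: the Lipschitz regularity of $\ft_2$ combined with~\Cref{corr rainbow}\,(d), applied to quadratic forms in matrices such as $V_1^\top v_{2,j}v_{2,i}^\top V_1$, provides the second-moment matching, while higher-order Taylor corrections are suppressed by the smallness $(v_{2,j})_k=O(p^{-1/2})$ of the aggregating weights and the Hermite orthogonality of $\ft_1^\perp$. Once per-entry accuracies of $O(n^{-1})$ off-diagonal and $O(n^{-1/2})$ on-diagonal are in hand, summing yields $\norm{\Psi_2-\Psi_2^\mathrm{lin}}_F,\norm{\Phi_2-\Phi_2^\mathrm{lin}}_F\lesssim 1$ exactly as in the single-layer case.
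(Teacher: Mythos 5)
Your one-layer argument matches the paper's: the Wiener chaos expansion used in \cref{prop:lin_1_layer} is exactly the Hermite/Mehler expansion you invoke, and the $(WV^\top)_{ij}=O(d^{-1/2})$ concentration from \cref{corr rainbow}(d) is what truncates the tail, with the diagonal series resumming into the $\kappa^*$-terms and the off-diagonal error summing to $O(1)$ in Frobenius norm. So far this is the same proof.

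The two-layer step is where the approaches diverge, and where your proposal has a genuine gap. You decompose $v_{2,j}^\top h = A_j + B_j$ with $B_j = \sum_k (v_{2,j})_k\,\ft_1^\perp(v_{1,k}^\top \x)$, and propose a Lindeberg/smart-path interpolation replacing $B_j$ by a matching Gaussian $B_j^G$. The obstacle is that the summands $\ft_1^\perp(v_{1,k}^\top\x)$ for different $k$ are \emph{not} independent — they are all deterministic functions of the same Gaussian vector $\x$, merely pairwise near-orthogonal by Hermite orthogonality. Near-orthogonality is a statement about second moments only; a Lindeberg swap or a Gaussian interpolation needs either genuine independence (to swap one summand at a time holding the others fixed), or an explicit coupling of $B_j$ to $B_j^G$ compatible with the fixed $A_j$, and neither falls out of your decomposition. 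You explicitly flag this as ``the main technical obstacle'' but only gesture at a fix via the smallness of $(v_{2,j})_k$ plus Hermite orthogonality; that controls the second moment of $B_j$, not the Wasserstein distance to a Gaussian. The paper resolves precisely this issue by abandoning interpolation in favour of the Nourdin--Peccati quantitative CLT (\cref{thm:clt}), which bounds the distributional distance of $F = v^\top\f_1(W^1\x)$ to a Gaussian directly by $\E\abs{\braket{DF,-DL^{-1}F}-\E F^2}$ via Malliavin calculus, and then controls that quantity with a chaos-order and index-pairing case analysis (\cref{lem:diag}); the dependence between summands is handled structurally rather than by any independence surrogate. Similarly, your treatment of the cross-term $\E\f_1^\perp(w_{1,i}^\top\x)B_j$ and the off-diagonal joint Gaussianity of $(v_{2,i}^\top h, v_{2,j}^\top h)$ is asserted without a mechanism; the paper supplies this via the weak-correlation lemma (\cref{lemma:weak_corr}), which is what lets the multi-factor expectations approximately factorize, and via the explicit Malliavin-derivative expansions in \cref{lem:off_diag,lem:cross_terms,lem:remainder_term}. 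To repair your argument along the lines sketched you would essentially have to re-derive the Malliavin--Stein bound; as written, the Gaussian-equivalence step is not justified.
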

\begin{remark}[Comparison]
    The approach we take here is somewhat different from previous works~\cite{schroder2023deterministic,Fan2020SpectraOT,2306.05850} on (multi-layer) random feature models. In these previous results, the deterministic equivalent for the resolvent was obtained using primarily the randomness of the weights, resulting in relatively stringent assumptions (Gaussianity and independence between layers). This layer-by-layer recursive approach resulted in a deterministic equivalent for the resolvent which is \emph{consistent} with a sample covariance matrix with linearized population covariance. Here we take the direct approach of considering feature models with arbitrary structured features, and then linearize the population covariances in a separate step for random features.  
\end{remark}

\subsection{Proof of~\Cref{theo lin}}
    We sketch the main tools used in the argument and we refer the reader to~\cref{prop:lin_1_layer} and~\cref{thm:lin_app} for the formal proof.
    In the proof, we crucially rely on the theory of Wiener chaos expansion and Stein's method (see~\cite{nourdin2012normal}). Gaussian Wiener chaos is a generalization of Hermite polynomial expansions, which previously have been used for approximate linearization \cite{Fan2020SpectraOT,schroder2023deterministic} in similar contexts. The basic idea is to decompose random variables $F=F(\x)$ which are functions of the Gaussian random vector $\x$, into pairwise uncorrelated components 
    \begin{equation}
        F = \E F + \sum_{p\ge 1} I_p\Bigl(\frac{\E D^p F}{p!}\Bigr), 
    \end{equation}
    where $I_p$ is a so called \emph{multiple integral} (generalizing Hermite polynomials) and $D^p$ is the $p$-th Malliavin derivative. By applying this to the one-layer quantities $\f_1(w^\top \x),\f_1(u^\top \x)$ we obtain, for instance 
    \begin{equation}
    \begin{aligned}
        &\E \f_1(w^{\top} \x) \f_1(v^{\top} \x) \\
        &\quad = \sum_{p \geq 1} \frac{1}{p!} \E \f_1^{(p)}(w^\top \x) \E \f_1^{(p)}(u^\top \x) \braket{w, v}^p,
    \end{aligned}
    \end{equation}
    which for independent $w,v$ we can truncate after $p=1$, giving rise to the linearization. 
    
    For the multi-layer case we combine the chaos expansion with Stein's method in order to prove \emph{quantitative central limit theorems} of the type 
    \begin{equation}
        d_W(F,N) \lesssim \E \abs{\E F^2 - \braket{DF,-DL^{-1}F}}
    \end{equation}
    for the Wasserstein distance $d_W$, where
    \begin{equation}
        F := w^\top \phi_1(W\x),\quad  N\sim \mathcal N(0,\E F^2),
    \end{equation}
    and $L^{-1}$ is the pseudo-inverse of the \emph{generator of the Ornstein–Uhlenbeck semigroup}.

\begin{figure}[t]
    \centering
    \includegraphics{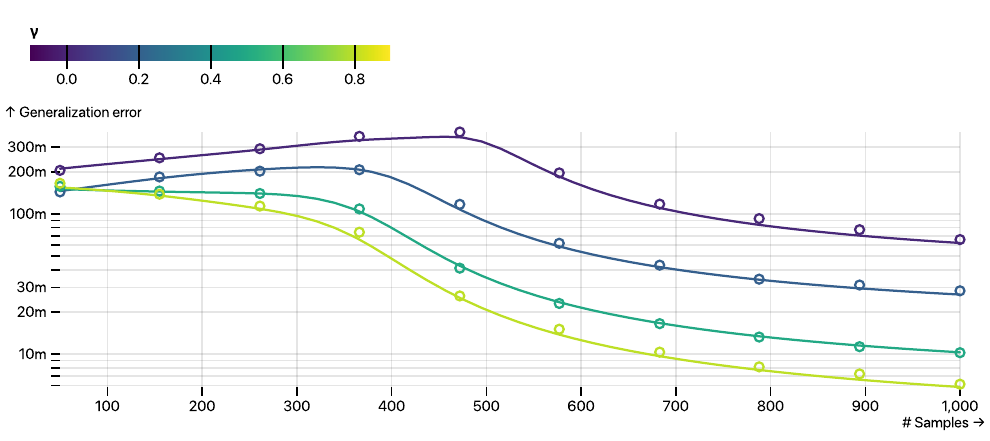}
    \caption{Test error for a target $\theta_*^\top \tanh(W_* x)$, when learning with a four-layer Gaussian rainbow network with feature map $\f(x)=\tanh(W_3\tanh(W_2\tanh(W_1x)))$. All width were taken equal to the input dimension $d$, and the regularization employed is $\lambda=10^{-4}$. The student weights are correlated across layers, with $W_1=W_2$, and the covariance $C_3$ of $W_3$ depending on $W_1$ as $C_3=(W_1W_1^\top+\sfrac{1}{2}\mathbb{I}_d)^{-1}$. Target/student correlations are also present, with $\check{C}_1=\sfrac{1}{2}\mathbb{I}_d$. The covariances $C_1,C_2,\tilde{C}_1$ were finally taken to have a spectrum with power-law decay, parametrized by $\gamma$. All details are provided in App. \ref{app: numerics}. Solid lines: theoretical prediction of Theorem \ref{thm genRMT informal}, in conjunction with the closed-form expression for the features population covariance of Definition \ref{def: linearized_covs}. Circles : numerical simulations in $d=1000$. 
    }
    \label{fig:PL}
\end{figure}

\subsection{Discussion of Theorem \ref{theo lin}}

The population covariances thus admit simple approximate closed-form expressions as linear combinations of products of relevant weight matrices. These expressions generalize similar linearizations introduced in \cite{Cui2023,schroder2023deterministic, bosch2023precise, Fan2020SpectraOT,2306.05850} for the case of weights which are both unstructured and independent, and iteratively build upon earlier results for the two-layer case developed in \cite{Mei2019TheGE, Gerace2020GeneralisationEI, Goldt2021TheGE, Hu2020UniversalityLF}.  In fact, the expressions leveraged in these works can be recovered as a special case for $C_\ell=\tilde{C}_\ell=\mathbb{I}_{p_\ell}$ (isotropic weights) and $\check{C}_\ell=0$ (independence). Importantly, note that possible correlation between weights across different layers do not enter in the reported expressions. In practice, we have observed in all probed settings the test error predicted by Theorem \ref{thm genRMT informal}, in conjunction with the linearization formulae for the features covariance, to match well numerical experiments. 

Figure \,\ref{fig:PL} illustrates a setting where many types of weights correlations are present. It represents the learning curves of a four-layer Gaussian rainbow network with feature map $\tanh(W_3\tanh(W_2\tanh(W_1\x)))$, learning from a two-layer target $\theta_*^\top \tanh(V\x)$. To illustrate our result, we consider both target/student correlations $\wc{C}_1=\sfrac{1}{2}\mathbb{I}_d$, and inter-layer correlations $W_1=W_2$. We furthermore took the covariance of the third layer to depend on the weights of the first layer, $C_3=(W_1W_1^\top+\sfrac{1}{2}\mathbb{I}_d)^{-1}$. In order to have structured weights, the covariances $\wt{C}_1, C_1,C_2$ were chosen to have a power-law spectrum. All details on the experimental details and parameters are exhaustively provided in Appendix \ref{app: numerics}. Note that despite the presence of such non-trivial correlations, the theoretical prediction of Theorem \ref{thm genRMT informal} using the linearized closed-form formulae of Def.~\ref{def: linearized_covs} for the features covariances (solid lines) captures compellingly the test error evaluated in numerical experiment (crosses).

Finally, we note that akin to \cite{schroder2023deterministic}, as a consequence of the simple linear recursions, it follows that the Gaussian rainbow network feature map $\f$ shares the same second moments, and thus by Theorem \ref{thm genRMT informal} the same test error, as an equivalent \textit{linear stochastic} network $\f^{\mathrm{lin}}=\psi_L \circ\dots\circ\psi_1$, with
\begin{align}
\label{eq:linearized-layer}
    \psi_\ell(x)=\kappa^1_\ell W_\ell x+\kappa^*_\ell\xi_\ell
\end{align}
where $\xi_\ell\sim\mathcal{N}(0,\mathbb{I}_{p_\ell})$ a stochastic noise. This equivalent viewpoint has proven fruitful in yielding insights on the implicit bias of RFs \cite{schroder2023deterministic, Jacot2020} and on the fundamental limitations of deep networks in the proportional regime \cite{Cui2023}. In the~\Cref{section lin grad} we push this perspective further, by heuristically finding that the linearization and Theorem \ref{thm genRMT informal} can also describe deterministic networks trained with gradient descent in the lazy regime.

\section{Linearizing trained neural networks}
\label{section lin grad}
\begin{figure}[t]
    \centering
    \includegraphics{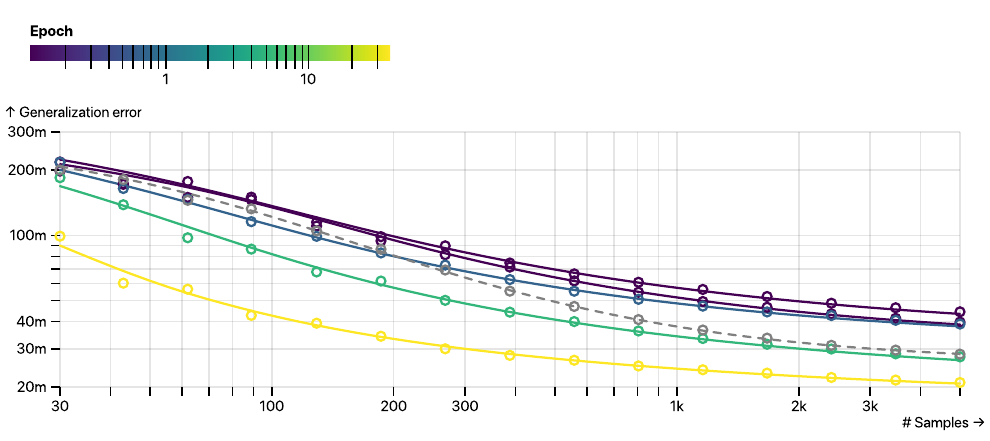}
    \caption{
    Test error when training the readout layer only of a relu-activated three-layer neural network during training, using the \texttt{Tensorflow} implementation of the Adam \cite{kingma2014adam} optimizer, over $120$ epochs with batch size $128$. (dashed): ridge regression. The data is sampled from a Gaussian distribution with mean and variance matching the distribution of MNIST images. In all training procedures, the regularization parameter has been numerically optimized. Solid lines represent the theoretical prediction of Theorem \ref{thm genRMT informal}, dots represent numerical experiments. For more details we refer to~\Cref{synth MNIST}. 
    }
    \label{fig:inductive_bias}
\end{figure}

The previous discussion addressed feature maps associated to random Gaussian networks. However, note that the linearization itself only involves products of the weights matrices, and coefficient depending on weight covariances which can straightforwardly be estimated therefrom. The linearization \ref{def: linearized_covs} can thus be readily heuristically evaluated  for feature maps associated to deterministic \textit{trained} finite-width neural networks. As we discuss later in this section, the resulting prediction for the test error captures well the learning curves when re-training the readout weights of the network in a number settings. Naturally, such settings correspond to lazy learning regimes \cite{Jacot2020}, where the network feature map is effectively \textit{linear}, thus little expressive. However, these trained feature map, albeit linear, can still encode some inductive bias, as shown by \cite{Ba2022} for one gradient step in the shallow case. In this section, we briefly explore these questions for fully trained deep networks, through the lens of our theoretical results.

Fig.\,\ref{fig:inductive_bias} contrasts the test error achieved by linear regression (red), and regression on the feature map associated to a three-layer student at initialization (green) and after $3000$ epochs of end-to-end training using full-batch Adam \cite{kingma2014adam} at learning rate $10^{-4}$ and weight decay $10^{-3}$ over $n_0=1400$ training samples (blue). For all curves, the readout weights were trained using ridge regression, with regularization strength optimized over using cross-validation. Solid curves indicate the theoretical predictions of Thm. \ref{thm genRMT informal} leveraging the closed-form linearized formulae \ref{def: linearized_covs} for the features covariance. Interestingly, even for the deterministic trained network features, the formula captures the learning curve well. This observation temptingly suggests to interpret the feature map $\f(x)$ as the stochastic linear map
\begin{align}
 \label{eq: effective_linear_net}
 \f^g(x)=W_{\mathrm{eff.}}x+C_{\mathrm{eff.}}^{\sfrac{1}{2}}\xi
\end{align}
where $W_{\mathrm{eff.}}\in\R^{p\times d}$ is proportional to the product of all the weight matrices
\begin{align}
    W_{\mathrm{eff.}}=\left(\prod\limits_{\ell=1}^L \kappa^1_\ell\right) \hat{W}_L\hat{W}_{L-1}\dots \hat{W}_1 
\end{align}
and $\xi\sim\mathcal{N}(0,\mathbb{I}_p)$ is a stochastic noise colored by the covariance
\begin{align}
    \label{eq:effective_cov}
    C_{\mathrm{eff.}}\equiv & \sum\limits_{\ell=1}^{L-1}
    \left(\kappa^*_\ell\prod\limits_{s=\ell+1}^L
    \kappa_s^1
    \right)^2  \hat{W}_L\dots \hat{W}_{\ell+1}\hat{W}_{\ell+1}^\top\dots \hat{W}_L^\top\notag \\
                    & +  (\kappa^*_L)^2\mathbb{I}_{p_L}.
\end{align}
Note that the effective linear network \eqref{eq: effective_linear_net} simply corresponds to the composition of the equivalent stochastic linear layers \eqref{eq:linearized-layer}. A very similar expression for the covariance of the effective structured noise \eqref{eq:effective_cov} appeared in \cite{schroder2023deterministic} for the random case with unstructured and untrained random weights. The effective linear model \eqref{eq: effective_linear_net} affords a concise viewpoint on a deep finite-width non-linear network trained in the lazy regime. On an intuitive level, during training, the network effectively tunes the two matrices $W_{\mathrm{eff.}},C_{\mathrm{eff.}}$ which parametrize the effective model \eqref{eq: effective_linear_net}. Indeed, the interplay between these two matrices -- both depending on the weights $\hat{W}_{\ell}$ -- defines the inductive bias of the trained network in the high-dimensional regime, which ultimately determines the generalization properties of the network. To see this explicitly, consider the ridge regression problem on the effective linear features in \cref{eq: effective_linear_net}. Changing variables $\beta = \sfrac{C_{\mathrm{eff.}}^{\sfrac{1}{2}}\theta}{\sqrt{p}}$ and assuming for simplicity that $C_{\mathrm{eff.}}$ is invertible, this yields the following effective problem:\looseness=-1
\begin{align}
    \label{eq:def:erm_equiv}
    \underset{\beta\in\R^{\p}}{\min}~&\sum\limits_{i\in[\n]}\left(\y_{i}-\beta^\top(C_{\mathrm{eff.}}^{-\sfrac{1}{2}}W_{\mathrm{eff.}}\x_{i}+\xi_i)\right)^{2}\!\! +p\lambda\beta^{\top}C^{-1}_{\mathrm{eff.}}\beta.\notag
\end{align}
In this basis, the effective linear features has two components: an irreducible isotropic noise $\xi_{i}$ and a term $C^{-\sfrac{1}{2}}_{\mathrm{eff.}}W_{\mathrm{eff.}}\x_{i}$ controlling the (linear) representation of the training data. A key difference with respect to the deep unstructured case of \cite{schroder2023deterministic} is that the effective $\ell_2$-regularization is anisotropic.

For a (typically employed) random isotropic initialization, the initial network is equivalent to a unstructured dRF. In particular, the unstructured dRF inductive bias \cite{Jacot2020} is not aligned to the target, treating all directions equally. In terms of generalization, since the effective linear features are noisy, this implies that the initial generalization error is lower-bounded by the best ridge estimator \cite{schroder2023deterministic}. As the network is trained, the weights $\hat{W}_{\ell}$ adapt to the target, implying that even in the regime where the linearization in \cref{eq: effective_linear_net} holds, the effective linear problem \cref{eq: effective_linear_net} can regularize different directions adaptively, potentially outperforming the ridge regression baseline. The fact that the optimal regularization for ridge regression on linear feature maps might be anisotropic has been explored in detail in \cite{Wu2020OnTO}.\looseness=-1

The learning of beneficial inductive biases over training is illustrated by Fig.\,\ref{fig:inductive_bias} for synthetic data. Despite the fact that all represented feature maps are effectively just linear feature maps, they can still encode very different biases, yielding different phenomenology. In particular, remark that, when trained over a sufficient number of epochs, the trained feature map outperforms by ridge regression on the whole range of probed sample complexities -- suggesting the trained weights $W_{\mathrm{eff.}},C_{\mathrm{eff.}}$ learned some form of helpful inductive bias, and allow for a more performant linear model. A similar qualitative behaviour can also be observed in real data sets, as illustrated in Fig.\ref{fig:real_MNIST}.

\begin{figure}
    \centering
    \includegraphics{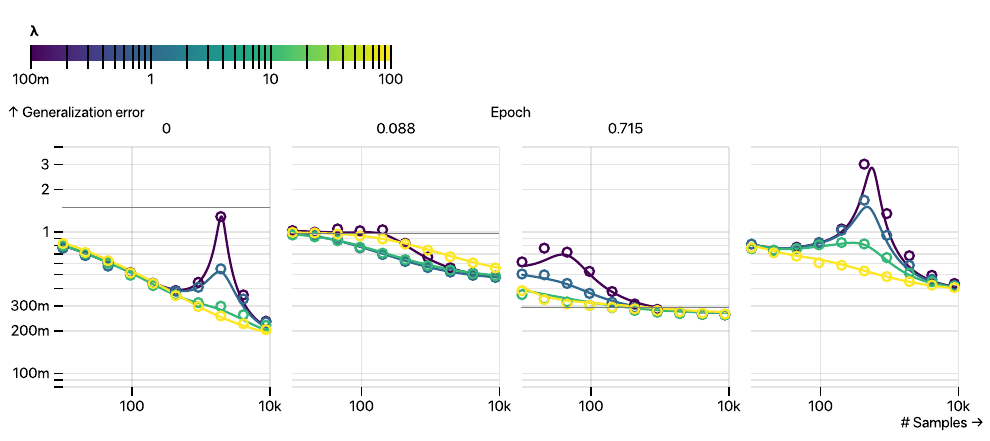}
    \caption{Test error when re-training the readout layer only of an Adam-optimized relu-activated three-layer neural network, trained on a regression task on MNIST. Labels are $+1$ (resp. $-1$) for even (resp. odd) digits. Solid lines represent the theoretical prediction of Theorem \ref{thm genRMT informal}, dots represent numerical experiments on the real dataset. Different colors indicate different reguarization strengths $\lambda$. Different panels correspond to different training times. All details are provided in App.~\ref{app:real_data}.}
    \label{fig:real_MNIST}
\end{figure}

\section{Concluding remarks}
\paragraph{Real data ---}We observe that the theoretical predictions of Theorem \ref{thm genRMT informal} also capture the learning curves of trained networks on some \textit{real} datasets, when retraining the readout only using ridge regression, provided the features covariances $\Omega,\Phi,\Psi$ are estimated from data. Fig.\,\ref{fig:real_MNIST} contrasts the theoretical characterization of Theorem \ref{thm genRMT informal} with numerical experiments on MNIST \cite{lecun1998gradient}, for a three-neural network optimized with Adam \cite{kingma2014adam}, revealing overall good agreement. All experimental details are reported in Appendix \ref{app:real_data}. Note that closely related observations have also been made in \cite{Loureiro2021CapturingTL}.

\paragraph{Limitations ---} Our results provides an insight in the inductive bias of trained deep rainbow networks. However, as discussed in \cite{guth2023rainbow}, this only captures a subset of neural networks. Understanding the boundaries of applicability of the Gaussian rainbow framework (and hence of our theory) is an interesting problem. A recent line of work investigating the properties of two-layer neural networks after a single step of training \cite{Ba2022, dandi2023twolayer, moniri2024theory, cui2024asymptotics} provides a first clue. These works show that with an aggressive learning rate the hidden-layer weights can be approximated by a spiked random matrix model. Investigating under which conditions the asymptotic performance is equivalent to a structured Gaussian model is an interesting venue for future research. A similar problem was studied in the context of structured inputs in \cite{pesce23a, Gerace2024}.
\section*{Acknowledgements}
We thank Florentin Guth, Nathana\"el Cuvelle-Magar, St\'ephane Mallat, Lenka Zdeborov\'a for fruitful discussions during the course of this project. We thank the Institut d'\'Etudes Scientifiques de Carg\`ese for the hospitality, where discussions for this project were held throughout the 2023 \emph{``Statistical physics
\& machine learning back
together again''} program organised by Vittorio Erba, Damien Barbier, Florent Krzakala, BL and Lenka Zdeborov\'a. BL acknowledges support from the \textit{Choose France - CNRS AI Rising Talents} program. DD is supported by ETH AI Center doctoral fellowship and ETH Foundations
of Data Science initiative. DS is supported by SNSF Ambizione Grant \texttt{PZ00P2\_209089}. HC acknowledges support from the Swiss National Science Foundation grant SMArtNet (grant number $212049$).

\bibliographystyle{unsrt}
\bibliography{biblio}
\newpage
\appendix
\section{Anisotropic asymptotic equivalents}
\label{appendix gen error}
Recall from~\Cref{ass:data+features} that we assume that the feature matrices $X,Z$ are Lipschitz-concentrated in the following sense (considering the vectors space of rectangular matrices equipped with the Frobenius norm): 
\begin{definition}[Lipschitz concentration]
    We say that a random vector $x$ in a normed vector space $\mathcal X$ is Lipschitz-concentrated with constant $\mu$ if there exists a constant $C$ such that for all $1$-Lipschitz functions $f\colon \mathcal X\to\R$ it holds that
    \begin{equation}
        \Prob*{\abs{f(x)-\E f(x)}\ge t} \le C\exp\Bigl(-\frac{t^2}{C\mu^2}\Bigr).
    \end{equation}
\end{definition}
A sufficent condition for Lipschitz concentration is that that the columns $x_i=\f(\x_i)$ are Lipschitz functions of Gaussian random vectors $\x_i$ of bounded covariance $\Omega_0:=\E \x_i\x_i^\top$, c.f.~\Cref{remark suff cond}. Indeed, let $\wt\f(\mathsf{g}):=\f(\sqrt{\Omega_0}\mathsf{g})$ and consider standard Gaussian vectors $\mathsf g_i,\ldots, \mathsf g_n$. We recall that standard Gaussian random vectors are Lipschitz-concentrated with a constant which is independent of the dimension:
\begin{theorem}[Gaussian concentration]\label{thm:gaussian_concentration}
    Let $\mathsf g$ be a random vector with independent standard Gaussian entries. Then $\mathsf g$ is Lipschitz-concentrated with constant $\mu=1$.
\end{theorem}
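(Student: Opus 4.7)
The plan is to use the classical \emph{Herbst argument}, combining the Gaussian logarithmic Sobolev inequality with a differential inequality for the moment generating function. First, by mollification any $1$-Lipschitz $f\colon\R^n\to\R$ is a uniform limit of smooth $1$-Lipschitz functions whose gradient satisfies $\norm{\nabla f}\le 1$ almost everywhere, so after passing to the limit at the end it suffices to treat smooth $f$ with $\norm{\nabla f}_\infty\le 1$.

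The key input is the Gaussian logarithmic Sobolev inequality of Gross: for smooth positive $h$ and $\gamma$ the standard Gaussian measure on $\R^n$,
\begin{equation*}
    \int h^2\log h^2\,\dif\gamma - \Bigl(\int h^2\,\dif\gamma\Bigr)\log\int h^2\,\dif\gamma \;\le\; 2\int\norm{\nabla h}^2\,\dif\gamma.
\end{equation*}
Applying this with $h=\exp(\lambda f/2)$ and writing $\psi(\lambda):=\E e^{\lambda f(\mathsf g)}$, the bound $\norm{\nabla f}\le 1$ yields after a short computation
\begin{equation*}
    \lambda\psi'(\lambda)-\psi(\lambda)\log\psi(\lambda) \;\le\; \tfrac{\lambda^2}{2}\psi(\lambda),\qquad \lambda>0,
\end{equation*}
which is precisely the statement that $F(\lambda):=\lambda^{-1}\log\psi(\lambda)$ satisfies $F'(\lambda)\le 1/2$. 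Since $F(\lambda)\to\E f(\mathsf g)$ as $\lambda\to 0^+$ (by L'Hôpital), integrating gives $\log\psi(\lambda)\le\lambda\E f+\lambda^2/2$.

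A Chernoff bound then produces
\begin{equation*}
    \Prob*{f(\mathsf g)-\E f(\mathsf g)\ge t} \;\le\; \inf_{\lambda>0} e^{-\lambda t + \lambda^2/2} \;=\; e^{-t^2/2},
\end{equation*}
and applying the same argument to $-f$ together with a union bound yields the two-sided estimate $\Prob(\abs{f(\mathsf g)-\E f(\mathsf g)}\ge t)\le 2e^{-t^2/2}$, which is exactly the concentration inequality of~\Cref{def:lip_conc} with $C=2$ and $\mu=1$.

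The only genuine ingredient is the Gaussian log-Sobolev inequality itself; everything else is algebraic manipulation or a routine approximation. The log-Sobolev inequality admits several standard proofs---Gross's original tensorisation from the two-point Bernoulli space, the Bakry-Émery $\Gamma_2$-criterion applied to the Ornstein-Uhlenbeck semigroup, or a direct heat-semigroup interpolation identity---each of which produces the sharp constant. Thus the main obstacle, if one wishes to view it as such, is just invoking this deep fact, after which the Herbst argument delivers the dimension-free sub-Gaussian tail with the claimed constant $\mu=1$.
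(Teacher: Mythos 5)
Your proof is correct. The Herbst argument built on Gross's Gaussian logarithmic Sobolev inequality is one of the standard, self-contained derivations of dimension-free Gaussian concentration: the computation going from the entropy inequality to the differential inequality $F'(\lambda)\le\tfrac12$ for $F(\lambda)=\lambda^{-1}\log\psi(\lambda)$ is carried out correctly, the boundary value $F(0^+)=\E f$ is justified, and the Chernoff optimisation gives the sharp sub-Gaussian tail $2e^{-t^2/2}$, which matches~\Cref{def:lip_conc} with $C=2$, $\mu=1$. The mollification step is the right way to pass from a.e.\ differentiable Lipschitz $f$ to smooth $f$.

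For comparison: the paper does not actually supply a proof of this theorem at all --- it is stated as a recalled classical fact (variously attributed to Borell, Tsirelson--Ibragimov--Sudakov, or proved via log-Sobolev or the Gaussian isoperimetric inequality). So there is no argument in the paper for your proof to diverge from. Among the standard routes, the log-Sobolev/Herbst route you chose has the virtues of being semigroup-friendly and of generalising immediately to any measure satisfying a log-Sobolev inequality (e.g.\ strongly log-concave measures via Bakry--\'Emery), whereas the isoperimetric route gives the sharper tail $\Phi(-t)$ but requires the harder Gaussian isoperimetry theorem. Either is acceptable as a proof of the statement the paper needs; your write-up would serve as a complete substitute.
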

Therefore we can stack the Gaussian vectors $\mathsf{g}_1,\ldots,\mathsf{g}_n$ into $\mathsf g\in\R^{np}$ and write $X=X(\mathsf g)=(\wt\f(\mathsf g_1), \ldots, \wt\f(\mathsf g_n))$. Then $X$ is Lipschitz-concentrated with dimension-independent constant by~\Cref{thm:gaussian_concentration}  since for any Lipschitz $f\colon\R^{p\times n}\to\R$ it holds that $\mathsf g\mapsto f(X(\mathsf g))$ is Lipschitz due to 
\begin{equation}
    \abs{f(X(\mathsf g))-f(X(\mathsf g'))}^2 \le \norm{X(\mathsf g)-X(\mathsf g)'}_F^2 = \sum_i\norm{\wt\f(\mathsf g_i)-\wt\f(\mathsf g_i') }^2 \lesssim \sum_i \norm{\mathsf g_i-\mathsf g_i'}^2=\norm{\mathsf g-\mathsf g'}^2.
\end{equation}

\subsection*{Resolvent concentration}
It will be useful to introduce also the resolvent of the associated Gram matrix $X^\top X/p$ which is given by
\begin{equation}
    \wc G = \Bigl(\frac{X^\top X}{p}+\lambda\Bigr)^{-1}.
\end{equation}
The two resolvents are related by the identity
\begin{equation}\label{res identity XGX = 1-lambda cG}
    \frac{X^\top G X}{p} = \frac{1}{p}X^\top \Bigl( \frac{XX^\top}{p} +\lambda\Bigr)^{-1} X = \frac{X^\top X}{p} \Bigl( \frac{X^\top X}{p} +\lambda\Bigr)^{-1} = 1 - \lambda \wc G.
\end{equation}
Both resolvents $G,\wc G$ are Lipschitz-continuous with respect to the Frobenius norm due to the resovlent identity
\begin{equation}
    \begin{split}
        \Bigl(\frac{XX^\top}{p}+\lambda\Bigr)^{-1} - \Bigl(\frac{YY^\top}{p} + \lambda\Bigr)^{-1} &=\Bigl(\frac{XX^\top}{p}+\lambda\Bigr)^{-1} \frac{(Y-X)Y^\top +X(Y-X)^\top}{p} \Bigl(\frac{YY^\top}{p} + \lambda\Bigr)^{-1}\\
    \end{split}
\end{equation}
and the bound
\begin{equation}
    \norm{GX}\le \sqrt{p\norm{G}+p\lambda\norm{G^2}} \le \sqrt{2p/\lambda},
\end{equation}
implying
\begin{equation}\label{res Lipschitz}
    \norm*{G-G'}_F \le 2 \frac{\mu}{\lambda^{3/2}p^{1/2}}\norm{X-Y}_F, \quad G:=\Bigl(\frac{XX^\top}{p}+\lambda\Bigr)^{-1},\quad G':= \Bigl(\frac{YY^\top}{p} + \lambda\Bigr)^{-1}.
\end{equation}
Therefore we obtain that
\begin{equation}\label{conc res}
    \abs{\braket{A(G-\E G)}} \lesssim \frac{\braket{\abs{A}^2}^{1/2}}{\lambda^{3/2}p}, \quad \abs{\braket{A(\wc G-\E \wc G)}} \lesssim \frac{\braket{\abs{A}^2}^{1/2}}{\lambda^{3/2}p^{1/2}n^{1/2}}
\end{equation}
from~\Cref{thm:gaussian_concentration},
\begin{equation}
    \abs{\braket{A (G-G')}} \le \frac{1}{p} \norm{A}_F \norm{G-G'}_F \le \frac{1}{p} \norm{A}_F \norm{G-G'}_F \le \frac{2\braket{\abs{A}^2}^{1/2}}{\lambda^{3/2}p}\norm{X-Y}_F.
\end{equation}
and the analogous estimate for $\wc G-\wc G'$.
An important special case of~\cref{conc res} is $A$ being rank-one which yields
\begin{equation}\label{conc res xy}
    \abs{x^\top G y - \E x^\top G y} \lesssim \frac{\norm{x}\norm{y}}{\lambda^{3/2}p^{1/2}}, \quad \abs{x^\top \wc G y - \E x^\top \wc G y} \lesssim \frac{\norm{x}\norm{y}}{\lambda^{3/2}p^{1/2}}
\end{equation}

\subsection*{Quadratic form and norm concentration}
The other important concentration result needed in the proof of~\Cref{thm genRMT informal} is the concentration of quadratic forms, see e.g.\ Theorem 2.3 in~\cite{adamczak2015note}.
\begin{theorem}
\label{thm:hanson_wright}
    If $x$ is a random vector of mean zero satisfying Lipschitz concentration with constant $\mu$, and $A$ is a deterministic matrix, then
    \begin{equation}
        \abs{x^\top A x - \E x^\top A x} \lesssim \mu^2 \norm{A}_F.
    \end{equation}
\end{theorem}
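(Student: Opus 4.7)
The plan is to reduce the Frobenius-norm bounds to entrywise estimates on the matrices $\Omega_1-\Omega_1^{\mathrm{lin}}$, $\Psi_\ell-\Psi_\ell^{\mathrm{lin}}$, $\Phi_\ell-\Phi_\ell^{\mathrm{lin}}$ for $\ell\in\{1,2\}$, and then bound those entries via two ingredients. The first is the Wiener chaos (Hermite) expansion of Gaussian functionals combined with Hanson--Wright concentration (\Cref{thm:hanson_wright}), which handles every one-layer quantity. The second, used only for the two-layer step where the input to the outer layer ceases to be Gaussian, is the quantitative Stein--Malliavin CLT in Wasserstein distance flagged in the sketch of~\Cref{theo lin}. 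Since each covariance matrix has $O(p_\ell^2)$ entries, a Frobenius bound of $O(1)$ corresponds to an entrywise error of order $p_\ell^{-1}$ off the diagonal and $p_\ell^{-1/2}$ on the diagonal.

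\textbf{One-layer step ($\Omega_1,\Psi_1,\Phi_1$).} For rows $w_i,w_j$ of $W_1$, the pair $(w_i^\top\x,w_j^\top\x)$ is centered jointly Gaussian with variances $\sigma_i^2=w_i^\top\Omega_0 w_i$, $\sigma_j^2=w_j^\top\Omega_0 w_j$ and covariance $\rho_{ij}=w_i^\top\Omega_0 w_j$. The Mehler-type chaos expansion reproduced in the sketch gives
\begin{equation*}
    (\Omega_1)_{ij}=\E\f_1(w_i^\top\x)\f_1(w_j^\top\x)=\sum_{p\ge 1}\frac{1}{p!}\E\f_1^{(p)}(w_i^\top\x)\,\E\f_1^{(p)}(w_j^\top\x)\,\rho_{ij}^p,
\end{equation*}
the $p=0$ term dropping by the centering of $\f_1$. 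The $p=1$ contribution equals $\E[\f_1'(w_i^\top\x)]\E[\f_1'(w_j^\top\x)]\rho_{ij}$, which matches the off-diagonal part of $\Omega_1^{\mathrm{lin}}=(\kappa_1^1)^2 W_1\Omega_0 W_1^\top$ up to the discrepancy between the row-dependent coefficient $\E[\f_1'(w_i^\top\x)]$ (a smooth function of $\sigma_i^2$) and the uniform $\kappa_1^1=\E[\f_1'(N_1)]$ with $\E N_1^2=r_1=\Tr(C_1\Omega_0)$. Assumption (d) of~\Cref{corr rainbow} applied to $w_i^\top\Omega_0 w_j$ gives $|\sigma_i^2-r_1|\lesssim p_1^{-1/2}$ and, by row independence, $|\rho_{ij}|\lesssim p_1^{-1/2}$ for $i\ne j$. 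The coefficient mismatch then contributes $O(p_1^{-1})$ per off-diagonal entry, the $p\ge 2$ tail contributes $|\rho_{ij}|^2\lesssim p_1^{-1}$, and on the diagonal the identity $(\kappa^*_1)^2=\E\f_1(N_1)^2-r_1(\kappa_1^1)^2$ is designed so that the residual reduces to a first-order Taylor expansion in $\sigma_i^2-r_1$, of size $O(p_1^{-1/2})$. Squaring and summing over $p_1$ diagonal and $p_1^2$ off-diagonal entries yields $\norm{\Omega_1-\Omega_1^{\mathrm{lin}}}_F^2\lesssim 1$. The cases $\Psi_1,\Phi_1$ are identical after substituting the rows of $V_1$, or a mixed pair $(w_i,v_j)$ with cross-covariance $\wc C_1$ in the cross-term, and identifying the $\wt\kappa$ and $\wc\kappa$ constants accordingly.

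\textbf{Two-layer step ($\Psi_2,\Phi_2$).} The entries read $(\Psi_2)_{ij}=\E[\ft_2(F_i)\ft_2(F_j)]$ with $F_i:=v_i^\top\ft_1(V_1\x)$. Since $(F_i,F_j)$ is a nonlinear function of Gaussian $\x$ and fails to be jointly Gaussian, we introduce a jointly centered Gaussian surrogate $(G_i,G_j)$ of matching covariance and estimate the replacement error via the Wasserstein distance of the two laws composed with the Lipschitz $\ft_2\otimes\ft_2$. The Nourdin--Peccati bound
\begin{equation*}
    d_W(F_i,\cN(0,\E F_i^2))\lesssim\E\bigl|\E F_i^2-\langle DF_i,-DL^{-1}F_i\rangle\bigr|
\end{equation*}
is evaluated by computing $DF_i=\sum_k v_{i,k}\ft_1'((V_1\x)_k)(V_1^\top e_k)$ and applying $-DL^{-1}$ via the chaos decomposition of $F_i$; the resulting inner product is a quadratic form in the rows of $V_1$ whose deviation from $\E F_i^2$ is $O(p_2^{-1/2})$ by Assumption (d) of~\Cref{corr rainbow}, conditionally on $V_1$ and then uniformly in $V_1$. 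A joint-law refinement provides the same rate for the pair $(F_i,F_j)$, giving $|(\Psi_2)_{ij}-\E[\ft_2(G_i)\ft_2(G_j)]|\lesssim p_2^{-1}$. The one-layer expansion applied to the jointly Gaussian $(G_i,G_j)$ then reproduces $\Psi_2^{\mathrm{lin}}$ up to a further $O(p_2^{-1})$ per entry, with the variance $r_2=\Tr[\wt C_2\Psi_1^{\mathrm{lin}}]$ inherited from the first-layer bound on $\Psi_1$. The case $\Phi_2$ is analogous: here the student factor $\f(\x)=\f_1(W_1\x)$ is already Gaussian in $\x$ (since $L=1$), so only the $\ft$-side requires the Stein--Malliavin replacement.

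\textbf{Main obstacle.} The delicate point is the quantitative CLT at the right rate: a per-entry bound of $O(p_2^{-1})$ in absolute value (equivalently $O(p_2^{-1/2})$ in Wasserstein), sharp enough that the sum of squares over $p_2^2$ entries stays $O(1)$ rather than diverging. This demands explicit chaos-wise control of $\langle DF_i,-DL^{-1}F_i\rangle$ through the Ornstein--Uhlenbeck semigroup, and a joint-CLT refinement matching the off-diagonal covariance of $(F_i,F_j)$ to the same order. The remaining bookkeeping---propagating the one-layer Frobenius error into the recursive definition of $r_2$, absorbing the row-wise coefficient mismatches into the $(\kappa^*)^2$ terms, and squaring-and-summing entrywise---is routine under the uniform spectral and concentration conditions of~\Cref{corr rainbow}.
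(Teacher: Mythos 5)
The statement you were asked to prove is the quadratic-form concentration bound of \Cref{thm:hanson_wright}: for a mean-zero vector $x$ that is Lipschitz-concentrated with constant $\mu$ and a deterministic matrix $A$, one has $\abs{x^\top A x - \E x^\top A x}\lesssim \mu^2\norm{A}_F$. Your proposal never addresses this inequality. Instead it sketches a proof of the covariance-linearization result (\Cref{theo lin}, i.e.\ \Cref{prop:lin_1_layer} and \Cref{thm:lin_app}), bounding $\norm{\Omega_\ell-\Omega_\ell^{\mathrm{lin}}}_F$, $\norm{\Psi_\ell-\Psi_\ell^{\mathrm{lin}}}_F$, $\norm{\Phi_\ell-\Phi_\ell^{\mathrm{lin}}}_F$ via chaos expansions and the Stein--Malliavin CLT. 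That is a different theorem, and worse, your argument explicitly invokes \Cref{thm:hanson_wright} as an ingredient (``Hanson--Wright concentration''), so read as a proof of that very statement it would be circular.

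For the statement actually at hand, note that the paper does not prove it either: it is quoted from the literature (Theorem~2.3 of the cited note of Adamczak on Hanson--Wright inequalities for random vectors with the convex/Lipschitz concentration property). A self-contained proof cannot proceed by chaos expansions, since $x$ is only assumed Lipschitz-concentrated, not Gaussian or a function of a Gaussian; the standard route is to treat the map $x\mapsto x^\top A x$, which is not Lipschitz globally, by combining the concentration hypothesis with a truncation or decoupling step: one controls the ``linear'' part $x\mapsto x^\top A y$ for fixed $y$ (Lipschitz with constant $\norm{A y}$), uses concentration of $\norm{x}$ to restrict to an event where the quadratic map is effectively Lipschitz with constant of order $\norm{A}\,\E\norm{x}+\norm{A}_F$, and then removes the spectral-norm term by a decoupling/moment-generating-function argument to reach the stated $\mu^2\norm{A}_F$ bound. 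None of these steps appear in your proposal, so there is a genuine gap: the claimed statement is simply not proved.
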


Finally we need some upper bound on the operator norm of $X/\sqrt{p}$ which can be obtained standard $\epsilon$-net arguments,
\begin{equation}\label{XX norm bound}
    \norm*{\frac{XX^\top}{n}- \Omega}\prec \frac{p}{n},
\end{equation}
see e.g.\ Remark~ 5.40 in~\cite{1011.3027}.

\subsection*{Leave-one-out identities}
Define the leave-one-out resolvent $G_{-i}=(\lambda+p^{-1}\sum_{j\ne i}x_j x_j^\top)^{-1}$ for which we have the identity
\begin{equation}\label{loo identity}
    \begin{split}
        G &= G_{-i} - \frac{1}{p}\frac{G_{-i}x_i x_i^\top G_{-i}}{1+x_i^\top G_{-i} x_i/p} = G_{-i} - \lambda\frac{G_{-i}x_i x_i^\top G_{-i}}{p}\wc G_{ii} \\
        G x_i &= G_{-i} x_i \Bigl(1- \frac{1}{p}\frac{ x_i^\top G_{-i}x_i}{1+x_i^\top G_{-i} x_i/p}\Bigr)= \frac{G_{-i} x_i}{1+x_i^\top G_{-i} x_i/p} = \lambda \wc G_{ii} G_{-i}x_i
    \end{split}
\end{equation}
where the denominators can be simplified using
\begin{equation}
    -\frac{1}{1+x_i^\top G_{-i}x_i/p}=\frac{x_i^\top G_{-i}x_i}{1+x_i^\top G_{-i}x_i/p} -1 =\frac{x_i^\top G x_i}{p} -1= -\lambda (\wc G)_{ii}
\end{equation}
due to~\eqref{res identity XGX = 1-lambda cG}.

\subsection*{Anisotropic Marchenko-Pastur Law}
We are now ready to prove~\Cref{thm MP}, the anisotropic Marchenko-Pastur Law. In the comparable regime from~\Cref{thm genRMT informal} we will show that 
\begin{equation}\label{eq MP comp}
    \abs{\braket{[G(\lambda)-M(\lambda)]A}}\prec\frac{\braket{\abs{A}^2}^{1/2}}{p\lambda^3} \Bigl(1+\frac{p}{n}+\frac{n}{p}\Bigr).
\end{equation}
\begin{proof}[Proof of~\Cref{thm MP}]
For the resolvent $G$ we obtain the equation
\begin{equation}
    \begin{split}
        I &= \frac{\lambda}{p}\sum_i \Bigl( (\E \wc G_{ii}) \E G_{-i} \Omega + \E (\wc G_{ii}-\E \wc G_{ii}) G_{-i} x_i x_i^\top\Bigr) + \lambda \E G \\
        &=  \E G \Bigl(\lambda\frac{n}{p}\braket{\E\wc G}  \Omega + \lambda\Bigr) + \frac{\lambda}{p} \sum_i \Bigl(\braket{\E \wc G} (\E G_{-i}-\E G) \Omega  + \E( \wc G_{ii}-\E \wc G_{ii})G_{-i} x_i x_i^\top\Bigr) \\
    \end{split}
\end{equation}
so that
\begin{equation}
    \E G = \Bigl(\lambda\frac{n}{p}\braket{\E\wc G}\Omega + \lambda\Bigr)^{-1} + \frac{\lambda}{p} \sum_i \Bigl(\braket{\E \wc G} (\E G_{-i}-\E G) \Omega  + \E( \wc G_{ii}-\E \wc G_{ii})G_{-i} x_i x_i^\top\Bigr) \Bigl(\lambda\frac{n}{p}\braket{\E\wc G}\Omega + \lambda\Bigr)^{-1}.
\end{equation}
Using the bounds
\begin{equation}
    \norm{G_{-i} x_i x_i^\top-\E G_{-i} x_i x_i^\top}_F \le \norm{G_{-i} x_i x_i^\top - G_{-i}\Omega }_F + \norm{ (G_{-i}- \E_{-i} G_{-i})\Omega}_F \prec \frac{1}{\lambda} + \frac{1}{p^{1/2}\lambda^{3/2}},
\end{equation}
\begin{equation}
    \norm{\E G_{-i} - \E G}_F = \lambda\abs{\wc G_{ii}}\norm*{\frac{G_{-i}x_i x_i^\top G_{-i}}{p}}_F \prec \frac{1}{p}\Bigl(\norm{G_{-i}\Omega G_{-i}}_F + \norm{G_{-i} (x_ix_i^\top-\Omega)G_{-i} }_F\Bigr)\prec \frac{1}{p^{1/2}\lambda^2}
\end{equation}
and $\abs{\wc G_{ii}-\E \wc G_{ii}} \prec \frac{1}{p^{1/2}\lambda^{3/2}}$ from~\Cref{conc res xy}
we thus obtain
\begin{equation}
    \norm*{\E G(\lambda) - M(\lambda,\braket{\E\wc G})}_F \prec \frac{n}{p^{3/2}\lambda^3},\quad M(\lambda,m):=\Bigl( \lambda\frac{n}{p}m\Omega + \lambda\Bigr)^{-1}.
\end{equation}

Note that while $M(\lambda,\braket{\E \wc G})$ is a deterministic matrix, it still depends on the expected trace of $\wc G$ explicitly. However, we claim that
\begin{equation}\label{scalar claim}
    \abs{m-\braket{\E\wc G}} \lesssim \frac{m}{p\lambda^3},
\end{equation}
proving
\begin{equation}\label{local law G}
    \norm{\E G - M}_F \lesssim \frac{1}{p^{1/2}\lambda^{3/2}} + \frac{n}{p^{3/2}\lambda^3} + \norm{M(\lambda,\braket{\E\wc G})-M(\lambda,m)}_F \lesssim \frac{1}{p^{1/2}\lambda^{3}}\Bigl(1+\frac{n}{p}+\frac{p}{n}\Bigr).
\end{equation}
Now~\Cref{eq MP comp} follows directly together with the concentration estimate~\Cref{conc res}. 
\end{proof}

\subsection*{Multi-Resolvent Deterministic Equivalents}
The key for proving~\Cref{thm genRMT informal} is extending the anisotropic Marchenko-Pastur to mutli-resolvent expressions, which we summarize in the following proposition. For simplicity we carry the precise error term in the comparable regime only in the first statement, the other ones being similar. 
\begin{proposition}\label{prop multi res}~
    \begin{enumerate}
        \item\label{GXZ} For any $A\in\R^{k\times p}$ we have\footnote{In a slight abuse of notation we use the $O(\cdots)$ notation in the sense of ``$\prec$''}
        \begin{equation}
            \frac{1}{\sqrt{kp}} \braket{GXZ^\top A} =\frac{\lambda m n}{\sqrt{kp}}\braket{M\Phi A} + \bigO*{\frac{n}{k^{1/2}p^{3/2}\lambda^3}\Bigl(1+\frac{n}{p}+\frac{p}{n}\Bigr)}
        \end{equation}
        \item\label{GOmG} For any $A\in\R^{p\times p}$ we have more generally
        \begin{subequations}
            \begin{align}
                \braket{A G \Omega G} & = \frac{\braket{ AM \Omega M}}{1- \frac{n}{p}(m\lambda)^2 \braket{\Omega M \Omega M}} + \bigO*{\frac{\braket{\abs{A}^2}^{1/2}}{p\lambda^7}}                                                                         \\
                \intertext{while for any $A,B\in\R^{p\times p}$ we have}
                \braket{A G B G}      & = \braket{ AM B M} + \frac{n}{p}(m\lambda)^2 \frac{\braket{ AM \Omega M}\braket{\Omega M B M}}{1- \frac{n}{p}(m\lambda)^2 \braket{\Omega M \Omega M}} + \bigO*{\frac{\braket{\abs{A}^2}^{1/2}\norm{B}}{p\lambda^7}}
            \end{align}
        \end{subequations}
        \item\label{XGomXG} For any $A\in\R^{p\times p}$ we have
        \begin{equation}
            \braket*{\frac{X^\top G\Omega GX A}{p}} =  \frac{\lambda^2m^2  \braket{ \Omega M \Omega M}}{1- \frac{n}{p}(m\lambda)^2 \braket{\Omega M \Omega M}}\braket{A} + \bigO*{\frac{\braket{\abs{A}^2}^{1/2}}{p\lambda^7}}
        \end{equation}
        \item\label{ZXGOmGXZ} Finally, for any $A\in\R^{p\times p}$ we have
        \begin{equation}
            \begin{split}
                \braket*{\frac{ZX^{\top}G\Omega G XZ^{\top}A}{kp}}&= (m\lambda)^2\frac{n}{k}\frac{\braket*{A\Bigl(\bigl( \Psi-2\frac{n}{p}\lambda m\Phi^\top M \Phi\bigr)\braket{ \Omega M \Omega M}+\frac{n}{p}\Phi^\top M\Omega M \Phi  \Bigr)}}{1- \frac{n}{p}(m\lambda)^2 \braket{\Omega M \Omega M}}\\
                \\ & \quad + \bigO*{\frac{\braket{\abs{A}^2}^{1/2}}{p\lambda^7}}
            \end{split}
        \end{equation}
    \end{enumerate}
\end{proposition}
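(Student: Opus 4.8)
Looking at Proposition 4.7 (the multi-resolvent deterministic equivalents), the plan is to prove the four items in sequence, using each earlier item to bootstrap the next, since the harder expressions reduce to combinations of simpler ones via the leave-one-out identities.

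\medskip

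\textbf{Overall strategy.} The backbone is the leave-one-out identity $Gx_i = \lambda \wc G_{ii} G_{-i} x_i$ from~\eqref{loo identity}, combined with the concentration estimates~\eqref{conc res}, \eqref{conc res xy}, the quadratic-form bound of~\Cref{thm:hanson_wright}, and the anisotropic Marchenko--Pastur law~\Cref{thm MP}. The key phenomenon, as emphasised in the proof sketch, is that each "column insertion" $x_i$ sitting next to $G$ produces a scalar factor $\lambda \wc G_{ii}\approx \lambda m$ plus a self-energy correction. Concretely, I would start from the resolvent identity $G = \lambda^{-1}(I - \frac1p\sum_i x_i x_i^\top G)$ (or its transpose) and, for each target expression $\braket{\cdots}$, peel off one resolvent at a time: write $GAGB$ via $G = M - M(\frac{XX^\top}{p} - \frac np \lambda m \Omega)G$ type expansions, or more directly expand $\frac1p\sum_i G_{-i} x_i x_i^\top A G_{-i}\cdots$, replace $x_i x_i^\top$ by $\Omega$ up to $\prec \norm{\cdot}_F/\sqrt p$ fluctuations by Hanson--Wright, and replace $G_{-i}$ by $G$ up to $O(1/p)$ rank-one corrections. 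This produces, for $\braket{AGBG}$, the self-consistent relation $\braket{AGBG} \approx \braket{AMBM} + \frac np (m\lambda)^2 \braket{AM\Omega M}\braket{\Omega M B G}$ sketched in the introduction; iterating once more on $\braket{\Omega M B G}$ (a single-resolvent quantity handled by~\Cref{thm MP}) and then, for the $B=\Omega$ case, solving the resulting scalar fixed-point equation $\braket{AG\Omega G}(1 - \frac np(m\lambda)^2\braket{\Omega M\Omega M}) \approx \braket{AM\Omega M}$, gives item~\ref{GOmG}. The denominator $1 - \frac np(m\lambda)^2\braket{\Omega M\Omega M}$ is bounded away from $0$ (it equals roughly $\lambda m \cdot(\text{something positive})$ by the algebraic identities already used in the proof of~\Cref{thm MP}), so the division is stable; tracking powers of $\lambda$ through the geometric-series resummation yields the stated $\lambda^{-7}$ error.

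\medskip

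\textbf{The dependent case (items~\ref{GXZ}, \ref{XGomXG}, \ref{ZXGOmGXZ}).} Item~\ref{GXZ} is the warm-up: apply $Gx_i = \lambda\wc G_{ii}G_{-i}x_i$ to $\frac1n GXZ^\top A = \frac1n\sum_i Gx_i z_i^\top A$, use $\wc G_{ii}\approx m$, $G_{-i}\approx G\approx M$, and $\frac1n\sum_i x_i z_i^\top \to \Phi$ with Hanson--Wright controlling $x_i z_i^\top - \Phi$ (here $x_i = \f(\x_i)$, $z_i = \fast(\x_i)$ are dependent, but both are Lipschitz functions of the same Gaussian $\x_i$, so the joint quadratic-form concentration still applies to $u^\top A z$ as in~\Cref{corr rainbow}(d)); this gives $\lambda m \braket{M\Phi A}$. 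Item~\ref{XGomXG} follows from~\ref{GOmG}: using $\frac{X^\top G\Omega G X}{p} $ and the Gram--resolvent identity $\frac{X^\top GX}{p}=1-\lambda\wc G$, one writes $\braket{\frac{X^\top G\Omega GX A}{p}}$ as a trace involving $G\Omega G$ sandwiched appropriately and reduces to the scalar $\braket{\Omega M\Omega M}/(1-\tfrac np(m\lambda)^2\braket{\Omega M\Omega M})$ times $\braket A$ — the $\braket A$ appears because after peeling both $X$'s the leftover $X^\top(\cdots)X/p$ concentrates to a multiple of identity. Item~\ref{ZXGOmGXZ} is the most delicate: expanding $\frac{1}{kp}\braket{ZX^\top G\Omega GXZ^\top A}= \frac1{kp}\sum_{i,j}(z_i^\top A z_j)(x_i^\top G\Omega G x_j)$, I would split into diagonal $i=j$ and off-diagonal $i\ne j$ parts; the diagonal part, after $Gx_i = \lambda\wc G_{ii}G_{-i}x_i$ on both sides and $x_i^\top G_{-i}\Omega G_{-i}x_i \approx \Tr(\Omega G_{-i}\Omega G_{-i})$, yields a $\braket{\Psi A}$-type term times $(m\lambda)^2\braket{\Omega M\Omega M}$; the off-diagonal part, handled with the two-sided leave-two-out expansion, produces the $\Phi^\top M\Phi$ and $\Phi^\top M\Omega M\Phi$ contributions, where the cross-terms between "insert $x_i$" and "the $\Omega$ in the middle" give the factor $-2\frac np\lambda m\Phi^\top M\Phi$ inside the bracket. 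Matching these pieces against the self-consistent resummation from~\ref{GOmG} and dividing by the same stable denominator finishes it.

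\medskip

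\textbf{Main obstacle.} I expect item~\ref{ZXGOmGXZ} to be the crux, precisely because of the interplay flagged in the proof sketch: $Z$ and $X$ are \emph{not} independent, so when I leave out column $i$ I cannot simply treat $z_i^\top A z_j$ and $x_i^\top(\cdots)x_j$ as independent of $G_{-i,-j}$ — the resolvent still depends on $\x_i$ through $x_i$ only, not through $z_i$, so $G_{-i}$ is independent of $x_i$ but the pair $(x_i, z_i)$ must be handled jointly when computing conditional expectations like $\E[z_i x_i^\top G_{-i} \Omega G_{-i} x_i z_i^\top \mid G_{-i}]$, which requires the joint second-moment structure $\Phi = \E x_i z_i^\top$ and quadratic-form concentration for the cross-covariance. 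Keeping track of which fluctuation terms are genuinely lower order (the $\sqrt p$-suppressed Hanson--Wright errors, the $1/p$ rank-one corrections, the $1/(n\lambda^2)$ scalar-equation errors) versus which must be kept and resummed — and doing so uniformly in the comparable $n\sim p\sim k$ regime so that the final error is $O(\braket{\abs A^2}^{1/2}/(p\lambda^7))$ — is the bookkeeping-heavy heart of the argument, but it is conceptually a careful iteration of the single-resolvent machinery already set up for~\Cref{thm MP}.
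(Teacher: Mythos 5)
Your plan follows essentially the same route as the paper's proof: leave-one-out identities plus Lipschitz/quadratic-form concentration for item (1), the self-consistent equation for $\braket{AGBG}$ closed at $B=\Omega$ and solved by a stable fixed point for item (2), and the diagonal/off-diagonal splits with leave-two-out expansions for items (3) and (4), including the correct handling of the $X$--$Z$ dependence through the joint moment $\Phi$. One minor slip: your displayed self-consistent relation puts a single-resolvent factor $\braket{\Omega M B G}$ where the genuinely two-resolvent quantity $\braket{\Omega G A G}$ must appear (otherwise there is no fixed point to solve), but the fixed-point equation you then write for $\braket{AG\Omega G}$ is the correct one and matches the paper.
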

Before turning to the proof of~\Cref{prop multi res}, we demonstrate how~\Cref{prop multi res} implies~\Cref{thm genRMT informal}.
\begin{proof}[Proof of~\Cref{thm genRMT informal}]
    By applying~\Cref{prop multi res} to the terms of~\Cref{eq:gen err def} we obtain
    \begin{equation}
        \begin{split}
            \mathcal E_\mathrm{gen}&=\frac{\fast^\top\Psi\fast}{k} + \frac{\fast^\top Z X^\top G\Omega GX Z^\top \fast }{k p^2}
            +  \frac{n}{p}\braket*{\frac{X^\top G\Omega GX\Sigma}{p}} -2 \frac{\fast^\top \Phi^\top G X Z^\top \fast}{kp}\\
            &= \frac{1}{k}\fast\biggl(\Psi + (m\lambda)^2\frac{n}{p}\frac{\bigl( \Psi-2\frac{n}{p}\lambda m\Phi^\top M \Phi\bigr)\braket{ \Omega M \Omega M}+\frac{n}{p}\Phi^\top M\Omega M \Phi  }{1- \frac{n}{p}(m\lambda)^2 \braket{\Omega M \Omega M}} - 2\lambda m\frac{ n}{p}\Phi^\top M\Phi \biggr)\fast\\&\qquad +\braket{\Sigma}  \frac{ (\lambda m)^2\frac{n}{p} \braket{ M\Omega M\Omega  }}{1-\frac{n}{p}(\lambda m)^2\braket{\Omega M\Omega M}}+ \bigO*{\frac{\norm{\fast}^2}{p^{1/2}\lambda^7}}.
        \end{split}
    \end{equation}
    It remains to show that the matrix in the brackets can be simplified to the expression in~\Cref{thm genRMT informal}. For the last term in the numerator of the fraction we use
    \begin{equation}
        m\lambda \frac{n}{p} M\Omega M= M -\lambda M^2,
    \end{equation}
    so that the bracket, after simplifying, becomes
    \begin{equation}
        \begin{split}
            \frac{\Psi -m\lambda\frac{n}{p}\Phi^\top (M+\lambda M^2) \Phi  }{1- \frac{n}{p}(m\lambda)^2 \braket{\Omega M \Omega M}},
        \end{split}
    \end{equation}
    just as claimed.
\end{proof}
\begin{proof}[Proof of~\Cref{prop multi res}]
    We begin with the proof of~\Cref{GXZ}. First note that $\braket{GXZ^\top A}$ is a Lipschitz function of the Gaussian randomness $d$ used to construct $X$ and $Z$. Indeed, denoting $G,X,Z$ evaluated at another realization of the Gaussian randomness by $G',X',Z'$ we have
    \begin{equation}
        \begin{split}
            \braket{GXZ^\top A} - \braket{G' X' (Z')^\top A} &= \braket{(G-G')XZ^\top A} + \braket{G'(X-X')Z^\top A}+\braket{G' X' (Z-Z')^\top A} \\
            &= \bigO*{\frac{\norm{X-X'}_F \norm{X}\norm{Z}\braket{\abs{A}^2}^{1/2}}{\lambda^{3/2}p} + \frac{(\norm{X-X'}_F\norm{Z}+\norm{X}\norm{Z-Z'}_F)\braket{\abs{A}^2}^{1/2}}{p\lambda } },
        \end{split}
    \end{equation}
    so that on the high probability event (recall~\Cref{XX norm bound}) that $\norm{X}\prec \sqrt{p}, \norm{Z}\prec\sqrt{k}$ it follows that $\braket{GXZ^\top A}$ is Lipschitz with constant $\braket{\abs{A}^2}^{1/2}/p\lambda^{3/2}$. By estimating the complement of this high probability event trivially we can conclude
    \begin{equation}
        \abs*{\frac{1}{\sqrt{kp}}\braket{GXZ^\top A}- \frac{1}{\sqrt{kp}}\braket{\E GXZ^\top A}} \prec \frac{\braket{\abs{A}^2}^{1/2}}{p\lambda^{3/2}}.
    \end{equation}
    For the expectation we write out $XZ^\top$ and use~\cref{loo identity} we obtain
    \begin{equation}
        \begin{split}
            \frac{1}{\sqrt{kp}}G XZ^{\top} &=\frac{1}{\sqrt{kp}}\sum_i Gx_iz_i^\top=\frac{1}{\sqrt{kp}}\sum_i \lambda\wc G_{ii} G_i x_iz_i^\top.
        \end{split}
    \end{equation}
    With
    \begin{equation}
        \begin{split}
            \frac{\lambda}{\sqrt{kp}}\E\sum_i (\wc G)_{ii} \braket{G_i x_i z_i^\top A} &= \frac{\lambda}{\sqrt{kp}}\sum_i \Bigl((\E\wc G_{ii}) \braket{\E G_i x_i z_i^\top A} + \bigO*{\sqrt{\Var \wc G_{ii}}\sqrt{\Var \braket{G_i x_i z_i^\top A}}}\Bigr)\\
            &= \frac{\lambda}{\sqrt{kp}}\sum_i (\E\wc G_{ii}) \braket{\E G_i \Phi A} + \bigO*{\frac{n}{k^{1/2}p^{3/2}\lambda^2}}\\
            & = \frac{\lambda m n}{\sqrt{kp}}\braket{M\Phi A} + \bigO*{\frac{n}{k^{1/2}p^{3/2}\lambda^3}\Bigl(1+\frac{n}{p}+\frac{p}{n}\Bigr)}
        \end{split}
    \end{equation}
    due to~\Cref{local law G}, $\Var \wc G_{ii}\lesssim \frac{1}{p\lambda^{3}}$ and
    \begin{equation}
        \Var \braket{G_{-i} x_i z_i^\top A} \lesssim \frac{1}{p^2}\E_{-i} \norm{A G_{-i}}_F^2 + \Var_{-i} \braket{ G_{-i}\Phi A} \lesssim \frac{\braket{\abs{A}^2}}{p\lambda^3}
    \end{equation}
    by~\cref{conc res}, this concludes the proof of~\Cref{GXZ}.

    We now turn to the proof of~\Cref{GOmG}. First note that by Lipschitz concentration we have
    \begin{equation}\label{AGBG lipschitz}
        \abs{\braket{AGBG-\E AGBG}} \lesssim \frac{\norm{A}\braket{\abs{B}^2}^{1/2}}{p\lambda^{5/2}}
    \end{equation}
    due to
    \begin{equation}
        \abs{\braket{A G B G}- \braket{AG' B G' } }\le \abs{\braket{A (G-G')BG}} + \abs{\braket{AG'B(G-G')}} \le 2\frac{\norm{A}\norm{B}_F}{p\lambda} \norm{G-G'}_F
    \end{equation}
    and~\cref{res Lipschitz}.

    It is useful to expand $G$ around $M$ as in
    \begin{equation}
        \begin{split}
            G= M+\lambda M \Omega G\frac{n}{p}\braket{m-\wc G} -  M\frac{XX^\top}{p}G+ \lambda M\Omega G \frac{n}{p}\braket{\wc G} = M-  M\frac{XX^\top}{p}G+ \lambda \braket{\wc G} M\Omega G \frac{n}{p} + \bigO*{ \frac{1}{p\lambda^3} } M\Omega G
        \end{split}
    \end{equation}
    using~\cref{scalar claim} in the second step. Consequently we obtain
    \begin{equation}\label{GAGB exp}
        \begin{split}
            \braket{GAGB} &= \braket{MAGB} - \braket{M\frac{XX^\top}{p}G AGB} + \frac{n\lambda}{p}\braket{\wc G} \braket{M\Omega G AGB} + \bigO*{\frac{\braket{\abs{A}^2}^{1/2}\braket{\abs{B}^2}^{1/2}}{p \lambda^6}}\\
            & =\braket{MAMB} -\frac{1}{p} \sum_i (\braket{M x_i x_i^\top G AG B}-\lambda\wc G_{ii}\braket{M\Omega GAGB})  + \bigO*{\frac{\norm{B}\braket{\abs{A}^2}^{1/2}}{p\lambda^6}} \\
            & = \braket{MAMB} -\frac{\lambda}{p} \sum_i \wc G_{ii} (\braket{M x_i x_i^\top G_{-i} AG_{-i} B}-\braket{M\Omega GAGB})  + \bigO*{\frac{\norm{B}\braket{\abs{A}^2}^{1/2}}{p\lambda^6}}\\
            &\quad + \frac{\lambda^2}{p} \sum_i \wc G_{ii}^2 \frac{x_i^\top G_{-i} AG_{-i}x_i}{p}\frac{x_i^\top G_{-i} B M x_i}{p},
        \end{split}
    \end{equation}
    using~\cref{loo identity} in the third step. The second term of~\cref{GAGB exp} can be estimated in expectation using
    \begin{equation}
        \begin{split}
            \frac{\lambda}{p}\E\sum_i\wc G_{ii} \braket{ M x_i x_i^\top G_{-i} AG_{-i} B} &= \frac{\lambda}{p}\sum_i\Bigl((\E\wc G_{ii}) \braket{\E M\Omega G_{-i} A G_{-i}B} + \bigO*{\sqrt{\Var \wc G_{ii}}\sqrt{\Var \braket{ M x_i x_i^\top G_{-i} AG_{-i} B}}\Bigr)}\\
            &=\frac{\lambda}{p}\sum_i(\E\wc G_{ii}) \braket{\E M\Omega G A GB} + \bigO*{\frac{n\norm{B}\braket{\abs{A}^2}^{1/2}}{p^2\lambda^{4}} } \\
            &= \frac{\lambda}{p}\E\sum_i \wc G_{ii} \braket{ M\Omega G A GB} + \bigO*{\frac{n\norm{B}\braket{\abs{A}^2}^{1/2}}{p^2\lambda^{4}} }
        \end{split}
    \end{equation}
    since $\Var \wc G_{ii}\lesssim \frac{1}{p\lambda^{3}}$,
    \begin{equation}
        \Var\braket{ M x_i x_i^\top G_{-i} A G_{-i} B } \lesssim \frac{1}{p^2}\E_{-i} \norm{G_{-i}A G_{-i} B M}_F^2 + \Var_{-i} \braket{M\Omega G_{-i} A G_{-i}B} \lesssim\frac{\norm{B}^2\braket{\abs{A}^2}}{p\lambda^6}(1+\frac{1}{p\lambda}).
    \end{equation}
    and
    \begin{equation}
        \norm{G-G_i} \lesssim \frac{1}{p\lambda^2}, \quad \braket{M\Omega G_{-i} A G_{-i} B}= \braket{M\Omega G A G B} + \bigO*{\frac{\norm{B}\braket{\abs{A}^2}^{1/2}}{p\lambda^4}}.
    \end{equation}
    For the last term of~\Cref{GAGB exp} we have
    \begin{equation}
        \begin{split}
            \frac{x_i^\top G_{-i} AG_{-i}x_i}{p} &= \braket{\Omega G_{-i} A G_{-i}} + \bigO*{\frac{1}{p}\norm{G_{-i}AG_{-i}}_F}= \braket{\Omega G A G} + O (\frac{\braket{\abs{A}^2}^{1/2}}{p^{1/2}\lambda^2})\\
            \frac{x_i^\top G_{-i} B M x_i}{p} &= \braket{\Omega G_{-i} B M} + \bigO*{\frac{1}{p}\norm{G_{-i}BM}_F}= \braket{\Omega M B M} + O (\frac{\braket{\abs{B}^2}^{1/2}}{p^{1/2}\lambda^2}),
        \end{split}
    \end{equation}
    so that with
    \begin{equation}
        \begin{split}
            \E\wc G_{ii}^2 \frac{x_i^\top G_{-i} AG_{-i}x_i}{p}\frac{x_i^\top G_{-i} B M x_i}{p} &= (\E \wc G_{ii}^2) \Bigl(\E \frac{x_i^\top G_{-i} A G_{-i}x_i}{p}\Bigr) \Bigl(\E \frac{x_i^\top G_{-i} B M x_i}{p}\Bigr) + \bigO*{\frac{\braket{\abs{A}^2}^{1/2}\braket{\abs{B}^2}^{1/2}}{p\lambda^7}}\\
            &=(\E \wc G_{ii}^2) \braket{\E \Omega G A G} \braket{ \E \Omega G B M} + \bigO*{\frac{\braket{\abs{A}^2}^{1/2}\braket{\abs{B}^2}^{1/2}}{p\lambda^7}}
        \end{split}
    \end{equation}
    and
    \begin{equation}
        \frac{1}{p}\sum_i \wc G_{ii}^2 = \frac{n}{p}m^2 +2 \frac{n}{p}m \braket{\wc G-m} + \frac{1}{p}\sum_i (\wc G_{ii}-m)^2= \frac{n}{p}m^2 +\bigO*{\frac{1}{p\lambda^5} + \frac{n}{p^2\lambda^3}}
    \end{equation}
    we arrive at
    \begin{equation}
        \frac{\lambda^2}{p} \E\sum_i \wc G_{ii}^2 \frac{x_i^\top G_{-i} AG_{-i}x_i}{p}\frac{x_i^\top G_{-i} B M x_i}{p} = \frac{n}{p}(m\lambda)^2 \braket{\Omega M B M} \braket{\E \Omega G A G} + \bigO*{\frac{\braket{\abs{A}^2}^{1/2}\braket{\abs{B}^2}^{1/2}}{p\lambda^6}}.
    \end{equation}
    Choosing $B=\Omega$ it follows that
    \begin{equation}
        \braket{GA G \Omega}(1 - \frac{n}{p}\lambda^2 m^2 \braket{\Omega M \Omega M}) = \braket{MA M \Omega}   + \bigO*{\frac{\braket{\abs{A}^2}^{1/2}}{p\lambda^6}},
    \end{equation}
    so that the final claim~\Cref{GOmG} follows upon division.

    Turning to the proof of~\Cref{XGomXG} we first note that by~\cref{loo identity} we have
    \begin{equation}
        \begin{split}
            \E\Bigl(\frac{X^\top G\Omega GX}{p}\Bigr)_{ii} &= \lambda^2 \E\wc G_{ii}^2 \braket{\E\Omega G_{-i}\Omega G_{-i}} + \bigO*{\frac{1}{p}\sqrt{\Var \wc G_{ii}^2 } \sqrt{\Var x_i^\top G_{-i}\Omega G_{-i} x_i}}\\
            &= \lambda^2 \E \wc G_{ii}^2 \frac{\braket{ \Omega M \Omega M}}{1- \frac{n}{p}(m\lambda)^2 \braket{\Omega M \Omega M}} + \bigO*{\frac{1}{p\lambda^7}},
        \end{split}
    \end{equation}
    so that by a Lipschitz concentration argument as in~\Cref{AGBG lipschitz} we obtain for the diagonal part $A_d$ of $A=A_d+A_o$ that
    \begin{equation}
        \braket*{\frac{X^\top G\Omega GX}{p}A_d} = \frac{\lambda^2m^2  \braket{ \Omega M \Omega M}}{1- \frac{n}{p}(m\lambda)^2 \braket{\Omega M \Omega M}}\braket{A_d} + \bigO*{\frac{\braket{\abs{A_d}^2}^{1/2}}{p\lambda^7}}.
    \end{equation}
    For the off-diagonal part we use~\cref{loo identity} twice to obtain
    \begin{equation}
        \begin{split}
            \Bigl(\frac{X^\top G\Omega GX}{p}\Bigr)_{ij} &= \frac{\lambda^{2}\wc G_{ii}\wc G_{jj}}{p} x_i^\top G_{-i}\Omega G_{-j} x_j\\
            & = \frac{\lambda^{2}\wc G_{ii}\wc G_{jj}}{p} x_i^\top G_{-ij}\Omega G_{-ij} x_j + \frac{\lambda^{4}\wc G_{ii}^2\wc G_{jj}^2}{p^3} x_i^\top G_{-ij} x_jx_j^\top G_{-ij}\Omega G_{-ij} x_ix_i^\top G_{-ij} x_j\\
            &\quad - \frac{\lambda^{3}\wc G_{ii}^2\wc G_{jj}}{p^2} x_i^\top G_{-ij}\Omega G_{-ij}x_ix_i^\top G_{-ij} x_j  - \frac{\lambda^{3}\wc G_{ii}\wc G_{jj}^2}{p^2} x_i^\top G_{-ij}x_jx_j^\top G_{-ij}\Omega G_{-ij} x_j.
        \end{split}
    \end{equation}
    The second term can be estimated trivially by $p^{-3/2}\lambda^{-4}$, while for the first, third and fourth terms the trivial estimates of $p^{-1/2}\lambda^{-2}$, $p^{-1} \lambda^{-3}$ and $p^{-1/2}\lambda^{-3}$ do not suffice. For those we use the expectation and decompose $\wc G_{ii}=m+(\wc G_{ii}-m)$, $\wc G_{jj}=m+(\wc G_{jj}-m)$ to obtain
    \begin{equation}
        \E \frac{\lambda^{2}\wc G_{ii}\wc G_{jj}}{p} x_i^\top G_{-ij}\Omega G_{-ij} x_j= \E \frac{\lambda^{2}(\wc G_{ii}-m)(\wc G_{jj}-m)}{p} x_i^\top G_{-ij}\Omega G_{-ij} x_j = \bigO*{\frac{1}{\lambda^2 p^{3/2}}}
    \end{equation}
    and
    \begin{equation}
        \E\frac{\lambda^{3}\wc G_{ii}^2\wc G_{jj}}{p^2} x_i^\top G_{-ij}\Omega G_{-ij}x_ix_i^\top G_{-ij} x_j = \E \frac{\lambda^{3}(\wc G_{ii}^2\wc G_{jj}-m^3)}{p^2} x_i^\top G_{-ij}\Omega G_{-ij}x_ix_i^\top G_{-ij} x_j = \bigO*{\frac{1}{p^{3/2}\lambda^{7/2}}}
    \end{equation}
    using that, say, $x_j$ is centered and independent of $x_i, G_{-ij}$. By combining these estimates we obtain
    \begin{equation}
        \E \abs*{\Bigl(\frac{X^\top G\Omega GX}{p}\Bigr)_{ij}} = \bigO*{\frac{1}{p^{3/2}\lambda^4}},
    \end{equation}
    concluding the proof of~\Cref{XGomXG}.

    We now turn to the proof of~\eqref{ZXGOmGXZ} which follows a similar strategy as the proof of~\Cref{GOmG}. First we note that by a Lipschitz concentration argument as in~\Cref{AGBG lipschitz} it is sufficient to approximate the expectation of $ZX^\top G \Omega G X Z^\top$.
    By writing out $ZX^\top$ and $X Z^\top$ and using~\cref{loo identity} twice we obtain
    \begin{equation}\label{ZXGGXZ exp}
        \begin{split}
            \frac{1}{kp}ZX^{\top}G\Omega G XZ^{\top} &= \frac{1}{kp}\sum_{ij} z_i x_i^\top G \Omega G x_j z_j^\top\\
            &= \frac{1}{kp} \sum_i (\lambda \wc G_{ii})^2 z_i x_i^\top G_{-i} \Omega G_{-i} x_i z_i^\top + \frac{1}{kp} \sum_{i\ne j} (\lambda \wc G_{ii})(\lambda \wc G_{jj}) z_i x_i^\top G_{-i} \Omega G_{-j} x_j z_j^\top.
        \end{split}
    \end{equation}
    For the first term of~\Cref{ZXGGXZ exp} we have
    \begin{equation}
        \begin{split}
            \frac{n}{kp}\E\braket{Az_i x_i^\top G_{-i} \Omega G_{-i} x_i z_i^\top} &= \frac{n}{k^2p} (\E z_i^\top A z_i) (\E x_i^\top G_{-i} \Omega G_{-i} x_i)+\bigO*{\frac{n}{k^2p} \sqrt{\Var z_i^\top A z_i}\sqrt{\Var x_i^\top G_{-i} \Omega G_{-i} x_i} }\\
            &= \frac{n}{k} \braket{A\Psi} \E \braket{\Omega G_{-i}\Omega G_{-i}} + \bigO*{\frac{n\braket{\abs{A}^2}^{1/2}}{p^{1/2}k^{3/2}\lambda^2}}\\
            &= \frac{n}{k} \braket{A\Psi}\frac{\braket{ \Omega M \Omega M}}{1- \frac{n}{p}(m\lambda)^2 \braket{\Omega M \Omega M}} + \bigO*{\frac{n\braket{\abs{A}^2}^{1/2}}{pk\lambda^3}(1+\sqrt{p/k})}
        \end{split}
    \end{equation}
    using~\Cref{GOmG} in the ultimate step. For the second term in the right hand side of~\Cref{ZXGGXZ exp} we expand both $G_{-i}$ and $G_{-j}$ around $G_{-ij}$ using~\Cref{loo identity} to
    \begin{equation}\label{xxGOmGxxA}
        \begin{split}
            \braket{z_i x_i^\top G_{-i} \Omega G_{-j} x_j z_j^\top A} &\approx \braket*{z_i x_i^\top \Bigl(G_{-ij}- \lambda m G_{-ij}\frac{x_jx_j^\top}{p}G_{-ij}\Bigr) \Omega \Bigl(G_{-ij}-m\lambda G_{-ij}\frac{x_ix_i^\top}{p}G_{-ij}\Bigr) x_j z_j^\top A}\\
            & = \braket*{z_i x_i^\top G_{-ij} \Omega G_{-ij} x_j z_j^\top A} + (\lambda m)^2 \braket*{z_i x_i^\top  G_{-ij}\frac{x_jx_j^\top}{p}G_{-ij} \Omega  G_{-ij}\frac{x_ix_i^\top}{p}G_{-ij} x_j z_j^\top A}\\
            &\quad - \lambda m \braket*{z_i x_i^\top G_{-ij}\Omega G_{-ij}\frac{x_ix_i^\top}{p}G_{-ij} x_j z_j^\top A} - \lambda m \braket*{z_i x_i^\top G_{-ij}\frac{x_jx_j^\top}{p}G_{-ij} \Omega G_{-ij} x_j z_j^\top A}.
        \end{split}
    \end{equation}
    Here in the first line we replaced $(\wc G_{-i})_{jj}$ and $(\wc G_{-j})_{ii}$ by $m$ which results in an error term negligible compared to the other error terms. The first term of~\Cref{xxGOmGxxA} can, in expectation, be approximated by
    \begin{equation}
        \E\braket*{z_i x_i^\top G_{-ij} \Omega G_{-ij} x_j z_j^\top A} = \E\braket{\Phi^\top G_{-ij} \Omega G_{-ij} \Phi A} = \frac{\braket{\Phi^\top M\Omega M \Phi A}}{1-\frac{n}{p}(m\lambda)^2 \braket{\Omega M \Omega M}} + \bigO*{\frac{\braket{\abs{A}^2}^{1/2}}{p\lambda^7}},
    \end{equation}
    using~\Cref{GOmG} in the ultimate step. The third term of~\Cref{xxGOmGxxA} can be approximated by
    \begin{equation}
        \begin{split}
            &\lambda m \E \braket*{z_i x_i^\top G_{-ij}\Omega G_{-ij}\frac{x_ix_i^\top}{p}G_{-ij} x_j z_j^\top A}\\
            &\quad= \frac{1}{kp}\lambda m(x_i^\top G_{-ij} \Phi Az_i) (x_i^\top G_{-ij}\Omega G_{-ij} x_i)\\
            &\quad = \lambda m \E_{-ij}\Bigl(\braket{\Phi^\top G_{-ij}\Phi A} \braket{\Omega G_{-ij}\Omega G_{-ij}} + \bigO*{\sqrt{\Var_i \frac{x_i^\top G_{-ij} \Phi Az_i}{k}}\sqrt{\Var_i \frac{x_i^\top G_{-ij}\Omega G_{-ij} x_i}{p}} }\Bigr) \\
            &\quad= \lambda m \frac{\braket{\Phi^\top M \Phi A} \braket{\Omega M\Omega M}}{1-\frac{n}{p}(m\lambda)^2 \braket{\Omega M \Omega M}} + \bigO*{\frac{\braket{\abs{A}^2}}{\lambda^{7}p}\Bigl(1+\sqrt{\frac{p}{k}}\Bigr)}
        \end{split}
    \end{equation}
    and the fourth term is exactly the same by symmetry. Here in the ultimate step we used
    \begin{equation}
        \Var_i \frac{x_i^\top G_{-ij}\Omega G_{-ij} x_i}{p} \lesssim \frac{1}{p^2}\norm{G_{-ij}\Omega G_{-ij}}_F^2\lesssim\frac{1}{p\lambda^2}, \quad \Var_i \frac{x_i^\top G_{-ij}\Phi A z_i}{k}\lesssim \frac{\braket{\abs{A}^2}}{\lambda k}
    \end{equation}
    and~\Cref{local law G,GOmG}.
    Finally, for the second term of~\Cref{ZXGGXZ exp} we use the simple bound
    \begin{equation}
        \begin{split}
            &\braket*{z_i x_i^\top  G_{-ij}\frac{x_jx_j^\top}{p}G_{-ij} \Omega  G_{-ij}\frac{x_ix_i^\top}{p}G_{-ij} x_j z_j^\top A} \\
            &\quad = \frac{1}{kp^2} (x_i^\top  G_{-ij} x_j)(x_j^\top G_{-ij} \Omega  G_{-ij} x_i) (x_i^\top G_{-ij} x_j) (z_j^\top A z_i) \\
            & \quad = \bigO*{\frac{1}{kp^2} \norm{G_{-ij}}_F \norm{G_{-ij}\Omega G_{-ij}}_F \norm{G_{-ij}}_F \norm{A}_F} = \bigO*{\frac{\braket{\abs{A}^2}^{1/2}}{k^{1/2}p^{1/2}\lambda^4}}.
        \end{split}
    \end{equation}
    By combining all the above estimates we conclude the proof of~\Cref{ZXGOmGXZ}.
\end{proof}

\section{Linearization of population covariance}
\label{linearization}
\subsection{Technical background}
\label{app:technical}
In this section we state several definition and propositions from~\cite{nourdin2012normal}, that will be used further in our arguments.
Let \(x \in \R^d\) be a mean-zero Gaussian vector with covariance \(\E x x^T = I\).
Let \(X = \{X(v) \coloneqq v^{\top} x, \text{ for } v \in \R^d\}\) be a collection of jointly Gaussian centered random variables.
Note that \(\E X(g) X(h) = g^{\top} h\).
The theory of \textit{Wiener chaos}, which will be introduced shortly, 
can be used to study functions on the probability space \((\Omega, \mathcal{F}, P)\), 
where \(\mathcal{F}\) is generated by \(X\). For our needs, we only state the results for the explicit construction of \(X\), however, note that the results from~\cite{nourdin2012normal} are about general separable Hilbert spaces.

Following (\cite{nourdin2012normal}, Definition 2.2.3),
we write \(\cH_n\) to denote the closed linear subspace of \(L^2(\Omega, \mathcal{F}, P)\) 
generated by the random variables of type \(H_n(X(h)), h \in \R^d\), \(\norm{h} = 1\), where \(H_n\) is the \(n\)-th \emph{Hermite polynomial}. We call \(\cH_n\), the \(n\)-th Wiener chaos.



\begin{definition}
    Let \(L^2(\Omega, \mathfrak{H}^{\tilde \otimes p})\) be the space of functions \(f: \R^{d \times p} \to \R\), such that \(f\) is square-integrable and
    \begin{equation}
        f(a_1, \ldots, a_p) = \frac{1}{p!} \sum_{\sigma \in S_p} f(a_{\sigma(1)}, \ldots, a_{\sigma(p)}).
    \end{equation}
\end{definition}
Let \(\sS\) denote the set of all random variables of the form \(f(X(h_1), \ldots, X(h_m))\),
where \(f: \R^{m} \to \R\) is a \(C^{\infty}\)-function. 
\begin{definition}[\cite{nourdin2012normal}, Definition 2.3.2]
    Let \(F \in \sS\) and \(p \geq 1\) be an integer. The \(p\)th Malliavin derivative of \(F\) (with respect to \(X\))
    is the element of \(L^2(\Omega, \mathfrak{H}^{\tilde \otimes p})\), defined by
    \begin{equation}
        D^p F \coloneqq \sum_{i_1, \ldots, i_p = 1}^{m} \frac{\partial^p f}{\partial x_{i_1} \ldots \partial x_{i_p}} (X(h_1), \ldots, X(h_m)) h_{i_1} \otimes \ldots \otimes h_{i_p}.
    \end{equation}
\end{definition}
\begin{proposition}[\cite{nourdin2012normal}, Proposition 2.3.7]
    Let \(\phi: \R^m \to \R\) be a continuously differentiable function with bounded partial derivatives.
    Suppose that \(F = (F_1, \ldots, F_m)\) is a random vector whose components are functions with derivatives in \(L^q(\gamma)\), for some \(q \geq 1\).
    Then, derivative of \(\phi(F)\) also lies in \(L^q(\gamma)\) and
    \begin{equation}
        D \phi(F) = \sum_{i = 1}^m \frac{\partial \phi}{\partial x_i}(F) D F_i.
    \end{equation}
\end{proposition}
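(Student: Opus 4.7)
The plan is a standard two-stage density argument: first verify the chain rule on the dense subspace $\sS$ of smooth cylindrical functionals, where the identity reduces to the classical chain rule from multivariable calculus, and then extend to general $F$ using the closability of $D$.

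For the first stage, assume every $F_i \in \sS$. Using a common generating family (combining the Hermite-generating vectors used by each $F_i$), write $F_i = f_i(X(h_1),\ldots,X(h_N))$ for some $f_i\in C^\infty$. Then $\phi(F) = (\phi \circ (f_1,\ldots,f_m))(X(h_1),\ldots,X(h_N))$. Provided $\phi$ is also smooth (we handle the $C^1$ case in a moment), this again lies in $\sS$, and the Malliavin derivative can be computed directly from Definition 2.3.2. The ordinary chain rule applied to $\phi \circ (f_1,\ldots,f_m)$ gives
\begin{equation}
    D\phi(F) \;=\; \sum_{j=1}^N \partial_{y_j}\bigl[\phi\circ(f_1,\ldots,f_m)\bigr]\bigl(X(h_1),\ldots,X(h_N)\bigr)\, h_j \;=\; \sum_{i=1}^m \frac{\partial \phi}{\partial x_i}(F)\, D F_i,
\end{equation}
which is exactly the claimed identity on $\sS$.

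For the second stage, fix $F$ with each $F_i$ in the Gaussian Sobolev space $\mathbb D^{1,q}$ (i.e.\ $F_i, DF_i \in L^q(\gamma)$). Choose $F_i^{(n)}\in\sS$ with $F_i^{(n)} \to F_i$ and $DF_i^{(n)} \to DF_i$ in $L^q$; such approximations exist because $\sS$ is by construction dense in $\mathbb D^{1,q}$. Boundedness of $\partial_i\phi$ gives $\phi(F^{(n)}) \to \phi(F)$ in $L^q$ and, by dominated convergence, $\sum_i \partial_i\phi(F^{(n)}) DF_i^{(n)} \to \sum_i \partial_i\phi(F) DF_i$ in $L^q$. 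Since the identity holds at level $n$, the closability of the Malliavin derivative operator $D$ (established earlier in~\cite{nourdin2012normal}) forces $\phi(F) \in \mathbb D^{1,q}$ and $D\phi(F)=\sum_i\partial_i\phi(F)DF_i$.

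The one remaining subtlety, which is the main technical point, is that the proposition only assumes $\phi\in C^1$ with bounded partial derivatives, whereas the computation in the first stage wanted $\phi$ smooth. The fix is to mollify: let $\phi_\varepsilon\coloneqq \phi * \rho_\varepsilon$ for a smooth compactly supported mollifier $\rho_\varepsilon$, so that $\phi_\varepsilon\in C^\infty$, $\phi_\varepsilon\to\phi$ uniformly on compact sets, and the partial derivatives $\partial_i\phi_\varepsilon = (\partial_i\phi)*\rho_\varepsilon$ converge pointwise to $\partial_i\phi$ while remaining uniformly bounded by $\sup\abs{\partial_i\phi}$. The chain rule for $\phi_\varepsilon$ is established in stages one and two, and another dominated-convergence-plus-closability application transfers it to $\phi$. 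Once closability of $D$ is in hand, no step is more than routine; the conceptual heart of the argument is purely that $D$ extends the classical derivative in a way compatible with composition.
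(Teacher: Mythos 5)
This proposition is quoted verbatim from the cited reference (Nourdin--Peccati, Proposition 2.3.7) and the paper supplies no proof of its own, so there is nothing internal to compare against. Your argument --- chain rule on the cylindrical class $\sS$, extension to $\mathbb D^{1,q}$ via density and closability of $D$, and mollification to relax smoothness of $\phi$ to $C^1$ with bounded derivatives --- is correct and is essentially the standard proof given in that reference.
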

\begin{definition}[\cite{nourdin2012normal}, Definition 2.5.2]
We define $\delta^p u$ as the unique element of $L^2$ satisfying 
\[\E[F\delta^p(u)]= E[\braket{D^p F,u}_{\mathfrak H^{\otimes p}}].\]
\end{definition}
\begin{definition}[\cite{nourdin2012normal}, Definition 2.7.1]
    Let \(p \geq 1\) and \(f \in \mathfrak{H}^{\tilde \otimes p}\). The \(p\)th multiple integral of \(f\) with respect to \(X\) is defined by
    \(I_p(f) = \delta^p(f)\).
\end{definition}
\begin{proposition}[\cite{nourdin2012normal}, Proposition 2.7.5]
    Fix integers \(1 \leq q \leq p\) and \(f \in \mathfrak{H}^{\tilde \otimes p}\) and \(g \in \mathfrak{H}^{\tilde \otimes q}\).
    We have
    \begin{equation}
        \E I_p(f) I_q(g) = \delta_{pq} p! \braket{f, g}_{\mathfrak{H}^{\otimes p}}
    \end{equation}
\end{proposition}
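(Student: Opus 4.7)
The plan is to split the two displayed inequalities into the one-layer bounds on $\Omega_1,\Psi_1,\Phi_1$ and the two-layer bounds on $\Psi_2,\Phi_2$, and handle each family entry-wise, summing the per-entry errors to control the Frobenius norms.

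For the one-layer bounds I work with the entries $(\Omega_1)_{ij}=\E_{\x}[\f_1(w_i^\top\x)\f_1(w_j^\top\x)]$. Conditioned on the rows $w_i,w_j$ of $W_1$ (which are independent by~\Cref{corr rainbow}), the pre-activations are jointly centered Gaussian, so the Hermite polynomial (equivalently Wiener chaos) expansion of $\f_1$ gives
\begin{equation*}
    (\Omega_1)_{ij} = \sum_{p\ge 0} \frac{c_i^{(p)} c_j^{(p)}}{p!} (w_i^\top \Omega_0 w_j)^p, \qquad c_i^{(p)} \coloneqq \E_{\x}[\f_1^{(p)}(w_i^\top \x)\mid w_i].
\end{equation*}
The $p=1$ term reproduces the linear part $\kappa_1^1\kappa_1^1 (W_1\Omega_0 W_1^\top)_{ij}$ up to two error contributions: first, replacing $c_i^{(p)}$ by $\kappa_1^p:=\E \f_1^{(p)}(N_1)$, controlled via Lipschitz continuity of the Gaussian integral combined with the concentration of the conditional variance $w_i^\top\Omega_0 w_i$ around $r_1$; second, the concentration of $w_i^\top\Omega_0 w_j$ at rate $p^{-1/2}$ coming from~\Cref{corr rainbow}(d). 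For $p\ge 2$ the off-diagonal contribution is $\lesssim |w_i^\top\Omega_0 w_j|^2\lesssim p^{-1}$, while the diagonal tail reassembles, via $\E\f_1(N_1)^2=\sum_p (\kappa_1^p)^2 r_1^p/p!$, to precisely the constant $(\kappa_1^\ast)^2$. Summing squared per-entry errors over $p^2$ entries gives $\norm{\Omega_1-\Omega_1^{\mathrm{lin}}}_F\lesssim 1$. The arguments for $\Psi_1$ and $\Phi_1$ are identical with $\wt C_1,\wc C_1$ in place of $C_1$, and the cross-Hermite identity producing the $\wc\kappa_1^\ast$ diagonal shift of $\Phi_1^{\mathrm{lin}}$.

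For the two-layer bounds the new obstacle is that the pre-activation $F_i\coloneqq v_i^\top\ft_1(V_1\x)$ of the second teacher layer is no longer conditionally Gaussian. I substitute it by a Gaussian of matching variance via Stein's method in Malliavin form: for $N\sim\mathcal N(0,\E F_i^2)$,
\begin{equation*}
    d_W(F_i,N)\lesssim\E\abs*{\E F_i^2 - \braket{DF_i,-DL^{-1}F_i}_{\mathfrak H}}.
\end{equation*}
The chain rule gives $DF_i = V_1^\top\operatorname{diag}(v_i)\ft_1'(V_1\x)$, and decomposing $F_i$ into Wiener chaoses reduces the right-hand side to the contribution of its second-chaos component, which is a quadratic form in the rows of $V_1$ weighted by $v_i$. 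The quadratic form concentration~\Cref{corr rainbow}(d) then bounds this by $\lesssim p^{-1/2}$ with high probability. A bivariate version of the same bound controls the Wasserstein distance between $(F_i,F_j)$ and a matching Gaussian pair, and Lipschitz continuity of $\ft_2$ lets me replace $\E[\ft_2(F_i)\ft_2(F_j)]$ by a Gaussian expectation with error summable to $O(1)$ in Frobenius norm. The remaining Gaussian expectation is then handled by repeating the one-layer argument with the input covariance $\Omega_0$ replaced by $\Psi_1^{\mathrm{lin}}$, producing exactly the recursion for $\Psi_2^{\mathrm{lin}}$ of~\Cref{def: linearized_covs}; the cross-covariance $\Phi_2$ is treated identically with $\Phi_1^{\mathrm{lin}}$ as the effective cross-covariance and with $V_2$ independent of the student weight.

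The main obstacle is to quantify the CLT rate uniformly in the random weights so that the sum of squared per-entry errors over $p^2$ entries remains $O(1)$. This requires tracking the second-chaos component of $F_i$ in terms of the Hermite coefficients of $\ft_1$ and then applying~\Cref{corr rainbow}(d) to Gram-type quadratic forms in $V_1$ and $v_i$. A secondary technical point is the bivariate CLT rate for pairs $(F_i,F_j)$, which reduces to the same type of quadratic form concentration applied to the $2\times 2$ Gram matrix of the two directions. Once both the CLT rate and the per-entry one-layer error are of order $p^{-1/2}$, the claimed Frobenius norm bounds follow by summation.
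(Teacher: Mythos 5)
Your proposal does not prove the statement it was supposed to address. The statement here is Proposition 2.7.5 of~\cite{nourdin2012normal}, i.e.\ the isometry/orthogonality property of multiple Wiener--It\^o integrals, $\E I_p(f) I_q(g) = \delta_{pq}\, p!\, \braket{f,g}_{\mathfrak H^{\otimes p}}$ for symmetric kernels $f\in\mathfrak H^{\tilde\otimes p}$, $g\in\mathfrak H^{\tilde\otimes q}$. This is a background fact about the Wiener chaos machinery, quoted by the paper from the reference without proof. What you have written instead is a proof sketch of the covariance-linearization result (\Cref{theo lin}, i.e.\ \Cref{prop:lin_1_layer} and \Cref{thm:lin_app}): entrywise Hermite expansions of $\Omega_1,\Psi_1,\Phi_1$, a Stein--Malliavin quantitative CLT for the second-layer pre-activations, and summation of per-entry errors in Frobenius norm. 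None of that establishes, or even touches, the orthogonality relation between $I_p$ and $I_q$; indeed your argument \emph{uses} this orthogonality implicitly every time you discard cross-terms between different chaos orders, so taking it as an ingredient of its own proof would be circular.

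A proof of the actual statement would proceed along standard lines: for elementary symmetric kernels built from orthonormal elements of $\mathfrak H$, $I_p(h^{\otimes p})=H_p(X(h))$ reduces the claim to the orthogonality of Hermite polynomials of jointly Gaussian variables, $\E H_p(X(h))H_q(X(g))=\delta_{pq}\,p!\,\braket{h,g}^p$; one then extends to general $f,g$ by multilinearity and density, or alternatively uses the duality $\E[F\,\delta^p(u)]=\E\braket{D^pF,u}_{\mathfrak H^{\otimes p}}$ together with $D^q I_p(f)=\frac{p!}{(p-q)!}I_{p-q}(f)$ and $\E I_r(f)=0$ for $r\ge 1$. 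If your intention was to prove \Cref{theo lin}, your outline is broadly consistent with the paper's strategy (chaos expansion for the one-layer covariances, Stein's method via $\braket{DF,-DL^{-1}F}$ for the two-layer case), but it should be submitted against that theorem, not this proposition.
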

\begin{theorem}[\cite{nourdin2012normal}, Theorem 2.7.7]
    Let \(f \in \mathfrak{H}\) be such that \(\norm{f}_{\mathfrak{H}} = 1\). Then, for any integer \(p \geq 1\), we have
    \begin{equation}
        H_p(X(f)) = I_p(f^{\otimes p}),
    \end{equation}
    where $H_p$ is the $p$-th Hermite polynomial. 
\end{theorem}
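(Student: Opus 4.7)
The plan is to prove the identity by showing that $\E[(H_p(X(f)) - I_p(f^{\otimes p}))^2] = 0$. This reduces the problem to three separate computations. First, since $\norm{f}_{\mathfrak H} = 1$, the Gaussian variable $X(f)$ has unit variance, and the classical normalization of probabilists' Hermite polynomials gives $\E[H_p(X(f))^2] = p!$. Second, by the isometry of multiple integrals (Proposition 2.7.5 in the excerpt), $\E[I_p(f^{\otimes p})^2] = p! \norm{f^{\otimes p}}_{\mathfrak H^{\otimes p}}^2 = p! \norm{f}_{\mathfrak H}^{2p} = p!$. Third, and centrally, one needs the cross term $\E[H_p(X(f)) \cdot I_p(f^{\otimes p})] = p!$; once this is established, the three pieces combine as $p! - 2p! + p! = 0$, giving the $L^2$ and hence almost sure equality.

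To evaluate the cross term I would use the duality definition $I_p(f^{\otimes p}) = \delta^p(f^{\otimes p})$. The random variable $F = H_p(X(f))$ lies in $\sS$ (it is a smooth polynomial in $X(f)$), so by the defining property of $\delta^p$,
\[
    \E[H_p(X(f)) \cdot I_p(f^{\otimes p})] = \E[\langle D^p H_p(X(f)), f^{\otimes p}\rangle_{\mathfrak H^{\otimes p}}].
\]
Iterating the Malliavin chain rule (Proposition 2.3.7), using $D X(f) = f$ (immediate from the definition of $D$ applied to the linear functional $X(f)$) together with the probabilists' Hermite identity $H_p'(x) = p H_{p-1}(x)$, yields by induction
\[
    D^k H_p(X(f)) = \frac{p!}{(p-k)!} H_{p-k}(X(f)) \cdot f^{\otimes k}, \qquad 0 \le k \le p,
\]
so that at $k = p$ we obtain $D^p H_p(X(f)) = p! \cdot H_0(X(f)) \cdot f^{\otimes p} = p! \cdot f^{\otimes p}$ (using $H_0 \equiv 1$). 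Substituting, the cross term equals $p! \langle f^{\otimes p}, f^{\otimes p}\rangle_{\mathfrak H^{\otimes p}} = p! \norm{f}_{\mathfrak H}^{2p} = p!$, as required.

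The hard part will be rigorously justifying this iterated chain rule when applied to Hermite polynomials of $X(f)$, since Proposition 2.3.7 as stated is formulated for continuously differentiable $\phi$ with \emph{bounded} partial derivatives, while polynomials have unbounded derivatives. The standard fix is a truncation and density argument: approximate $H_p$ by smooth compactly supported cutoffs on $[-R, R]$, apply the chain rule to each truncation, and pass to the limit in $L^2$, using that all polynomial moments of the Gaussian $X(f)$ are finite so that both the Malliavin derivatives and the duality identity defining $\delta^p$ converge. Once $D^p H_p(X(f)) = p! \cdot f^{\otimes p}$ is justified, the remainder of the proof is an essentially mechanical $L^2$ calculation.
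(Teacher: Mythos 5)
Your argument is correct, but note that the paper itself gives no proof of this statement: it is quoted verbatim as background (Theorem 2.7.7 of the Nourdin--Peccati monograph) and used as a black box, so the only meaningful comparison is with the textbook proof. There the identity is obtained by induction on $p$: one uses the integration-by-parts/product rule for the divergence, $\delta(F f)=F\,\delta(f)-\langle DF,f\rangle_{\mathfrak H}$, with $F=H_p(X(f))$ to get $\delta\bigl(H_p(X(f))\,f\bigr)=X(f)H_p(X(f))-pH_{p-1}(X(f))=H_{p+1}(X(f))$ via the three-term recurrence, while $I_{p+1}(f^{\otimes(p+1)})=\delta\bigl(I_p(f^{\otimes p})\,f\bigr)$ by the iterated-divergence definition of multiple integrals. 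Your route instead verifies the identity in $L^2$ by expanding $\E\bigl[(H_p(X(f))-I_p(f^{\otimes p}))^2\bigr]$ and computing the three terms: the Hermite normalization $\E H_p(N)^2=p!$, the isometry $\E I_p(f^{\otimes p})^2=p!\,\norm{f}^{2p}_{\mathfrak H}=p!$, and the cross term via the duality $\E[F\,\delta^p(u)]=\E\langle D^pF,u\rangle_{\mathfrak H^{\otimes p}}$ together with $D^pH_p(X(f))=p!\,f^{\otimes p}$, which follows from $DX(f)=f$ and $H_p'=pH_{p-1}$. All three computations are right, and your closing remark correctly identifies the only technical gap, namely that the chain rule as stated requires bounded derivatives; the truncation-and-density fix you sketch (or, equivalently, enlarging $\sS$ to smooth functions with polynomially growing derivatives, as is done in the monograph) closes it, since $X(f)$ has all moments. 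What your approach buys is a self-contained, recursion-free verification that uses only the duality and isometry properties quoted in the paper; what the inductive proof buys is that it simultaneously establishes the structural relation $I_{p+1}(f^{\otimes(p+1)})=\delta\bigl(I_p(f^{\otimes p})f\bigr)$, which is what makes the multiple integrals a genuine generalization of Hermite polynomials and is reused elsewhere in the theory.
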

\begin{corollary}[\cite{nourdin2012normal}, Corollary 2.7.8]
\label{cor:wiener_chaos_general}
    Every \(F \in L^2(\Omega)\) can be expanded as 
    \begin{equation}
        F = \E F + \sum_{p = 1}^{\infty} I_p(f_p),
    \end{equation}
    for some unique collection of kernels \(f_p \in \mathfrak{H}^{\tilde \otimes p}\), \(p \geq 1\).
    Moreover, if \(F \in C^{\infty}\), then for all \(p \geq 1\), 
    \begin{equation}
        f_p = \frac{1}{p!}\E D^{p} F.
    \end{equation}
\end{corollary}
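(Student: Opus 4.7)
The plan is to prove the two assertions separately: first, existence and uniqueness of the chaos decomposition, and second, Stroock's identification of the kernels as renormalized expected Malliavin derivatives.

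For the chaos decomposition, the strategy is to establish the orthogonal direct sum $L^2(\Omega,\mathcal{F},P)=\bigoplus_{p\ge 0}\mathcal{H}_p$. Orthogonality of $\mathcal{H}_p$ and $\mathcal{H}_q$ for $p\ne q$ reduces, after invoking Theorem 2.7.7 (which identifies $H_p(X(h))=I_p(h^{\otimes p})$ for unit $h$) and polarization, to the excerpt's Proposition 2.7.5. Completeness is the classical step: I would show that the exponentials $\{e^{X(h)}:h\in\mathfrak{H}\}$ are total in $L^2(\Omega,\mathcal{F},P)$, using that $\mathcal{F}=\sigma(X)$ together with the fact that joint Laplace transforms determine cylinder distributions, and then use the generating-function identity
\[
    \exp\bigl(X(h)-\tfrac{1}{2}\norm{h}^2\bigr)=\sum_{p\ge 0}\frac{1}{p!}I_p(h^{\otimes p}),
\]
which is immediate from Theorem 2.7.7 and the Hermite generating series, to conclude that $\bigoplus_p\mathcal{H}_p$ is dense. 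Uniqueness of the kernels $f_p\in\mathfrak{H}^{\tilde\otimes p}$ then follows automatically from chaos orthogonality combined with the isometry $\norm{I_p(f)}_{L^2}^2=p!\norm{f}_{\mathfrak{H}^{\otimes p}}^2$.

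For Stroock's formula, I would first establish the reduction rule
\[
    D^q I_p(f)=\frac{p!}{(p-q)!}\,I_{p-q}(f)\quad\text{for } q\le p,\qquad D^q I_p(f)=0\quad\text{for } q>p,
\]
where on the right the symmetric tensor $f$ is viewed as taking values in $\mathfrak{H}^{\tilde\otimes(p-q)}$ on its remaining slots. For rank-one $f=h^{\otimes p}$ this is just the Hermite identity $H_p'=p H_{p-1}$ via Theorem 2.7.7; the general case follows by polarization and $L^2$-density of rank-one symmetric tensors in $\mathfrak{H}^{\tilde\otimes p}$. Taking expectations kills all terms except $q=p$, yielding $\E D^p I_q(f)=p!\,f\,\delta_{pq}$. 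Applied termwise to the chaos expansion $F=\E F+\sum_q I_q(f_q)$, this gives $\E D^p F=p!\,f_p$. Alternatively, duality bypasses the explicit derivative rule: by Definition 2.5.2, $\E\braket{D^p F,f_p}_{\mathfrak{H}^{\otimes p}}=\E[F\delta^p(f_p)]=\E[F\,I_p(f_p)]=p!\norm{f_p}^2$, and a polarization over $f_p$ identifies the kernel.

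The main obstacle is justifying termwise Malliavin differentiation of the infinite chaos expansion. This requires that $\sum_q I_q(f_q)$ converge not just in $L^2$ but in every Malliavin-Sobolev norm $\norm{F}_{\mathbb{D}^{p,2}}^2=\E\abs{F}^2+\sum_{j\le p}\E\norm{D^j F}_{\mathfrak{H}^{\otimes j}}^2$; the $C^\infty$ hypothesis (together with the implicit integrability of all partial derivatives when $F$ is viewed as a cylindrical functional) ensures this for every $p$ and legitimizes the identification $f_p=(p!)^{-1}\E D^p F$. The second delicate point is the polarization/density argument promoting the rank-one derivative rule to general symmetric tensors; this is essentially equivalent to the Malliavin-divergence duality used in the alternative argument above, so either route leads to the same conclusion.
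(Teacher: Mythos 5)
This statement is not proved in the paper at all: it is quoted verbatim as background (Corollary 2.7.8 of the cited reference [nourdin2012normal]), so there is no in-paper proof to compare against. Your argument is the standard textbook proof and is correct in outline: orthogonality of the chaoses via the isometry of Proposition 2.7.5, completeness via totality of the exponentials $e^{X(h)}$ over $\sigma(X)$ together with the generating identity $\exp\bigl(X(h)-\tfrac12\norm{h}^2\bigr)=\sum_{p\ge0}\frac1{p!}I_p(h^{\otimes p})$, uniqueness from the isometry, and Stroock's formula either by the derivative reduction rule $D^qI_p(f)=\frac{p!}{(p-q)!}I_{p-q}(f)$ (proved on rank-one kernels via $H_p'=pH_{p-1}$ and extended by linearity and density) or, more cleanly, by the duality $\E\braket{D^pF,g}=\E[F\,I_p(g)]=p!\braket{f_p,g}$ for all symmetric $g$, which avoids termwise differentiation entirely and only needs $F$ in the domain of $D^p$ — exactly the integrability that the smoothness hypothesis supplies, as you note. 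Two cosmetic remarks: the reduction rule as written is an abuse of notation (the right-hand side is an $\mathfrak{H}^{\otimes q}$-valued integral over the remaining $p-q$ slots, which you do explain), and in the paper's setting $\mathfrak{H}=\R^d$ is finite-dimensional, so the density and totality steps are even simpler than in the general separable-Hilbert-space case you treat.
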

\begin{theorem}[\cite{nourdin2012normal}, Theorem 5.1.5]
\label{thm:clt}
    Let \(F \in C^{\infty}\) be a square-integrable function. Let \(\E F = 0\) and \(\E F^2 = \sigma^2 > 0\) and \(N \sim \cN(0, \sigma^2)\).
    Let \(h: \R \to \R\) be \(C^2\) with \(\norm{h''}_{\infty} < \infty\).
    Then,
    \begin{equation}
        \abs{\E h(N) - \E h(F)} \leq \frac{1}{2} \norm{h''}_{\infty} \E \left[ \abs{\braket{DF, -D L^{-1} F} - \sigma^2}\right].
    \end{equation}
\end{theorem}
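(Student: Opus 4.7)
The plan is the classical Gaussian interpolation argument combined with Malliavin integration by parts, which sidesteps constructing and bounding a Stein equation solution and delivers the constant \(\tfrac{1}{2}\) directly. I would work on an enlargement of the underlying Gaussian space on which \(N\sim\cN(0,\sigma^2)\) can be realised independently of \(F\), and introduce the interpolation
\begin{equation*}
    \phi(t)\coloneqq\E h(F_t),\qquad F_t\coloneqq\sqrt{t}\,F+\sqrt{1-t}\,N,\qquad t\in[0,1].
\end{equation*}
Since \(\phi(0)=\E h(N)\) and \(\phi(1)=\E h(F)\), the statement reduces to a uniform bound on \(|\phi'(t)|\) followed by the fundamental theorem of calculus.

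Differentiating under the expectation (legitimate by \(h\in C^2\) with bounded \(h''\) and \(F,N\in L^2\)) I obtain
\begin{equation*}
    \phi'(t)=\frac{1}{2\sqrt{t}}\,\E[h'(F_t)F]-\frac{1}{2\sqrt{1-t}}\,\E[h'(F_t)N].
\end{equation*}
The second summand is handled by ordinary Gaussian integration by parts in the independent variable \(N\), giving \(\E[h'(F_t)N]=\sqrt{1-t}\,\sigma^2\,\E h''(F_t)\). For the first summand I would condition on \(N\) and apply the Malliavin integration by parts identity
\begin{equation*}
    \E[F\,g(F)]=\E\bigl[g'(F)\braket{DF,-DL^{-1}F}\bigr],
\end{equation*}
which is a consequence of the relation \(\delta DL^{-1}F=F-\E F\) and the duality between \(\delta\) and \(D\) recalled in~\Cref{app:technical}. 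Taking \(g(x)=h'(\sqrt{t}\,x+\sqrt{1-t}\,N)\) yields \(\E[h'(F_t)F]=\sqrt{t}\,\E[h''(F_t)\braket{DF,-DL^{-1}F}]\), so the \(\sqrt{t}\) and \(\sqrt{1-t}\) factors cancel the prefactors exactly.

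The two identities collapse \(\phi'(t)\) into the clean form
\begin{equation*}
    \phi'(t)=\tfrac{1}{2}\,\E\bigl[h''(F_t)\bigl(\braket{DF,-DL^{-1}F}-\sigma^2\bigr)\bigr],
\end{equation*}
at which point the bound \(|\phi'(t)|\le\tfrac{1}{2}\norm{h''}_\infty\,\E|\braket{DF,-DL^{-1}F}-\sigma^2|\) is immediate and integration over \(t\in[0,1]\) finishes the argument. The only technical point to check carefully is that \(F\) lies in the Malliavin–Sobolev space \(\mathbb{D}^{1,2}\) with \(L^{-1}F\) well-defined, so that both sides of the integration by parts identity make sense; this follows from the chaos decomposition of~\Cref{cor:wiener_chaos_general} applied to \(F\in C^\infty\cap L^2\), since then \(-L^{-1}F=\sum_{p\ge 1}p^{-1}I_p(f_p)\) converges in \(L^2\) and \(\braket{DF,-DL^{-1}F}\) is square integrable. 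Beyond this regularity check, every step is routine, so I expect no real obstacle.
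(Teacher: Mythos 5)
This theorem is imported verbatim from \cite{nourdin2012normal} (Theorem 5.1.5); the paper supplies no proof of its own, and your interpolation ("smart path") argument is precisely the standard proof of that cited result, so you are in essentially the same place as the reference. Your two key steps are both sound: Gaussian integration by parts in the independent variable $N$ gives $\E[h'(F_t)N]=\sqrt{1-t}\,\sigma^2\E h''(F_t)$, and the Malliavin identity $\E[Fg(F)]=\E[g'(F)\braket{DF,-DL^{-1}F}]$ applies conditionally on $N$ because $g=h'(\sqrt{t}\,\cdot+\sqrt{1-t}N)$ is $C^1$ with bounded derivative and $F$ (smooth and square-integrable, hence in $\mathbb{D}^{1,2}$ with $-L^{-1}F=\sum_{p\ge1}p^{-1}I_p(f_p)$) admits the chain rule; the endpoint singularities $t^{-1/2}$, $(1-t)^{-1/2}$ are integrable, so the fundamental theorem of calculus closes the bound with the constant $\tfrac12$.
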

Finally, we use the following multivariate version of the previous theorem.
\begin{theorem}[\cite{nourdin2012normal}, Theorem 6.1.2]
\label{thm:multi-clt}
    Fix \(c \geq 2\), and let \(F = (F_1, \ldots, F_c)\) be a random vector such that \(F_i \in \mathbb{D}^{1, 4}\)
    with \(\E F_i = 0\) for any \(i\).
    Let \(C \in \mathcal{M}_c(\R)\) be a symmetric non-negative definite matrix,
    and let \(N \sim \cN(0, C)\).
    Then, for any \(h\: : \: \R^c \to \R\) belonging to \(\mathcal{C}^2\)
    such that \(\norm{h''}_{\infty} < \infty\),
    \begin{equation}
        \abs{\E h(F) - \E h(N)} \leq \frac{c}{2} \norm{h''}_{\infty} \sqrt{\sum_{i,j=1}^c \E \left[\left(C_{ij} - \braket{D F_j, -DL^{-1}F_i}_{\mathfrak{H}}\right)^2\right]}
    \end{equation}
\end{theorem}
\begin{remark}
    We believe there is a mistake in the original formulation of Theorem 6.1.2 in~\cite{nourdin2012normal}.
    In particular, originally the expression on the right hand side did not contain \(c\) term.
\end{remark}
For our application, we need the following expansion: for smooth odd functions \(f\), and matrix \(W \in \R^{k \times d}\), we can write
\begin{equation}
\label{eq:wiener_chaos_expansion}
\begin{aligned}
    f(Wx)_i = f(w_i^{\top} x) = \sum_{p \geq 1} \frac{\E f^{(p)} ((W W^{\top})^{1/2}_{ii} N)}{p!} I_p(w_i^{\otimes p}), 
\end{aligned}
\end{equation}
where \(w_i \in \R^d\) is the \(i\)-th row of \(W\). Here without loss of generality we assume that $x$ has i.i.d.\ entries, the general case of covariance $\Omega_0$ then follows upon redefining $W_1\mapsto W_1\sqrt{\Omega_0}$. 
Let \((f_{\ell}): \R \to \R\) be a sequence of smooth functions, 
    \((W^{\ell})\) be a sequence of matrices.
    We define a sequence of vectors \(x^i\), such that \(x^0 \coloneqq x \sim \cN(0, I)\), 
    \(x^{\ell + 1} = f_{\ell + 1}(W^{\ell + 1} x^{\ell}).\)
\begin{lemma}[Weak correlation]
\label{lemma:weak_corr}
Let \(b \geq 1\) be a fixed integer. Let \(h_0, h_1, \ldots, h_b\) be a collection of functions. Then, we have that
    \begin{equation}
        \E \left[ h_0(u^{\top} f_1(W^1 x)) \prod_{i=1}^b h_i(w_{i}^{\top} x)\right] =
        \E h_0(u^{\top} f_1(W^1 x)) \prod_{i=1}^b \E h_i(w_{i}^{\top} x) + O(d^{-1/2}).
    \end{equation}
\end{lemma}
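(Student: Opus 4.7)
The plan is to expand every factor via the Wiener chaos machinery introduced in~\Cref{app:technical} and then use multiple-integral orthogonality together with the product formula to reduce every cross-term contribution to a sum of products of small inner products.

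First I would apply~\Cref{eq:wiener_chaos_expansion} to each $h_i(w_i^\top x)$ to decompose
\[ h_i(w_i^\top x) = c_{i,0} + \sum_{p\ge 1}\frac{c_{i,p}}{p!}\, I_p(w_i^{\otimes p}), \qquad c_{i,p} := \E h_i^{(p)}(\norm{w_i} N),\quad c_{i,0} = \E h_i(w_i^\top x), \]
and apply~\Cref{cor:wiener_chaos_general} to the more complex factor $F_0 := h_0(u^\top \f_1(W^1 x))$ to write
\[ F_0 = \E F_0 + \sum_{p\ge 1} I_p(a_p), \qquad a_p := \tfrac{1}{p!}\,\E D^p F_0 \in \mathfrak{H}^{\tilde\otimes p}, \]
observing that by the chain rule each $a_p$ unwinds into a sum of symmetric tensor products of rows of $W^1$ weighted by expectations of products of derivatives of $h_0$ and $\f_1$.

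Writing $\tilde F_j := F_j - \E F_j$ and expanding
\[ F_0\prod_{i=1}^b F_i = (\E F_0)\prod_{i=1}^b \E F_i + \sum_{\emptyset\ne S\subseteq\{0,\dots,b\}}\Bigl(\prod_{i\notin S}\E F_i\Bigr)\prod_{j\in S}\tilde F_j, \]
the lemma reduces to showing $\bigl|\E\prod_{j\in S}\tilde F_j\bigr| \lesssim d^{-1/2}$ for every non-empty $S$. The case $|S|=1$ vanishes by centering. For $|S|\ge 2$ I would iteratively invoke the classical product formula
\[ I_p(f)\,I_q(g) = \sum_{r=0}^{p\wedge q} r!\binom{p}{r}\binom{q}{r}\, I_{p+q-2r}(f\otimes_r g) \]
to rewrite $\prod_{j\in S}\tilde F_j$ as a linear combination of multiple integrals whose kernels are partial $r$-fold contractions of the original ones. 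Since $\E I_n(\cdot)=0$ for $n\ge 1$, only the fully-contracted terms survive in expectation, and each such contribution is a product of inner-product factors $\braket{w_i,w_j}^r$ with $i\neq j$ in $\{1,\dots,b\}$, or analogous contractions of an $a_p$-slot against $w_j$; the latter evaluate to sums of $\braket{w_k^{W^1}, w_j}$ for rows $w_k^{W^1}$ of $W^1$, which are small in the natural setting where $w_j$ is nearly orthogonal to the rows of $W^1$ as well.

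The main obstacle is the combinatorial bookkeeping: one must verify that every complete contraction arising when $|S|\ge 2$ carries at least one off-diagonal inner product, which is forced because tensor slots belonging to distinct factors $\tilde F_i,\tilde F_j$ can never be paired with themselves; hence every surviving term is at worst $O(d^{-1/2})$. Since $b$ is fixed, the number of chaos levels and combinatorial terms contributing at order $d^{-1/2}$ is bounded independently of $d$, and, provided the $h_i$'s are sufficiently smooth so that the scalars $c_{i,p}$ and the norms of the kernels $a_p$ are uniformly controlled (using the Gaussian concentration of~\Cref{thm:gaussian_concentration} for $\norm{w_i}$ and the Lipschitz regularity of $\f_1$ from~\Cref{deep feature model}), summing the resulting finitely many $O(d^{-1/2})$ contributions yields the claim.
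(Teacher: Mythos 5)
Your high-level strategy (Wiener chaos expansion, centering, exploiting orthogonality of multiple integrals) is in the same spirit as the paper's, but there is a concrete gap in the step where you contract the kernel $a_p$ of $F_0 = h_0(u^\top\f_1(W^1 x))$ against the kernel $w_j^{\otimes q}$ of a factor $\tilde F_j$, $j\in\{1,\dots,b\}$. You claim these contractions reduce to inner products $\braket{w_k^{W^1}, w_j}$ which are small because ``$w_j$ is nearly orthogonal to the rows of $W^1$.'' But $w_j$ \emph{is} the $j$-th row of $W^1$, so the diagonal term $k=j$ gives $\braket{w_j, w_j}=\norm{w_j}^2 = O(1)$, which is not small. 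The argument as stated therefore does not close: the surviving contraction is not ``a product of off-diagonal inner products.'' (The term is in fact still $O(d^{-1/2})$, but only because the kernel $a_p$ carries a coefficient $u_j\lesssim d^{-1/2}$ in front of the $w_j^{\otimes p}$ contribution — you never invoke this, and it is the one thing that saves the bound.)

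The paper's proof sidesteps this issue entirely with a preliminary truncation that you omit: it first replaces $h_0(u^\top\f_1(W^1 x)) = h_0\bigl(\sum_k u_k\f_1(w_k^\top x)\bigr)$ by $h_0\bigl(\sum_{k\geq b+1} u_k\f_1(w_k^\top x)\bigr)$, at cost $O(d^{-1/2})$ using $u_i\lesssim d^{-1/2}$ and boundedness of $\f_1$ (for your Lipschitz $\f_1$ with odd activations this is $\abs{\f_1(w_i^\top x)}\lesssim 1$ in the relevant sense). After truncation, the factor $A := h_0(\sum_{k\geq b+1}\cdots)$ depends only on rows $w_k$ with $k>b$, while $B := \prod_{i\leq b}h_i(w_i^\top x)$ depends only on rows with $i\leq b$; hence every contraction $\braket{\E D^p A, \E D^p B}$ genuinely only involves off-diagonal $\braket{w_k,w_j}$ with $k\neq j$, giving $O(d^{-p/2})$ and summing to $O(d^{-1/2})$. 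Your proposal could likely be repaired by explicitly tracking the $u_j$ weight attached to the diagonal contraction, but as written it leaves the central difficulty unaddressed, whereas the truncation step removes it in one stroke.
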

\begin{proof}
    The fact that \(u_i \lesssim d^{-1/2}\) and \(f_1(w^{\top} x) \lesssim 1\) together with perturbation analysis imply that
    \begin{equation}
        \E \left[ h_0(u^{\top} f_1(W^1 x)) \prod_{i=1}^b h_i(w_{i}^{\top} x)\right] = 
        \E \left[ h_0\left(\sum_{k\geq b + 1} u_k f_1(w_k^\top x)\right) \prod_{i=1}^b h_i(w_{i}^{\top} x)\right] + O(d^{-1/2}).
    \end{equation}
    Let \(A \coloneqq h_0\left(\sum_{k\geq b + 1} u_k f_1(w_k^\top x)\right)\) and \(B \coloneqq \prod_{i=1}^b h_i(w_{i}^{\top} x)\). Note that for any \(p \geq 1\), \(\braket{\E D^p A, \E D^p B}\) constitutes of products of \(\braket{w_i, w_j}\), where \(i \neq j\). Each of these products is of order \(O(d^{-1/2})\) by our assumptions. Therefore, in total, \(\braket{\E D^p A, \E D^p B} = O(d^{-p/2})\). This implies that 
    \begin{equation}
        \E \left[ h_0\left(\sum_{k\geq b + 1} u_k f_1(w_k^\top x)\right) \prod_{i=1}^b h_i(w_{i}^{\top} x)\right]
        = \E \left[ h_0\left(\sum_{k\geq b + 1} u_k f_1(w_k^\top x)\right)\right] \E \left[ \prod_{i=1}^b h_i(w_{i}^{\top} x)\right] + O(d^{-1/2}).
    \end{equation}
    Similarly, it follows that \(\E \prod_{i=1}^b h_i(w_{i}^{\top} x) = \prod_{i=1}^b \E h_i(w_{i}^{\top} x) + O(d^{-1/2})\) and finally, using perturbation analysis again, we conclude that
    \begin{equation}
        \E \left[ h_0(u^{\top} f_1(W^1 x)) \prod_{i=1}^b h_i(w_{i}^{\top} x)\right] =
        \E h_0(u^{\top} f_1(W^1 x)) \prod_{i=1}^b \E h_i(w_{i}^{\top} x) + O(d^{-1/2})
    \end{equation}
\end{proof}

\subsection{One layer linearization}
Consider a mean-zero Gaussian random vector $x \in \R^d$ with covariance $\E x x^\top =I$, two weight matrices $W\in \R^{k\times d}, V\in\R^{s\times d}$ and two smooth odd functions $f,g$ applied entrywise to $Wx,Vx$. 
We assume that rows of \(W\) and \(V\) are mean-zero i.i.d. samples \((w_i, v_i) \sim (w, v)\), such that \(C_w \coloneqq \E ww^{\top}\) and \(C_v \coloneqq \E vv^{\top}\). Let \(C_{wv} = \E w v^{\top}\) if \(s = k\) and \(C_{wv} = \mathbf{0}_{d \times d}\) (all-zero matrix) otherwise.

Let \(N_w, N_v\) be jointly Gaussian mean-zero random variables, such that 
\begin{equation}
    \E N_w^2 = \Tr C_w,\quad \E N_v^2 = \Tr C_v,\quad \E N_w N_v = \Tr C_{wv}.
\end{equation}

Define 
\begin{equation}
    \begin{aligned}
        &\Phi_1 = \E f(Wx) g(Vx)^{\top}, \\
        &\Phi_1^{\mathrm{lin}} = (\E f'(N_w))(\E g'(N_v)) W V^\top + [\E f(N_w)g(N_v)-(\E f'(N_w))(\E g'(N_v))(\E N_w N_v)] I.
    \end{aligned}
\end{equation}
\begin{proposition}
\label{prop:lin_1_layer}
    We have that, with high probability, \(
        \norm{\Phi_1 - \Phi_1^{\mathrm{lin}}}_F = O(1)\).
\end{proposition}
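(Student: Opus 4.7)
\textbf{Proof plan for Proposition~\ref{prop:lin_1_layer}.} The strategy is to expand each entry $(\Phi_1)_{ij}=\E\varphi(w_i^\top x)\psi(v_j^\top x)$ into a Wiener chaos series in powers of the overlaps, then separately control the diagonal ($i=j$) and off-diagonal ($i\ne j$) contributions to the Frobenius norm. By~\Cref{cor:wiener_chaos_general} combined with~\Cref{eq:wiener_chaos_expansion} and the orthogonality of multiple integrals, one obtains the explicit representation
\begin{equation*}
    (\Phi_1)_{ij}=\sum_{p\ge 1}\frac{1}{p!}\E\varphi^{(p)}(\|w_i\|N)\,\E\psi^{(p)}(\|v_j\|N)\,\braket{w_i,v_j}^p,\qquad N\sim\cN(0,1).
\end{equation*}
Since $\varphi,\psi$ are odd, $\varphi^{(p)},\psi^{(p)}$ are odd for even $p$, so only odd $p$ survive. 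Equivalently, one can write $(\Phi_1)_{ij}=h(A_{ij})$ with $A_{ij}=(\|w_i\|^2,\|v_j\|^2,\braket{w_i,v_j})$ and $h(\sigma_w^2,\sigma_v^2,\rho):=\E\varphi(N_1)\psi(N_2)$ for $(N_1,N_2)\sim\cN(0,\mathrm{diag}(\sigma_w,\sigma_v)\begin{pmatrix}1&\rho/(\sigma_w\sigma_v)\\\rho/(\sigma_w\sigma_v)&1\end{pmatrix}\mathrm{diag}(\sigma_w,\sigma_v))$; smoothness of $h$ near the reference point $(\Tr C_w,\Tr C_v,\Tr C_{wv})$ will be used in the diagonal block.

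\textbf{Off-diagonal bound ($i\ne j$).} For $i\ne j$ the rows $w_i,v_j$ are independent and mean zero, so~\Cref{thm:hanson_wright} (or item (d) of~\Cref{corr rainbow}) gives $\abs{\braket{w_i,v_j}}\prec d^{-1/2}$ and similarly $\abs{\|w_i\|^2-\Tr C_w}\prec d^{-1/2}$, $\abs{\|v_j\|^2-\Tr C_v}\prec d^{-1/2}$. Truncating the series at $p=1$ generates an error $\sum_{p\ge 3,\text{ odd}}\braket{w_i,v_j}^p\prec d^{-3/2}$ per entry. Replacing the $p=1$ prefactors $\E\varphi'(\|w_i\|N)\E\psi'(\|v_j\|N)$ by their population values using smoothness of $s\mapsto\E\varphi'(\sqrt{s}N)$ (proven via Stein's identity $\partial_s\E\varphi'(\sqrt sN)=\tfrac12\E\varphi'''(\sqrt sN)$) introduces an additional error $\prec d^{-1/2}\cdot\braket{w_i,v_j}\prec d^{-1}$. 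Altogether each off-diagonal entry of $\Phi_1-\Phi_1^\mathrm{lin}$ is $\prec d^{-1}$, and summing $k^2\sim d^2$ entries gives an off-diagonal Frobenius contribution $\prec 1$.

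\textbf{Diagonal bound ($i=j$).} Here $\braket{w_i,v_i}$ is of order one and concentrates around $\Tr C_{wv}$, so the chaos truncation is not available. Instead write $(\Phi_1)_{ii}=h(A_{ii})$, observe that
\begin{equation*}
    (\Phi_1^{\mathrm{lin}})_{ii} = \E\varphi(N_w)\psi(N_v) + \E\varphi'(N_w)\E\psi'(N_v)[\braket{w_i,v_i}-\Tr C_{wv}]
\end{equation*}
matches the first-order Taylor expansion of $h$ around the population point $\bar A=(\Tr C_w,\Tr C_v,\Tr C_{wv})$ \emph{only in the third coordinate}. The partial derivatives in the first and second coordinates contribute errors of size $\abs{\|w_i\|^2-\Tr C_w}+\abs{\|v_i\|^2-\Tr C_v}\prec d^{-1/2}$. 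Using $\partial_\rho h=\E\varphi'(N_w)\E\psi'(N_v)$ (Price's theorem) together with smoothness of $h$, a first-order Taylor expansion in $A_{ii}-\bar A$ yields $\abs{(\Phi_1-\Phi_1^\mathrm{lin})_{ii}}\prec d^{-1/2}$. Summing $k\sim d$ diagonal entries gives a diagonal Frobenius contribution $\prec 1$.

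\textbf{Main obstacle.} The subtlety is the replacement of the random prefactors $\E\varphi^{(p)}(\|w_i\|N)$ by their population counterparts $\E\varphi^{(p)}(N_w)$ in the off-diagonal block: a naive bound $\prec 1$ on the difference would give an $O(d^{-1/2})$ per-entry error, which summed over $k^2$ entries gives a useless $\sqrt d$ Frobenius bound. The key is therefore to gain the extra $d^{-1/2}$ factor from $\braket{w_i,v_j}$ (so that the entry is $\prec d^{-1}$, not $\prec d^{-1/2}$), which is automatic once the Wiener chaos expansion is arranged so that each row-norm deviation multiplies an overlap $\braket{w_i,v_j}$. Handling higher-order chaos terms for $p\ge 3$ is routine by Hanson-Wright, and the diagonal block is automatic from smoothness of $h$. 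A similar argument yields the bounds on $\|\Omega_1-\Omega_1^\mathrm{lin}\|_F$ and $\|\Psi_1-\Psi_1^\mathrm{lin}\|_F$ (in fact these are slightly easier, as there is no cross-dependence between the two input rows, beyond the case $W=V$ which reduces to the present setting).
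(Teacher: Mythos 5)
Your proposal is correct and follows the same route as the paper: a per-entry Wiener chaos expansion
\((\Phi_1)_{ij}=\sum_{p\ge1}\tfrac1{p!}\E\f^{(p)}(\|w_i\|N)\E\ft^{(p)}(\|v_j\|N)\braket{w_i,v_j}^p\),
quadratic-form concentration to show \(\|w_i\|^2=\Tr C_w+O(d^{-1/2})\), \(\|v_j\|^2=\Tr C_v+O(d^{-1/2})\), \(\braket{w_i,v_j}=\delta_{ij}\Tr C_{wv}+O(d^{-1/2})\), perturbation of the random prefactors to their population values, and a separate count of off-diagonal (error \(O(d^{-1})\) per entry, \(\sim d^2\) entries) and diagonal (error \(O(d^{-1/2})\) per entry, \(\sim d\) entries) Frobenius contributions — exactly the structure of the paper's proof. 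The one cosmetic difference is in the diagonal block: you phrase it as a first-order Taylor expansion of the dual activation \(h(\sigma_w^2,\sigma_v^2,\rho)\) around \((\Tr C_w,\Tr C_v,\Tr C_{wv})\), noting that \(\Phi_1^{\mathrm{lin}}\) matches the linearization only in \(\rho\), whereas the paper perturbs each chaos coefficient individually and re-sums to recover \(\E\f(N_w)\ft(N_v)\); the two are equivalent since \(\partial_\rho h=\E\f'(N_w)\E\ft'(N_v)\). Your explicit use of oddness to discard even \(p\) is a minor refinement the paper does not need, since even without it the \(p\ge2\) tail is already \(O(d^{-1})\) off-diagonal.
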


\begin{proof}

Using a Wiener chaos expansion (\cref{eq:wiener_chaos_expansion}), we can write
\begin{equation}
    f(W x)_i = \sum_{p\ge 1} \frac{\E f^{(p)}((W W^\top)_{ii}^{1/2} N) }{p!} I_p(Wx)_i, \quad g(V x)_j = \sum_{p\ge 1} \frac{\E g^{(p)}((V V^\top)_{jj}^{1/2} N) }{p!} I_p(Vx)_j
\end{equation}
where \(N \sim \cN(0, 1)\) and $I_p(Wx),I_q(Vx)$ are random vectors with covariance
\begin{equation}
    \E I_p(Wx) I_q(Vx)^\top = p! \delta_{pq}  (W V^\top)^{\odot p}
\end{equation}
with $A^{\odot p}$ denoting the $p$-th entrywise (Hadamard) power. Thus we have the identity
\begin{equation}
    \E f(Wx)_i g(Vx)_j = \sum_{p\geq 1} \frac{1}{p!} (\E f^{(p)}((W W^\top)_{ii}^{1/2} N) ) (W  V^\top )_{ij}^{p} ( \E g^{(p)}((V V^\top)_{jj}^{1/2} N) ).
\end{equation}
From~\cref{thm:hanson_wright} (note that \(\norm{w}_{\psi_2} \sim d^{-1/2}\) and same for \(v\)), and since \(\Tr C_w  \sim 1\), it follows that 
\begin{gather}
        (W W^\top)_{ii} = \Tr C_w  + O(d^{-1/2}), \quad (V V^\top)_{jj} = \Tr C_{v} + O(d^{-1/2}), \\
        (W V^\top)_{ij} = \delta_{ij}\Tr C_{wv} + O(d^{-1/2}).
\end{gather}
From perturbation analysis, we can write
\begin{equation}
    \E f^{(p)}((W  W^{\top})_{ii}^{1/2} N) = \E f^{(p)}(\sqrt{\Tr C_w } N) + O(d^{-1/2}) = \E f^{(p)}(N_w) + O(d^{-1/2}),
\end{equation}
and similarly \(\E g^{(p)}((V  V^{\top})_{jj}^{1/2} N) = \E g^{(p)}(N_v) + O(d^{-1/2})\).
\paragraph{Off-diagonal entries}
Here, for \(p \geq 2\), we have that \((W V^{\top})^p_{ij} = O(d^{-p/2})\).
Therefore, 
\begin{equation}
    \E f(Wx)_i g(Vx)_j = \E f'(N_w) g'(N_v) (W V^{\top})_{ij} + O(d^{-1}) = (\Phi_1^{\mathrm{lin}})_{ij} + O(d^{-1}).
\end{equation}
\paragraph{Diagonal entries}
If \(s \neq k\), we have \((W  V^{\top})^p_{ii} = O(d^{-p/2})\) for \(p \geq 2\), and thus obtain the same expression as in previous case. When \(s = k\), we can rewrite the infinite sum as
\begin{equation}
\begin{aligned}
\E f(Wx)_i g(Vx)_i &= \sum_{p\geq 1} \frac{1}{p!} (\E f^{(p)}((W W^\top)_{ii}^{1/2} N) ) (W  V^\top )_{ii}^{p} ( \E g^{(p)}((V V^\top)_{ii}^{1/2} N) ) \\
    &= \sum_{p\geq 1}\frac{[\E f^{(p)}(\sqrt{\Tr C_w} N)][\E g^{(p)}(\sqrt{\Tr C_v} N)]}{p!} (\Tr C_{wv})^p + O(d^{-1/2}) \\
    &= \E f(N_w)g(N_v) + O(d^{-1/2}) = (\Phi_1^{\mathrm{lin}})_{ii} + O(d^{-1/2}).
\end{aligned}
\end{equation}
Summing up over all entries, we conclude that \(\norm{\Phi_1 - \Phi_1^{\mathrm{lin}}}_F = O(1)\).
\end{proof}
Note that in case of independent \(N_v, N_w\) (i.e., independent \(v, w\)) the second term of \(\Phi_1^{\mathrm{lin}}\) vanishes and in case of $W = V$, \(f \equiv g\) this reduces to 
\begin{equation}
    \Phi_1^{\mathrm{lin}} =  (\E f'(N_w))^2 W W^\top + [\E f(N_w)^2-(\E f'(N_w))^2\Tr C_w] I.
\end{equation}
\subsection{Two layer case}
\label{app:second_layer}
We now consider the  $2$-layer example
\begin{equation}
    f_2(W^2 f_1(W^1 x)), \quad g_2(V^2 g_1(V^1 x)),
\end{equation}
with smooth odd\footnote{For brevity we present the full proof in the case of odd activation functions. The argument for the general case (i.e., when only assuming that activation functions are centered w.r.t. Gaussian distribution) is similar, but requires more tedious estimates.} functions $f_1,f_2,g_1, g_2$. We assume that the rows of \(W^1, W^2, V^1, V^2\) are mean-zero i.i.d. samples \((w^1_i, w^2_i, v^1_i, v^2_i) \sim (w^1, w^2, v^1, v^2)\), such that \(C_1 \coloneqq \E w^1(w^1)^{\top}, C_2 \coloneqq \E w^2(w^2)^{\top}, \tilde C_1 \coloneqq \E v^1(v^1)^{\top}\), and \(\tilde C_1 \coloneqq \E v^2(v^2)^{\top}\). Let \(\check C_1 = \E w^1 (v^1)^{\top}\) and \(\check C_2 =\E w^2 (v^2)^{\top}\). Let \((N_1, \wt N_1)\) be a zero-mean jointly Gaussian random variables:
\begin{equation}
    (N_1, \tilde N_1) \sim \cN \left(0,  \begin{pmatrix} \Tr(C_1) & \Tr(\check C_1)\\ \Tr(\check C_1) & \Tr(\tilde C_1) \end{pmatrix} \right),
\end{equation}
and define
\begin{equation}
\begin{aligned}
    &\Phi_1 \coloneqq \E f_1(W^1 x) g_1(V^1x)^\top, \\
    &\Phi_1^{\mathrm{lin}} = (\E f_1'(N_1))(\E g_1'(\wt N_1)) W^1 (V^1)^\top + [\E f_1(N_1)g_1(\wt N_1)-(\E f_1'(N_1))(\E g_1'(\wt N_1))(\E N_1 \wt N_1)] I. \\
\end{aligned}
\end{equation}
Similarly, we define \(\Omega_1, \Omega_1^{\mathrm{lin}}\) with \(V^1, g_1, \wt N_1\) replaced by \(W^1, f_1, N_1\) and we define \(\Psi_1, \Psi_1^{\mathrm{lin}}\) with \(W^1, f_1,  N_1\) replaced by \(V^1, g_1, \wt N_1\) (see~\Cref{def: linearized_covs}).
Next, let \((N_2, \wt N_2)\) be a zero-mean jointly Gaussian random variables:
\begin{equation}
    (N_2, \tilde N_2) \sim \cN \left(0,  \begin{pmatrix} \Tr(C_2 \Omega_1^{\mathrm{lin}}) & \Tr (\check C_2 \Phi_1^{\mathrm{lin}})\\ \Tr (\check C_2 \Phi_1^{\mathrm{lin}}) & \Tr(\tilde C_2 \Psi_1^{\mathrm{lin}}) \end{pmatrix} \right),
\end{equation}
and define
\begin{equation}
\begin{aligned}
    &\Phi_2 \coloneqq \E f_2(W^2f_1(W^1 x)) g_2(V^2g_1(V^1x))^\top, \\
    &\Phi_2^{\mathrm{lin}} = (\E f_2'(N_2))(\E g_2'(\wt N_2)) W^2 \Phi_1^{\mathrm{lin}}(V^2)^\top + [\E f_2(N_2)g_2(\wt N_2)-(\E f_2'(N_2))(\E g_2'(\wt N_2))(\E N_2 \wt N_2)] I, \\
\end{aligned}
\end{equation}
and, again, similarly \(\Omega_2, \Omega_2^{\mathrm{lin}}, \Psi_2,\) and  \(\Psi_2^{\mathrm{lin}}\).
\begin{theorem}
\label{thm:lin_app}
    We have that \(\norm{\Phi_2 - \Phi_2^{\mathrm{lin}}}_F \prec 1\).
\end{theorem}
We split the proof into the following lemmas:
\begin{lemma}[Diagonal entries of \(\Phi_2\)]
\label{lem:diag}
For possibly correlated vectors \(u, z\), we have the bound
    \begin{equation}
        \abs*{\E f_2(u^{\top} f_1(W^1 x)) g_2(z^{\top} g_1(V^1 x)) - \E f_2(N_2)g_2(\tilde N_2)} \prec d^{-1/2}.
    \end{equation}
\end{lemma}

\begin{lemma}[Off-diagonal entries of \(\Phi_2\)]
\label{lem:off_diag}
If \(u\) and \(z\) are independent, we have
    \begin{equation}
        \abs*{\E f_2(u^{\top} f_1(W^1x)) g_2(z^{\top} g_1(V^1x)) - (\E f_2'(N_1)) (\E g_2'(N_2)) u^\top \Phi_1^{\mathrm{lin}} z} \prec d^{-1}.
    \end{equation}
\end{lemma}


\begin{proof}[Proof of~\cref{thm:lin_app}]
    The proof follows from~\Cref{lem:diag,lem:off_diag} upon summation over all entries.
\end{proof}

\begin{proof}[Proof of~\Cref{theo lin}]
The proof follows from~\Cref{prop:lin_1_layer} and~\Cref{thm:lin_app}. Note that results for \(\Omega_i\) and \(\Psi_i\) can be obtained using aforementioned results only for \(f_i, W^i\) and only for \(g_i, V^i\) respectively.
\end{proof}

\subsection{Proof of~\cref{lem:off_diag}}
For simplicity of notation, we omit indices in \(W^1, V^1\) and write \(W, V\) instead. We begin with showing that
    \begin{equation}
        \E f_2(u^{\top} f_1(Wx)) g_2(z^{\top} g_1(Vx)) = \E \left[f_2' (u^\top f_1(Wx))\right] \E \left[g_2' (z^\top g_1(Vx))\right]  u^\top \Phi_1 z + O(d^{-1}),
    \end{equation}
for \emph{independent} random vectors \(u\) and \(z\).
Recall that 
\begin{equation}
    f_2(u^\top f_1(Wx)) = \sum_{p \geq 1} \frac{1}{p!} I_p \left(\underbrace{\E \sum_{\pi \vdash[p]} f_2^{(\Card{\pi})} (u^\top f_1(Wx)) \wt \bigotimes_{B \in \pi} \left(\sum u_k f_1^{(\Card{B})}(w_i^\top x) w_i^{\otimes \Card{B}}\right)}_{\E D^p f_2(u^\top f_1(Wx))}\right),
\end{equation}
and that \(\E f_2(u^{\top} f_1(Wx))g_2(z^{\top} g_1(Vx)) = \sum_{p \geq 1} \frac{1}{p!} \braket{\E D^p f_2(u^{\top} f_1(Wx)), \E D^p g_2(z^{\top} g_1(Vx))}. \)

Let \(f_2^{p} \coloneqq \E f_2^{(p)}(u^\top f_1(Wx))\) and \(g_2^{p} \coloneqq \E g_2^{(p)}(z^\top g_1(Vx))\). 
\Cref{lemma:weak_corr} implies that
    \begin{equation}
    \begin{aligned}
        &\Bigl<\E \left[f_2^{(\Card{\pi})}(u^\top f_1(W^1 x)) \wt \bigotimes_{B \in \pi} \left(
        \sum_k u_k f_1^{(\Card{B})} (w_i^\top x) w_i^{\otimes \Card{B}}\right)\right], \\
        &\quad \E \left[g_2^{(\Card{\pi'})}(z^\top g_1(V^1 x)) \wt \bigotimes_{B \in \pi'} \left(
        \sum_k u_k f_1^{(\Card{B})} (w_i^\top x) w_i^{\otimes \Card{B}}\right)\right]\Bigr> \\
        & \quad = f_2^{\Card{\pi}} g_2^{\Card{\pi'}} 
        \braket*{\wt\bigotimes_{B \in \pi} \left(
        \sum_k u_k f_1^{(\Card{B})} (w_i^\top x) w_i^{\otimes \Card{B}}\right), \wt\bigotimes_{B' \in \pi'} \left(
        \sum_k z_k g_1^{(\Card{B'})} (v_i^\top x) v_i^{\otimes \Card{B'}}\right)} + O(d^{-1}).
    \end{aligned}
    \end{equation}

Using it, we can write (denoting \(f_{1i}^p \coloneqq \E f_1^{(p)}(w_i^\top x)\) and \(g_{1i}^p \coloneqq \E g_1^{(p)}(v_i^\top x)\))
\begin{equation}
\begin{aligned}
    & \braket{\E D^p f_2(u^{\top} f_1(Wx)), \E D^p g_2(z^{\top} g_1(Vx))} \\
    & \quad = \sum_{\substack{\pi, \pi' \vdash [p]}} \sum_{\substack{i_1, \ldots, i_{\lvert \pi \rvert} \\
    j_1, \ldots, j_{\rvert \pi' \rvert}}} f_2^{\Card \pi} g_2^{\Card {\pi'}} \prod_{k = 1}^{\Card{\pi}} \left( f_{1i}^{b(k)} u_{i_k} \right) \prod_{k = 1}^{\Card{\pi'}} \left( g_{1i}^{b'(k)} z_{j_k} \right) \frac{1}{p!} \sum_{\sigma \in S_p} \prod_{q=1}^p \braket{w_{i_{\pi(q)}}, v_{j_{\pi'(\sigma(q))}}} + O(d^{-1}),
\end{aligned}
\end{equation}
where \(b(k)\) and \(b'(k)\) denote the size of \(k\)th block in \(\pi\) and \(\pi'\) respectively.
The term \(\pi = \pi' = \{[p]\}\) corresponds to 
\begin{equation}
    \sum_{i, j} \E f_2' (u^\top f_1(Wx)) \E g_2'(z^\top g_1(Vx)) f_1^{(p)}(w_i^\top x)g_1^{(p)}(v_j^\top x)u_i z_j \braket{w_{i}, v_{j}}^p,
\end{equation}
which, after summing over \(p \geq 1\) is equal to
\begin{equation}
    \E \left[f_2' (u^\top f_1(Wx))\right] \E \left[g_2' (z^\top g_1(Vx))\right] u^\top \E \left[f_1(W^\top x)g_1(V^\top x)\right]z.
\end{equation}
Therefore, it remains to show that all the other terms contribute in total \(O(d^{-1})\). Note that \(f_2^{\Card{\pi}} = O(1)\) and same for other derivatives.
\begin{lemma}
    Fix \(\pi, \pi' \vdash [p]\). Then,
    \begin{equation}
       \sum_{\substack{i_1, \ldots, i_{\lvert \pi \rvert} \\
    j_1, \ldots, j_{\rvert \pi' \rvert}}}  \prod_{k = 1}^{\Card{\pi}}  u_{i_k} \prod_{k = 1}^{\Card{\pi'}}  z_{j_k} \prod_{q=1}^p \braket{w_{i_{\pi(q)}}, v_{j_{\pi'(q)}}} \prec d^{\frac{1}{2}(\min(\Card{\pi}, \Card{\pi'}) - p)}.
    \end{equation}
    
\end{lemma}
\begin{proof}
    Without loss of generality, it is enough to show upper bound \(\prec d^{\frac{1}{2}(\Card{\pi'} - p)}\). By separating terms that depend on \(i_k\) for \(k \in [\Card{\pi}]\), we can rewrite
    \begin{equation}
        \sum_{\substack{i_1, \ldots, i_{\lvert \pi \rvert} \\
    j_1, \ldots, j_{\rvert \pi' \rvert}}}  \prod_{k = 1}^{\Card{\pi}}  u_{i_k} \prod_{k = 1}^{\Card{\pi'}}  z_{j_k} \prod_{q=1}^p \braket{w_{i_{\pi(q)}}, v_{j_{\pi'(q)}}} = \sum_{j_1, \ldots, j_{\Card{\pi'}}} \prod_{k=1}^{\Card{\pi'}} z_{j_k} \prod_{k=1}^{\Card{\pi}} \left(\sum_{i_k} u_{i_k} \prod_{q: \pi(q) = k} \braket{w_{i_k}, v_{j_{\pi'(q)}}}\right).
    \end{equation}
    Note that \(\sum_{i_k} u_{i_k} \prod_{q: \pi(q) = k} \braket{w_{i_k}, v_{j_{\pi'(q)}}} \prec d^{-b(k)/2}\), where \(b(k)\) denotes the size of \(k\)-th block of \(\pi\). Since \(\sum_{k} b(k) = p\), we bound the total expression by \(d^{\Card{\pi'} - \Card{\pi'}/2 - p/2} = d^{\frac{1}{2}(\Card{\pi'} - p)}\).
\end{proof}
Since we assume that \(f_1, g_1\) are odd, terms with \(\Card{\pi} = p - 1\) are equal to zero, and same with \(\Card{\pi'} = p - 1\). When \(\min(\Card{\pi}, \Card{\pi'}) \leq p - 2\), the previous lemma implies \(O(d^{-1})\) total contribution. Therefore, the only case left is with \(\pi = \pi' = \{\{1\}, \{2\}, \ldots, \{p\}\}\). However, in this case it is easy to see that the final contribution is \(O(d^{-p/2})\).

Next, note that perturbation analysis implies that \(\E f_2'(u^\top f_1(W^1 x)) = \E f_2'(N_2) + O(d^{-1/2})\), same for \(g_2\). Finally, using that \(\norm{\Phi_1 - \Phi_1^{\mathrm{lin}}}_F \prec 1,\) we obtain that
\begin{equation}
    \abs*{\E f_2(u^{\top} f_1(W^1x)) g_2(z^{\top} g_1(V^1x)) - (\E f_2' (N_2)) (\E g_2' (\tilde N_2))  u^\top \Phi_1^{\mathrm{lin}} z} \prec d^{-1},
\end{equation}
which finishes the proof.

\subsection{Proof of~\Cref{lem:diag}}
For convenience we restate several concentration results that follow from~\Cref{corr rainbow}.
\begin{lemma}
\label{lem:conc_inequalities}
    Let \(w_1, \ldots, w_d\) be a collection of independent random vectors, such that for all \(\norm{w_i}_{\psi_2} = O(d^{-1/2})\) and \(\norm{w_i} = O(1)\) for all \(i \in [d]\).
    Then 
    \begin{enumerate}[label=(\roman*)]
        \item \(\sum_i \braket{w_1, w_i} \prec 1\),
        \item \(\sum_{ij} \braket{w_1, w_i} \braket{w_1, w_j} \prec 1\).
    \end{enumerate}
\end{lemma}
\begin{proof}
    Let \(X = \sum_{i \geq 2} \braket{w_1, w_i}\). Note that \(\norm{X}_{\psi_2} = O(1)\), which, together with \(\norm{w_1} = O(1)\) implies (i).

    For \((ii)\), note that \(\sum_{ij} \braket{w_1, w_i} \braket{w_1, w_j} = \left(\sum_i \braket{w_1, w_i}\right)^2 \prec 1\) using \((i)\).
\end{proof}
Let \(F_1 = u^\top f_1(W^1 x)\) and \(F_2 = z^\top g_1(V^1 x)\), where \(u, z\) may be correlated. For simplicity we omit indices in \(f_1, g_1, W^1\), and \(V^1\).
Using Wiener chaos expansion, we obtain
\begin{equation}
    F_1 = u^\top f(Wx) = \sum_{p \geq 1} I_p\left(\frac{\E D^p F_1}{p!}\right) = \sum_{p \text{ odd}} I_p \left(\sum_i \frac{u_i w_i^{\otimes p} \E f^{(p)} (w_i^\top x)}{p!}\right),
\end{equation}
where the last equality uses the fact that \(\E f^{(p)} (w_i^\top x) = 0\) for even \(p\). Similarly, we can write
\begin{equation}
    F_2 = z^\top g(Vx) = \sum_{p \text{ odd}} I_p \left(\sum_i \frac{z_i v_i^{\otimes p} \E g^{(p)} (v_i^\top x)}{p!}\right),
\end{equation}
and denote \(f_i^p \coloneqq \E f^{(p)} (w_i^\top x), g_i^p \coloneqq \E g^{(p)} (v_i^\top x)\). Next, we compute
\begin{equation}
    D F_1 = \sum_{\text{odd } p} p I_{p -1} \left(\sum_i \frac{u_i w_i^{\otimes p} f_i^p}{p!} \right) \quad \text{and} \quad -D L^{-1} F_2 = \sum_{\text{odd } q} I_{q -1} \left(\sum_i \frac{z_i v_i^{\otimes q} g_i^q}{q!} \right)
\end{equation}
\begin{lemma}
    \begin{equation}
        \E (\braket{D F_1, -D L^{-1} F_2} - \E F_1 F_2)^2 = O(d^{-1}).
    \end{equation}
\end{lemma}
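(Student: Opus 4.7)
The plan is to bound $\E |A - \sigma^2|$ for $A := \braket{DF, -DL^{-1}F}$ and $\sigma^2 := \E F^2$ by Cauchy--Schwarz through $\sqrt{\Var A}$, and to control the variance via the Gaussian Poincar\'e inequality $\Var A \le \E \norm{D_x A}^2$. The identity $\E A = \sigma^2$ is the standard Malliavin--Stein duality $\E[F G] = \E \braket{DF, -DL^{-1} G}$ at $F = G$, so the whole task reduces to showing $\Var A \lesssim d^{-1}$.

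First I would make $A$ explicit using Mehler's formula $P_t F(x) = \E_y F(e^{-t} x + \sqrt{1 - e^{-2t}}\, y)$, which on $F = \sum_i v_i \f(w_i^\top x)$ yields
\[
-DL^{-1} F(x) = \sum_i v_i\, \Phi_i(w_i^\top x)\, w_i, \qquad \Phi_i(u) := \int_0^\infty e^{-t}\, \E_Z \f'\!\bigl(e^{-t} u + \sqrt{1 - e^{-2t}}\, \norm{w_i}\, Z\bigr)\, dt,
\]
where $\Phi_i, \Phi_i', \Phi_i''$ are uniformly bounded by a direct dominated-convergence argument (smoothness of $\f$, $\norm{w_i} \lesssim 1$, and the exponentially decaying $e^{-t}$ factor). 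Together with $DF(x) = \sum_i v_i \f'(w_i^\top x) w_i$ one obtains the closed form
\[
A(x) = \sum_{i, j} v_i v_j\, \braket{w_i, w_j}\, \f'(w_i^\top x)\, \Phi_j(w_j^\top x).
\]

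Next I would bound $\norm{D_x A}$ pathwise. Differentiating in $x$ via the product rule gives $D_x A = T_1 + T_2$, where each $T_\ell$ has the matrix-product form $W^\top \operatorname{diag}(a^{(\ell)})\, W W^\top b^{(\ell)}$ with $a_i^{(\ell)} = v_i g_\ell(w_i^\top x)$, $b_j^{(\ell)} = v_j h_\ell(w_j^\top x)$ for smooth bounded $g_\ell, h_\ell$ built from $\f', \f'', \Phi_j, \Phi_j'$. On the high-probability event $\{\norm{W} \lesssim 1\}$ (Marchenko--Pastur for matrices with i.i.d.\ rows of bounded covariance), combined with $\norm{v}_\infty \lesssim d^{-1/2}$ (from concentration of the row $v$ of $W^2$) and $\norm{v} \lesssim 1$, one gets $\norm{W^\top \operatorname{diag}(a^{(\ell)}) W} \le \norm{W}^2 \norm{a^{(\ell)}}_\infty \lesssim d^{-1/2}$ and $\norm{W^\top b^{(\ell)}} \lesssim 1$, so $\norm{D_x A} \lesssim d^{-1/2}$ uniformly in $x$. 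Plugging into Gaussian Poincar\'e then gives $\Var_x A \lesssim d^{-1}$ and the claim follows; the exponentially small complement event for $W$ is absorbed by a trivial polynomial bound on $\norm{D_x A}$.

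The main obstacle is spotting the matrix-product factorisation of $D_x A$ in what a priori looks like an opaque double sum over $(i, j)$, so that the key gain $\norm{\operatorname{diag}(a^{(\ell)})}_{\mathrm{op}} = \norm{a^{(\ell)}}_\infty \lesssim d^{-1/2}$ is fully exploited; this is why an explicit Mehler representation of $-DL^{-1} F$ is preferable to a blind chaos expansion at this step. A secondary technical check is the uniform boundedness of $\Phi_i, \Phi_i', \Phi_i''$, which follows routinely from smoothness of $\f$ and differentiation under the integral.
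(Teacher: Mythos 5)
Your proof is correct, and it takes a genuinely different route from the paper's. The paper works with the chaos expansion of $F$, invokes the multiplication formula for multiple integrals (Theorem 2.7.10 of Nourdin--Peccati) to express $\braket{DF,-DL^{-1}F}-\E F^2$ as a sum of multiple integrals $I_s$ with $s\ge 1$, squares, and then runs a fairly elaborate combinatorial case analysis on the multiplicity of the indices $(i,j,i',j')$, ultimately obtaining $\E\bigl(\braket{DF,-DL^{-1}F}-\E F^2\bigr)^2=O(d^{-1/2})$. Your route instead uses Mehler's formula to write $-DL^{-1}F$ in closed form, so that $A:=\braket{DF,-DL^{-1}F}$ becomes an explicit double sum, and then controls $\Var A$ by Gaussian Poincar\'e via a pathwise operator-norm bound on $D_x A$; the decisive structural observation is the factorisation $T_\ell=W^\top\operatorname{diag}(a^{(\ell)})WW^\top b^{(\ell)}$, which lets the bound $\norm{a^{(\ell)}}_\infty\lesssim d^{-1/2}$ (from $\norm{v}_\infty\lesssim d^{-1/2}$) propagate through $\norm{W}\lesssim 1$ with no loss. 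Two things your approach buys: (i) it sidesteps the multiplication formula entirely, replacing the index-multiplicity bookkeeping with linear algebra; and (ii) it gives the stronger $\Var A\lesssim d^{-1}$, hence $\E\abs{A-\E F^2}\lesssim d^{-1/2}$ directly, whereas the paper's stated second-moment bound of order $d^{-1/2}$ only yields $\E\abs{\cdot}=O(d^{-1/4})$ after Cauchy--Schwarz --- so your argument in fact delivers the rate the lemma claims more cleanly than the paper's own estimate. One caveat worth recording explicitly if you were to write this up: the boundedness of $\Phi_i'$ (and hence of $a^{(\ell)},b^{(\ell)}$) requires $\f''$ bounded, which is implicit in the paper's standing smoothness hypothesis but is an assumption beyond mere Lipschitz continuity of $\f$; and the high-probability statement on $\norm{W}$ lives at the level of the outer randomness of the rows, so the Poincar\'e step applies conditionally on $W$ with the operator-norm event handled as a separate probability estimate rather than via a polynomial truncation.
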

\begin{proof}
    Since \(I_{p - 1} (w_i^{\otimes p}) = I_{p - 1}(w_i^{\otimes p - 1}) w_i\), we can write
    \begin{equation}
        \braket{D F_1, -DL^{-1} F_2} = \sum_{\substack{\text{odd } p \\ {\text{odd } q}}} c_{pq} \sum_{i, j} \braket{w_i, v_j} u_i z_j f_i^p g_j^q I_{p - 1}(w_i^{\otimes p - 1}) I_{q - 1}(w_j^{\otimes q - 1}).
    \end{equation}
    Using (\cite{nourdin2012normal}, Theorem 2.7.10), we obtain
    \begin{equation}
    \begin{aligned}
        & I_{p - 1}(w_i^{\otimes p - 1} I_{q - 1}(w_j^\otimes q - 1) \\
        & \quad = \sum_{r = 0}^{p \land q - 1} \braket{w_i, v_j}^r c_{rpq} I_{p + q - 2(r + 1)} (w_i^{\otimes p - 1 - r} \wt \otimes v_j^{\otimes q - 1 - r}) \\
        & \quad = \sum_{\substack{s = |p - q| \\ 2 \text{ divides } (s - |p - q|)}}^{p + q - 2} c'_{spq} \braket{w_i, v_j} ^{(p + q - 2 - s) / 2} I_s(w_i^{\otimes p - q + s} \wt \otimes v_j^{q - p + s}).
    \end{aligned}
    \end{equation}

Therefore, we obtain
\begin{equation}
    \braket{DF_1, -DL^{-1}F_2} = \sum_{s \geq 0} \sum_{\substack{\abs{p -q} \leq s \\ 2 \text{ divides } (s - \abs{p - q}) \\ p \land q \geq 1 + (s - \abs{p - q}) / 2}} \tilde c_{r,p,q}\sum_{i, j}\braket{w_i, v_j}^{(p+q - s)/2} u_i z_j f^p_i f^q_j 
    I_s(w_i^{\otimes (s + p - q) / 2} \wt \otimes v_j^{\otimes (s + q - p) / 2}).
\end{equation}
The term \(s = 0\) corresponds to \(\E F_1 F_2\). Since \(p\) and \(q\) must be odd in the non-zero terms of the sum, we obtain that \(s\) must be even. For \(a \coloneqq (p+q-s)/2\), the \(s\)-th multiplie integral \(I_s\) can be rewritten as follows:
\begin{equation}
    I_s\left(\sum_{a \geq 1} \sum_{i, j} \braket{w_i, v_j} u_i z_j T_{ij}^s\right),
\end{equation}
where \(T_{ij}^s\) is a \(s\)-dimensional tensor, consisting of a sum of inner products of \(w_i\) and \(v_j\), also containing combinatorial terms, and products of expectations of derivatives of \(f, g\). We can write
\begin{equation}
    \E \left(\braket{D F_1, -DL^{-1} F_2} - \E F_1 F_2\right)^2 = \sum_{s \geq 2} \E I_s\left(\sum_{a \geq 1} \sum_{i,j} \braket{w_i, v_j} u_i z_j T_{ij}^s\right)^2.
\end{equation}
Fix \(s \geq 2\) and observe that 
\begin{equation}
\label{eq:fixed_s_expansion}
    \E I_s \left(\sum_{a \geq 1} \sum_{i,j} \braket{w_i, v_j} u_i z_j T_{ij}^s\right)^2 = 
    \sum_{a,a' \geq 1} \sum_{\substack{i,j\\i', j'}} \braket{w_i, v_j}^a \braket{w_{i'}, v_{j'}}^{a'} u_i u_{i'} z_j z_{j'} \braket{T_{ij}^s, T_{i'j'}^s},
\end{equation}
and note that for some constant \(C > 0\) (depending on combinatorial terms, and products of expectations of derivatives of \(f\))
\(\braket{T_{ij}^s, T_{i' j'}^s}\) can be upper bounded by
\begin{equation}
    \braket{T_{ij}^s, T_{i'j'}^s} \leq C(\braket{w_i, w_{i'}} + \braket{w_i, v_{j'}} + \braket{v_j, w_{i'}} + \braket{v_j, v_{j'}})^s.
\end{equation}
We analyze each term of the summand in~\Cref{eq:fixed_s_expansion} depending on \(a, a', i, i', j, j'\).
Let \(N = \lvert\{i, i', j, j'\}\rvert\), the number of distinct indices among \(i, i', j, j'\).
Since entries of \(u\) and \(z\) are \(O(d^{-1/2})\), we get that in total the term \(u_i u_{i'} z_j z_{j'}\) contributes \(O(d^{-2})\).

\paragraph{Case \(N = 1\)}
Here, since there are only \(d\) such terms in total, we immediately obtain an \(O(d^{-1})\) upper bound.
\paragraph{Case \(N = 2\)}
There are \(O(d^2)\) such terms. It must be that either (i) \(i \neq j\) or (ii) \(i' \neq j'\) or (iii) both \(i = j\) and \(i' = j'\).
In the latter case, we obtain bound \(O(d^{-1})\) since \(s \geq 2\), and thus \(\braket{T_{ij}^s, T_{i'j'}^s} = O(d^{-1})\).
Otherwise, without loss of generality, assume that \(i = i' = j'\) and \(a = 1\). Here,~\Cref{lem:conc_inequalities} (i) implies that the summand is \(\prec 1/d\).
\paragraph{Case \(N = 3\)}
If \(i = j\), note that \(\braket{T_{ij}^s, T_{i'j'}^s} \prec d^{-1}\) and we need to show that \(\sum_{i \neq i' \neq j'} \braket{w_{i'}, v_{j'}}^a \prec d^2\).
When \(a \geq 2\) this follows from asymptotic orthogonality (\(\braket{w_{i'}, v_{j'}} \prec d^{-1/2}\) for \(i' \neq j'\), see~\Cref{corr rainbow}), and otherwise from~\Cref{lem:conc_inequalities} (i).
If \(i = i'\), we need to show that 
\begin{equation}
    \sum_{i j j'} \braket{w_i, v_j}^a \braket{w_{i'}, v_{j'}}^{a'} d^{-2} \prec \frac{1}{d}.
\end{equation}
When \(a \geq 2\) and \(a' \geq 2\), this follows from asymptotic orthogonality.
When \(a = 1\) and \(a' \geq 2\) (or vice versa), this follows from~\Cref{lem:conc_inequalities} (i).
Finally, when \(a = 1\) and \(a' = 1\), this follows from~\Cref{lem:conc_inequalities} (ii).
The remaining cases are identical to the covered ones.
\paragraph{Case \(N = 4\)}
When \(a \geq 2\) and \(a' \geq 2\), the result follows trivially.
When \(a = 1\) and \(a' \geq 2\), the result follows from~\Cref{lem:conc_inequalities} (i).
When \(a = a' = 1\), the result follows again from~\Cref{lem:conc_inequalities} (i) and noticing that 
\begin{equation}
    \frac{1}{d^2} \sum_{\substack{i \neq j \\ i' \neq j'}} \braket{w_i, v_j} \braket{w_{i'}, v_{j'}} = \left(\frac{1}{d} \sum_{i \neq j} \braket{w_i, v_j}\right)^2 \prec 1.
\end{equation}
\end{proof}
Next, using~\Cref{thm:multi-clt} for \(h(F_1, F_2) = f_2(F_1)g_2(F_2)\), we obtain that 
\begin{equation}
    \abs*{\E f_2(u^\top f_1(W^1 x)) g_2(z^\top g_1(V^1 x)) - \E f_2(G_1) g_2(G_2)} \prec d^{-1/2},
\end{equation}
where \((G_1, G_2)\) is a jointly Gaussian random vector:
    \begin{equation}
        (G_1, G_2) \sim \cN \left(0,  \begin{pmatrix} \Tr \left[u u^\top \Omega_1 \right] & \Tr \left[u z^\top \Phi_1 \right]\\ \Tr \left[u z^\top \Phi_1 \right] & \Tr \left[z z^\top \Psi_1 \right] \end{pmatrix} \right).
\end{equation}
Finally, using that \(\norm{\Omega_1 - \Omega_1^{\mathrm{lin}}}_F \prec 1\) (same for \(\Phi_1, \Psi_1\)) and perturbation analysis, we obtain that 
\begin{equation}
    \abs*{\E f_2(u^\top f_1(W^1 x)) g_2(z^\top g_1(V^1 x)) - \E f_2(N_1) g_2(N_2)} \prec d^{-1/2}.
\end{equation}

\subsection{Extending to \(L \geq 3\)}
A natural question is to ask whether the same technique can be applied for a deeper networks. One possible direction is to apply~\Cref{thm:multi-clt} for \(d\)-dimensional vector \((F_1, \ldots F_d) \coloneqq (u_1^\top f_1(W^1 x), \ldots, u_d^\top f_1(W^1 x))\), to approximate it by a Gaussian random vector \((N_1, \ldots, N_d)\). Then, for example, the diagonal entries of \(\Omega_3\) can be written as \(h(F_1, \ldots, F_d) = f_3(\sum_{k} u_k f_2(F_k))^2\). If it is possible to derive that \(f_3(\sum_{k} u_k f_2(F_k))^2 = f_3(\sum_k u_k f_2(N_k))\), then the problem is reduced to the 2 layered case, which can be treated as before.

However, it seems hard to apply~\Cref{thm:multi-clt} to the \(d\)-dimensional vector, since this requires a much more careful error analysis. Recall that we only applied~\Cref{thm:multi-clt} to \(2\)-dimensional vectors. We leave the extension to \(L, \wt L \geq 3\) as an interesting open question.
\section{Details on numerics}
\label{app: numerics}
\subsection{Details of Fig.\,\ref{fig:PL}}

\paragraph{Target} We consider a two-layer structured RF teacher, with feature map
\begin{align}
    \varphi_*(x)=\mathrm{tanh}\left(
    W_*x
    \right)
\end{align}
where the weight $W_*=Z_*\Tilde{C}_1^{\frac{1}{2}} \in\mathbb{R}^{d\times d}$ has covariance
\begin{align}
    \Tilde{C}_1=\mathrm{diag}(\{k^{-0.3}\}_{1\le k\le d}).
\end{align}

\paragraph{Student} We consider the task of learning this target with a four-layer RF student, with feature map
\begin{align}
    \varphi(x)=\tanh W_3(\tanh\left(
    W_2\tanh(W_1x)
    \right))
\end{align}
where, in order to introduce inter-layer and target/student weight correlations, we considered $W_2=W_1$, with
\begin{align}
    W_1=\sfrac{1}{2}Z_1\mathrm{diag}(\{k^{-\sfrac{\gamma}{2}}\}_{1\le k\le d})+\sfrac{1}{2}W_*,
\end{align}
for $\gamma\in\{0.0,0.2,0.5,0.8\}$. In other words, the covariance $C_1$ of $W_1,W_2$ is a sum of two power laws with decay $\gamma$ and $0-3$. Finally, in order to introduce another form of correlation, we chose
\begin{align}
    W_3=Z_3 C_3^{\sfrac{1}{2}}
\end{align}
where the covariance $C_3$ depends on the previous weights as
\begin{align}
C_3=(W_1W_1^\top +\sfrac{1}{2}\mathbb{I}_d)^{-1}.
\end{align}

\subsection{MNIST Experiments}
\label{app:real_data}
\paragraph{Data set}
We use the MNIST data set which we normalize by pixel-wise centering and global scaling to ensure unit variance. For each normalized image $x_i\in\R^{784}$ we define a label
\[y_i := \begin{cases} 1,&  \text{ if }x_i\text{ is an even digit,}  \\ -1,& \text{ if }x_i\text{ is an odd digit.}\end{cases}
\]
We split the data set into four parts:
\begin{itemize}
\item[$10\%$] Test data $I_\mathrm{test}$ 
\item[$25\%$] Training data for the Adam optimizer $I_\mathrm{Adam}$,
\item[$25\%$] Training data for regression $I_\mathrm{reg}$,
\item[$40\%$] Data for approximating the (empirical) population covariance $I_\mathrm{emp}$.
\end{itemize}

\paragraph{Neural network}
We then train a simple neural network of the form 
\begin{equation}
    x\mapsto \theta^\top\varphi(x), \quad \varphi(x):=\theta^\top\operatorname{relu}(W_2 \operatorname{relu}( W_1 x)), \quad W_1\in\R^{2352\times 784},\quad W_2\in\R^{2352\times 2352}, \quad \theta\in\R^{2352}
\end{equation}
using the Adam optimizer over $120$ epochs with a batch size of $128$ using only the $I_\mathrm{Adam}$ split. During training we save the \emph{feature maps} $\varphi_t$ at various time steps $t$ in order to study the training dynamics. 

\paragraph{Feature ridge regression} We then perform a ridge regression task using the features $\varphi_t(x_i)$ by minimizing
\begin{equation}
    \theta(t,\lambda,I):=\arg\min_\theta \Bigl(\frac{1}{\abs{I}}\sum_{i\in I} (y_i- \theta^\top\varphi_t(x_i))^2 + \lambda \norm{\theta}^2\Bigr)
\end{equation}
for various random subsets $I\subset I_\mathrm{reg}$, and $I_\mathrm{test}$, empirically estimate the \emph{generalization error} 
\begin{equation}
    \mathcal E_\mathrm{gen}(t,\lambda,I)^2 := \frac{1}{\abs{I_\mathrm{test}}} \sum_{i\in I_\mathrm{test}} (y_i - \theta(t,\lambda,I)^\top\varphi_t(x_i))^2.
\end{equation}

\paragraph{Deterministic equivalent}
In order to compare $\mathcal E_\mathrm{gen}(t)$ with the theoretical prediction from~\Cref{thm genRMT informal}, we need to determine the covariance of the features $\phi_t$ as well as the label-feature covariance and the label variance. To do so, we use the $I_\mathrm{emp}$ part of the data to empirically estimate 
\begin{equation}
    \Omega_t:=\frac{1}{\abs{I_\mathrm{emp}}}\sum_{i\in I_\mathrm{emp}} \varphi_t(x_i)\varphi_t(x_i)^\top\in\R^{2352\times2352},\quad \psi_t:=\frac{1}{\abs{I_\mathrm{emp}}}\sum_{i\in I_\mathrm{emp}} \varphi_t(x_i)y_i\in\R^{2352}, \quad \sigma^2 := \frac{1}{\abs{I_\mathrm{emp}}}\sum_{i\in I_\mathrm{emp}} y_i^2\in \R 
\end{equation}
and note that we expect this to be a reasonable approximation since $\abs{I_\mathrm{emp}}=27805\gg 2352$. Using these we have the formula 
    \begin{equation}\label{eq Egen rmt2}
            \mathcal E_\mathrm{gen}^\mathrm{rmt}(t,\lambda,n): = \frac{ \sigma^2 -  n\lambda m_t\psi_t^\top (M_t+\lambda M_t^2)\psi_t }{1-n(m_t\lambda)^2\Tr \Omega_t M_t\Omega_t M_t}
    \end{equation}
analogous to~\Cref{eq Egen rmt}, where $m_t,M_t=m_t(\lambda,n),M_t(\lambda,n)$ are the solution to 
\begin{equation}
    \frac{1}{m_t(\lambda,n)} = \lambda + \Tr \Omega_t (1 + n m_t(\lambda,n)\Omega_t )^{-1}, \quad M_t(\lambda,n) := (\lambda + \lambda n m(\lambda,n)\Omega_t )^{-1}.
\end{equation}

We observe in~\Cref{fig:real_emp} that $\mathcal E^\mathrm{rmt}_\mathrm{gen}$ is indeed an excellent approximation for $\mathcal E_\mathrm{gen}$ throughout the training and for various choices of regularization. In~\Cref{fig:real_det} we depict the interesting dynamics of the learning curves throughout the training process with a significant shift of the interpolation threshold to the left. 
\begin{figure}[ht]
    \centering
    \includegraphics[width=\linewidth]{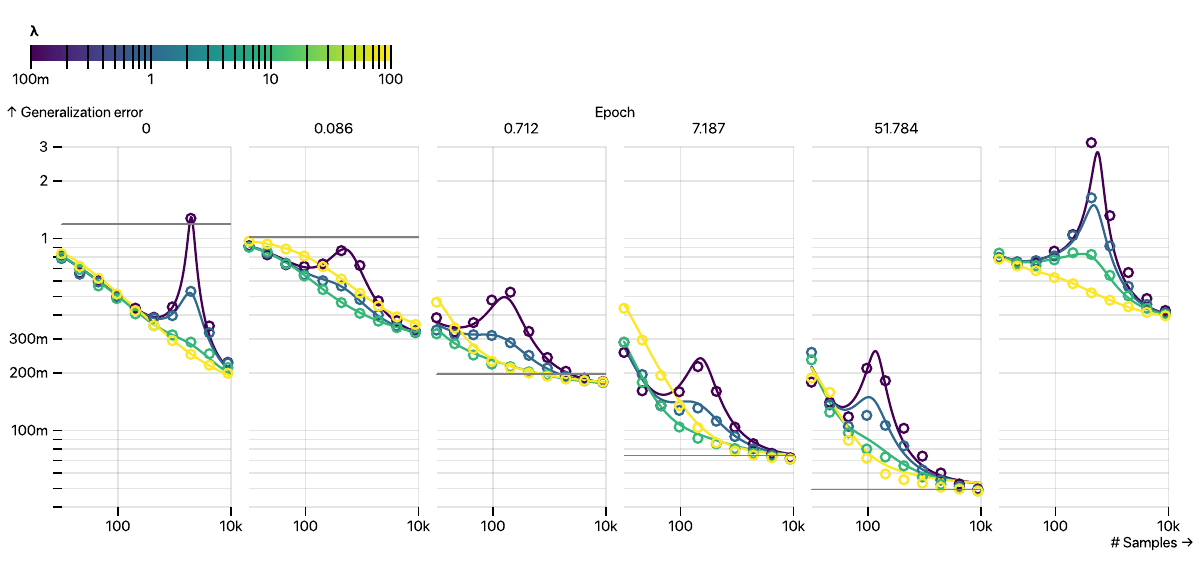}
    \caption{Plot of $\mathcal E^\mathrm{rmt}_\mathrm{gen}$, $\mathcal E_\mathrm{gen}$ for various regularization parameters $\lambda$ and time steps $t$ (in ``epoch.step'' format). The horizontal lines represent the generalization error of the neural network, the curves $\mathcal E^\mathrm{rmt}_\mathrm{gen}$ and the dots $\mathcal E_\mathrm{gen}$. The last pane contains a linear regression model for the sake of comparison. Interestingly, for this particular case already the random feature model outperforms linear regression.}
    \label{fig:real_emp}
\end{figure}
\begin{figure}[ht]
    \centering
    \includegraphics[width=\linewidth]{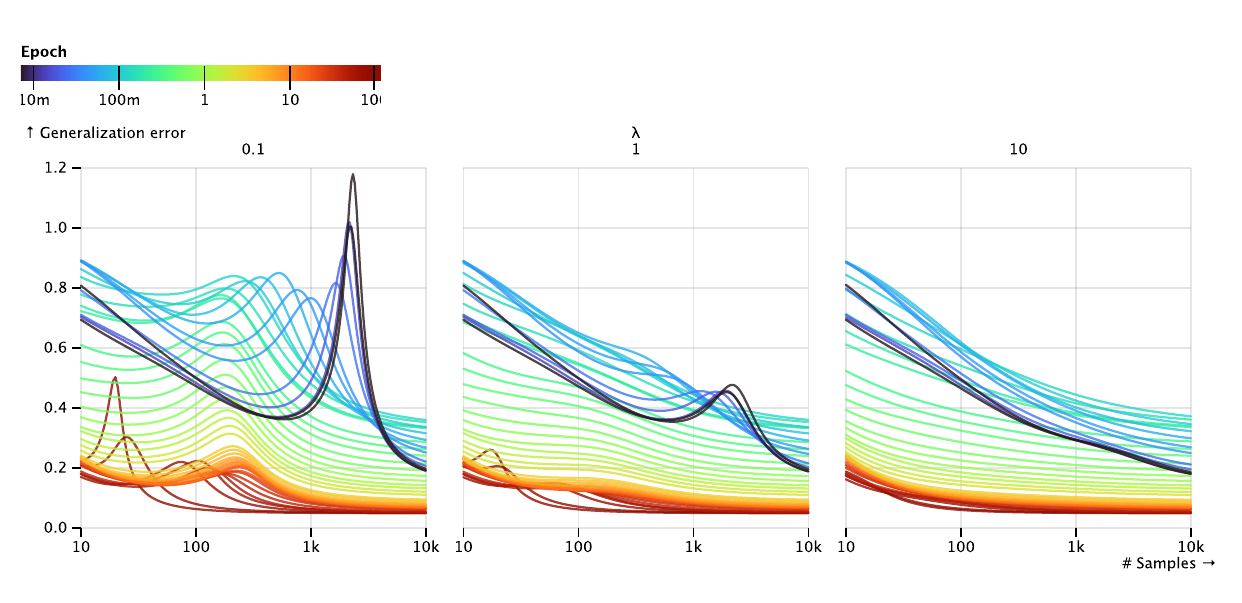}
    \caption{Dynamics of $\mathcal E^\mathrm{rmt}_\mathrm{gen}$ throughout the training}
    \label{fig:real_det}
\end{figure}

\paragraph{Optimal regularization}
So far we have focused on fixed regularization parameters. Using the deterministic equivalent we can also find the optimal regularization parameter \begin{equation}
    \lambda_\mathrm{opt}(t,n) := \arg\min_\lambda \mathcal E_\mathrm{gen}^\mathrm{rmt}(t,\lambda,n) 
\end{equation} 
for each sample complexity $n$ and time $t$ by simply one-dimensional minimization. In~\Cref{fig:real_opt} we show the corresponding results. Interestingly ridge regression initially performs worse than the random feature regression also at optimal regularization. Then in the initial phase of training the performance of feature regression deteriorates before improving way beyond the initialization performance.   
\begin{figure}
    \centering
    \includegraphics[width=.6\linewidth]{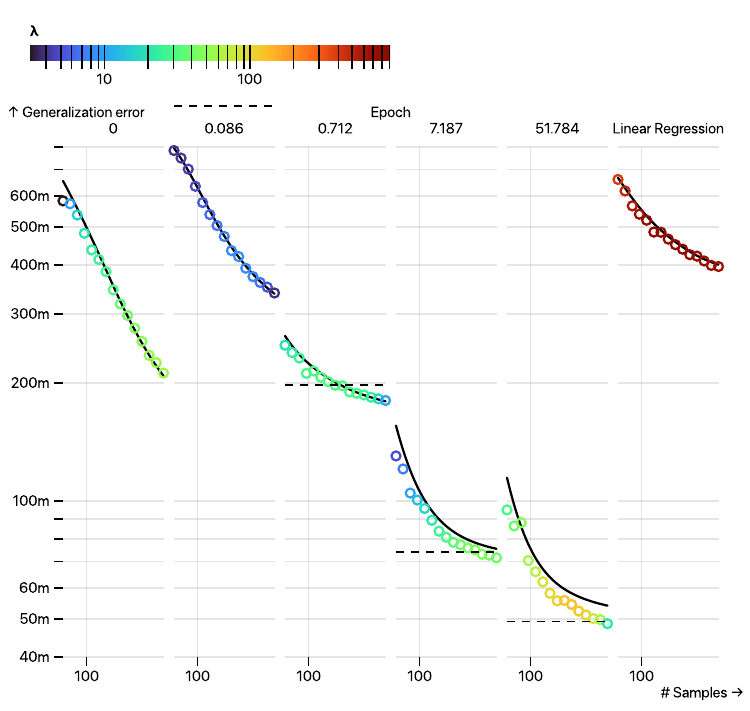}
    \includegraphics[width=.37\linewidth]{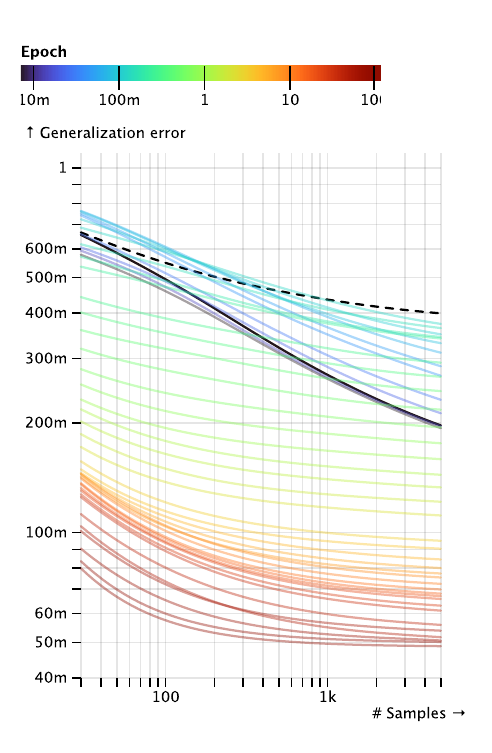}
    \caption{The left pane shows $\mathcal E_\mathrm{gen}$ and $\mathcal E_\mathrm{gen}^\mathrm{rmt}$ throughout the training process at \emph{optimal} regularization $\lambda_\mathrm{opt}$. The colour of the dots encodes the value of $\lambda_\mathrm{opt}$, while the dashed lines represent the generalization error of the neural network. The right pane shows the dynamics of $\mathcal E_\mathrm{gen}^\mathrm{rmt}$ throughout the training process, compared with linear regression.}
    \label{fig:real_opt}
\end{figure}

\subsection{Synthetic MNIST experiments}\label{synth MNIST}
We carried out similar experiments for synthetic data in order to empirically study the effect of population covariance linearization. 

\paragraph{Data}
We generate Gaussian random vectors of zero mean and variance matching the variance of the normalized MNIST images described above. The synthetic labels are generated by a one hidden layer random feature network 
\[ \varphi_\ast(x):= \theta_\ast \tanh(W_\ast x), \quad W_\ast\in\R^{800\times 784}, \theta_\ast\in\R^{800} \]
for fixed but random $W_\ast,\theta_\ast$. 

\paragraph{Neural network}
We again train a simple neural network of the form 
\begin{equation}
    x\mapsto \theta^\top\varphi(x), \quad \varphi(x):=\theta^\top\operatorname{relu}(W_2 \operatorname{relu}( W_1 x)), \quad W_1\in\R^{800\times 784},\quad W_2\in\R^{700\times 800}, \quad \theta\in\R^{700}
\end{equation}
using the Adam optimizer over $50$ epochs with a batch size of $128$ using $40\ 000$ samples. During training we save the \emph{feature maps} $\varphi_t$ at various time steps $t$ in order to study the training dynamics. 

\paragraph{Feature ridge regression} We perform feature ridge regression on the trained features $\varphi_t$ exactly as described above. The fact that the labels are now generated by a feature model now enables us to test the effect of population covariance linearization. In~\Cref{fig:art-lin} we observe that for random features the linearized deterministic equivalent is an excellent approximation for the empirically observed feature ridge regression error. However, during training the prediction deteriorates. We suspect that this effect is due to outlying eigevalues of the weight matrices which increasingly violate~\Cref{corr rainbow}.  

\begin{figure}
    \centering
    \includegraphics[width=\linewidth]{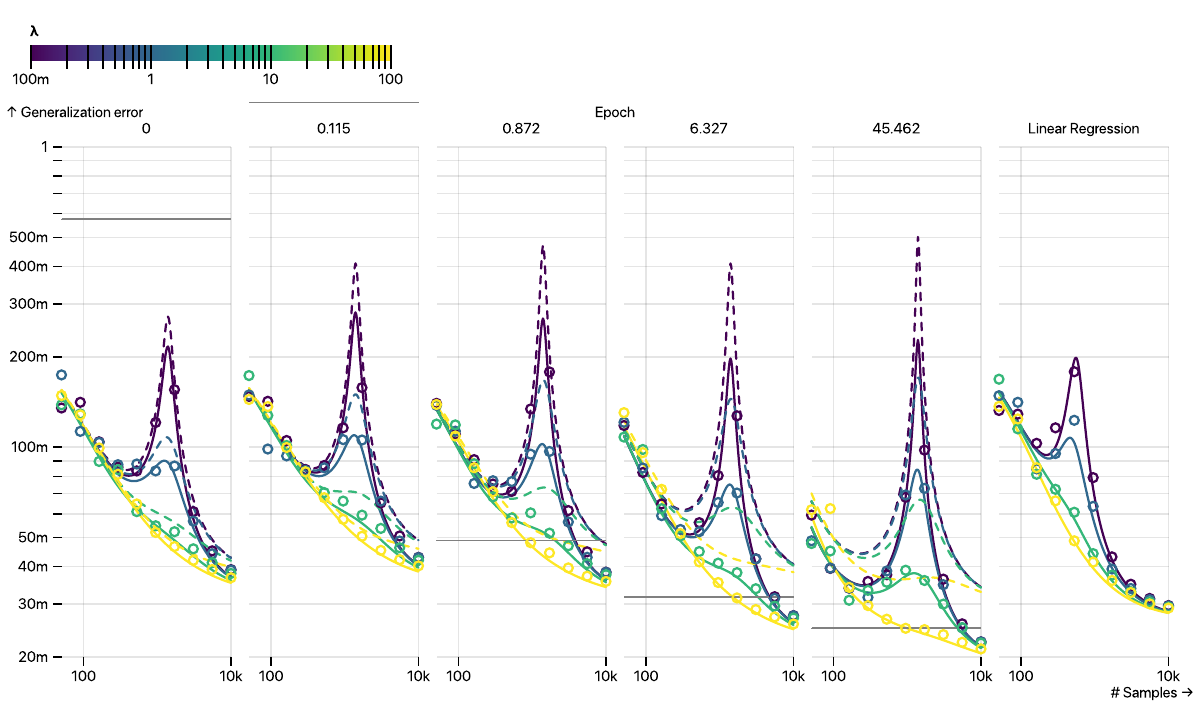}
    \caption{The solid and dashed lines represent $\mathcal E_\mathrm{gen}^\mathrm{rmt}$ using the empirical population covariances and the linearized population covariances, respectively. The dots represent the empirical $\mathcal E_\mathrm{gen}$ while the horizontal line show the test error of the neural network during training. The right-most pane shows linear regression for comparison.}
    \label{fig:art-lin}
\end{figure}


\end{document}